\def\ARXIV{1}
\newcounter{constnum}
\newcommand{\const}[1]{\refstepcounter{constnum}\label{#1}}
\title[PSGLA for Sampling and Learning]{Projected Stochastic Gradient Langevin Algorithms for Constrained Sampling and Non-Convex Learning}
\begin{document}

\maketitle

\begin{abstract}%
  Langevin algorithms are gradient descent methods with additive noise. They have been used for decades in Markov Chain Monte Carlo (MCMC) sampling, optimization, and learning. 
Their convergence properties for unconstrained non-convex optimization and learning problems have been studied widely in the last few years.
Other work has examined projected Langevin algorithms for sampling from log-concave distributions restricted to convex compact sets. 
For learning and optimization, log-concave distributions correspond to convex losses.
In this paper, we analyze the case of non-convex losses with compact convex constraint sets and IID external data variables. We term the resulting method the projected stochastic gradient Langevin algorithm (PSGLA).
We show the algorithm achieves a deviation of $O(T^{-1/4}(\log T)^{1/2})$ from its target distribution in $1$-Wasserstein distance.
For optimization and learning, we show that the algorithm achieves $\epsilon$-suboptimal solutions, on average, provided that it is run for a time that is polynomial in $\epsilon^{-1}$ and slightly super-exponential in the problem dimension. 

\end{abstract}

\begin{keywords}%
  Langevin Methods, Stochastic Gradient Algorithms, Non-Convex Learning, Non-Asymptotic Analysis, Markov Chain Monte Carlo Sampling
\end{keywords}

\section{Introduction}
Langevin dynamics originate in the study of statistical physics \cite{coffey2012langevin}, and have a long history of applications to Markov Chain Monte Carlo (MCMC) sampling \cite{roberts1996exponential}, non-convex optimization  \cite{gelfand1991recursive,borkar1999strong}, and machine learning \cite{welling2011bayesian}. Langevin algorithms amount to gradient descent augmented with additive Gaussian noise. 
This additive noise enables the algorithms to escape saddles and local minima.
For optimization and learning, this enables the algorithms to find near optimal solutions even when the losses are non-convex. For sampling, Langevin algorithms give a simple approach to produce samples that converge to target distributions which are not log-concave.

\paragraph{Related Work.}
A large amount of progress on the non-asymptotic analysis of Langevin algorithms has been reported in recent years. This work has two main streams: 1) unconstrained non-convex problems and 2) constrained convex problems. These works will be reviewed below.

The bulk of the  recent work on non-asymptotic analysis of Langevin algorithms has examined  unconstrained problems 
\cite{raginsky2017non,majka2020nonasymptotic,fehrman2020convergence,chen2020stationary,erdogdu2018global,durmus2017nonasymptotic,chau2019stochastic,xu2018global,cheng2018sharp,ma2019sampling}. The basic algorithm in the unconstrained case has the form:
$$
\bx_{k+1} = \bx_k-\eta \nabla_x f(\bx_k,\bz_k) + \sqrt{\frac{2\eta}{\beta}} \bw_k,
$$
where $\bx_k$ is the decision variable, $\bz_k$ are external random variables, $\bw_k$ is Gaussian noise, and $\eta$ and $\beta$  are parameters. In a learning context, $\bz_k$  correspond to data, $\bx_k$ are parameters of a  model, and $f(x,z)$ is a loss function that describes how well the model parameters fit the data. With no Gaussian noise, $\bw_k$, this algorithm reduces to stochastic gradient descent. 

A breakthrough was achieved in \cite{raginsky2017non}, which gave non-asymptotic bounds in the case that $f(x,z)$ is non-convex in $x$ and $\bz_k$ are independent identically distributed (IID). A wide number of improvements and variations on these results have since been obtained in works such as
\cite{majka2020nonasymptotic,fehrman2020convergence,chen2020stationary,erdogdu2018global,durmus2017nonasymptotic,chau2019stochastic,xu2018global,cheng2018sharp,ma2019sampling}. In particular, \cite{chau2019stochastic} achieves tighter performance guarantees and extends to the case that $\bz_k$ is a mixing process. 

For problems with constraints, most existing work focuses convex losses over compact convex constraint sets with no external variables $\bz_k$. Most closely related to our work is that of \cite{bubeck2015finite,bubeck2018sampling} which augments the Langevin algorithm with a projection onto the constraint set.  Proximal-type algorithms were examined \cite{brosse2017sampling}. Variations on mirror descent were examined in \cite{ahn2020efficient,hsieh2018mirrored,zhang2020wasserstein,krichene2017acceleration}. 

Recent work of \cite{wang2020fast} examines Langevin dynamics on Riemannian manifolds. In this case, the losses may be non-convex, but still there are no external variables, $\bz_k$.  It utilizes results from diffusion theory to give convergence with respect to Kullback-Liebler (KL) divergence. Many of the ideas in that paper could likely be translated to the current setting. However, such KL divergence bounds become degenerate if the algorithm is initialized as a constant value, e.g. $\bx_0=0$. 
In contrast, our work focuses on bounds in the $1$-Wasserstien distance, which gives well-defined bounds as long as the initialization is feasible for the constraints.


\paragraph{Contributions.} This paper gives non-asymptotic convergence bounds for Langevin algorithms for problems that are constrained to a compact convex set. In particular, we examine a generalized version of the algorithm examined in \cite{bubeck2018sampling,bubeck2015finite}. As discussed above, the existing works on constrained Langevin methods (aside from the Riemannian manifold results of \cite{wang2020fast}) focus on convex loss functions, and none consider external random variables. This paper examines the case of non-convex losses with IID external randomness. For the purpose of sampling, it is shown that after $T$ steps, the error from the target in the $1$-Wasserstein is of  $O(T^{-1/4}(\log T)^{1/2})$. For optimization and learning, this bound is used to show that the algorithm can achieve a suboptimality of $\epsilon$ in a number of steps that is polynomial in $\epsilon$ and slightly superexponential in the dimension of $\bx_k$. To derive the bounds, a novel result on contractions for reflected stochastic differential equations is derived. 

\section{Setup}

\subsection{Notation and Terminology.}
$\bbR$ denotes the set of real numbers while $\bbN$ denotes the set of non-negative integers. The Euclidean norm over $\bbR^n$ is denoted by $\|\cdot \|$. 

Random variables will be denoted in bold. If $\bx$ is a random variable, then $\bbE[\bx]$ denotes its expected value and $\cL(\bx)$ denotes its law. IID stands for independent, identically distributed.
The indicator function is denoted by $\indic$. If $P$ and $Q$ are two probability measures over $\bbR^n$, then the $1$-Wasserstein distance between them with respect the Euclidean norm   is denoted by $W_1(P,Q)$. 

Throughout the paper, $\cK$ will denote a compact convex subset of $\bbR^n$ of diameter  $D$ such that a ball of radius $r>0$ around the origin is contained in $\cK$. The boundary of $\cK$ is denoted by $\partial \cK$. The normal cone of $\cK$ at a point $x$ is denoted by $N_{\cK}(x)$. The convex projection onto $\cK$ is denoted by $\Pi_{\cK}$. 
\subsection{The Project Stochastic Gradient Langevin Algorithm}

For integers $k$ let  $\hat \bw_k \sim\cN(0,I)$ be IID Gaussian random variables and let $\bz_k$ be IID random variables whose properties will be described later. Assume that $\bz_i$ and $\hat \bw_j$ are independent for all $i,j\in\bbN$.

Assume that the initial value of $\bx_0\in\cK$ is independent of $\bz_i$ and $\hat \bw_j$. Then
the projected stochastic gradient Langevin algorithm has the form:
\begin{equation}
  \label{eq:projectedLangevin}
\bx_{k+1} = \Pi_{\cK}\left(\bx_k -\eta \nabla_x f(\bx_k,\bz_k) +
  \sqrt{\frac{2\eta}{\beta}} \hat\bw_{k}\right),
\end{equation}
with $k$ an integer. Here $\eta>0$ is the step size parameter and $\beta >0$ is a noise parameter.

Let $\bar f(x) = \bbE[f(x,\bz)]$, where the expectation is over $\bz$, which has the same distribution as $\bz_k$. We will assume that $\nabla_x f(x,\bz)-\nabla_x \bar f(x)$ are uniformly sub-Gaussian for each $x\in\mathbb{R}^n$. That is, there is a number $\sigma >0$ such that for all $\alpha\in \mathbb{R}^n$, the following bound holds:
\begin{equation}
  \label{eq:subgaussian}
  \bbE\left[
    \exp\left(
      \alpha^\top \left (
        \nabla_x f(x,\bz)-\nabla_x \bar f(x)
        \right)
      \right)
  \right] \le e^{\sigma^2 \|\alpha\|^2/2}.
\end{equation}
The uniform sub-Gaussian property holds under the following conditions:
\begin{itemize}
\item {\bf Gradient Noise:} $\nabla_x f(x,\bz) = \nabla_x \bar f(x) + \bz$ with $\bz$ sub-Gaussian.
\item {\bf Lipschitz Gradients and Strongly Log-Concave $\bz$:} $\nabla_x f(x,z)$ is Lipschitz in $z$ and $\bz$ has a density of the form $e^{-U(z)}$ with $\nabla^2 U(z) \succeq \kappa I$ for all $z$. Here $\kappa >0$ and the inequality is with respect to the positive semidefinite partial order. (See Theorem 5.2.15  of \cite{vershynin2018high}.) 
\item {\bf Convex Gradients and Bounded $\bz$:} Each component $\frac{\partial f(x,z)}{\partial x_i}$ is convex in $z$ and $\bz$ is bounded with independent components. (See Theorem 3.24 of \cite{wainwright2019high}.)
  \end{itemize}

  For learning, the last two conditions are the most useful, since they give general classes of losses and variables for which the method can be applied. 
  In particular, the second case applies to many common scenarios. It includes Gaussian $\bz$ as a special case, and it can be applied to neural networks with smooth activation functions. 
  A variety of more specialized cases in which the sub-Gaussian condition holds are presented in Chapter 5 of \cite{vershynin2018high}.
  Future work will relax the uniform sub-Gaussian assumption and the requirement of IID $\bz_k$.

We assume  that for each $z$, $\nabla_x f(x,z)$ is $\ell$-Lipschitz in $x$, i.e. $\|\nabla_x f(x_1,z)-\nabla_x f(x_2,z)\| \le \ell \|x_1-x_2\|$. The mean function, $\bar f$, is assumed to be $u$-smooth, so that $\|\nabla_x \bar f(x)\|\le u$ for all $x\in\cK$. 
The assumptions on $\bar f$ imply that we can have $u\le \|\nabla_x \bar f(0)\| + \ell D$ and that $\bar f$ is $u$-Lipschitz.

In \cite{bubeck2018sampling}, the case with $\bar f$ is convex and no $\bz_k$ variables is
studied. 
It is shown that by choosing the step size, $\eta$, appropriately, the
law of $\bx_k$  is given approximately given by $\pi_{\beta \bar f}$, which is
defined by
\begin{equation}
  \label{eq:gibbs}
  \pi_{\beta \bar f}(A) = \frac{\int_A e^{-\beta \bar f(x)} dx}{\int_K e^{-\beta \bar f(x)} dx}.
\end{equation}

In this paper, we will bound the convergence of
\eqref{eq:projectedLangevin} to \eqref{eq:gibbs} in the case of
non-convex $f$ with external random variables $\bz_k$.  

\section{Main Results}

\subsection{Convergence of the Law of the Iterates}

The following is the main result of the paper. It is proved in Subsection~\ref{ss:mainProof}. 

\begin{theorem}
  \label{thm:nonconvexLangevin}
  \const{globalContract}
  \const{globalConst}
  Assume that $\eta \le 1/2$. 
  There  are positive constants $a$, $c_{\ref{globalContract}}$ and $c_{\ref{globalConst}}$ such that for all integers $k\ge 4$, the following bound holds:
  \begin{equation*}
    W_1(\cL(\bx_k),\pi_{\beta \bar f}) \le c_{\ref{globalContract}}e^{-\eta a k} + c_{\ref{globalConst}} (\eta\log k)^{1/4}
  \end{equation*}
  In particular, if $\eta = \frac{\log T}{4aT}$ and $T\ge 4$, then 
  \begin{equation*}
    \label{eq:optEta}
  W_1(\cL(\bx_T),\pi_{\beta \bar f}) \le \left(c_{\ref{globalContract}}+\frac{c_{\ref{globalConst}}}{(4a)^{1/4}}\right) T^{-1/4}(\log T)^{1/2}.
  \end{equation*}
\end{theorem}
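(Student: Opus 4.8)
The plan is to control $W_1(\cL(\bx_k),\pi_{\beta\bar f})$ by comparing the algorithm \eqref{eq:projectedLangevin} with the continuous-time \emph{reflected} Langevin diffusion on $\cK$,
\begin{equation*}
 dX_t = -\nabla\bar f(X_t)\,dt + \sqrt{2/\beta}\,dB_t + d\phi_t,\qquad X_t\in\cK,
\end{equation*}
where $B_t$ is a standard Brownian motion and $d\phi_t\in -N_\cK(X_t)$ is a reflection term, nonzero only when $X_t\in\partial\cK$. This diffusion has $\pi_{\beta\bar f}$ as its unique invariant law. The proof then has three parts: (i) an exponential $W_1$-contraction for this reflected SDE; (ii) an error bound between $\bx_k$ and $X_{\eta k}$ that accounts for both time-discretization and the use of the stochastic gradient; and (iii) assembling the two estimates and specializing $\eta$.

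Part (i) is the step I expect to be the crux — this is the novel reflected-SDE contraction the introduction advertises — and the difficulty is that $\bar f$ is non-convex, so a synchronous coupling of two copies of the diffusion is not contractive. I would instead run a reflection coupling in the spirit of Eberle: drive two copies $X_t,Y_t$ by Brownian motions coupled through a reflection across the hyperplane orthogonal to $X_t-Y_t$, and monitor a suitably concave function $\rho(\|X_t-Y_t\|)$ of the Euclidean distance. The drift contributes a term controlled by the $\ell$-Lipschitz constant, the reflected noise produces a strictly negative drift on $\rho$ away from the diagonal, and — the feature special to the constrained problem — convexity of $\cK$ makes the two boundary-reflection terms jointly nonexpansive for $\|X_t-Y_t\|$, so they cannot overcome the contraction. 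A suitable choice of $\rho$ then gives $W_1(\cL(X_t),\cL(Y_t))\le c_0 e^{-at}W_1(\cL(X_0),\cL(Y_0))$ for constants $c_0,a>0$ (the $c_0$ absorbing the comparison between the $\rho$-cost and the Euclidean $W_1$). Since every solution stays in the diameter-$D$ set $\cK$, taking $Y_0\sim\pi_{\beta\bar f}$ yields $W_1(\cL(X_t),\pi_{\beta\bar f})\le c_0 D\,e^{-at}$ for any feasible $X_0$; with $X_0=\bx_0$ and $t=\eta k$ this is the first term of the claimed bound, so $c_{\ref{globalContract}}=c_0D$.

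For part (ii) I would interpolate the algorithm in continuous time: on $[\eta k,\eta(k+1)]$ run the reflected SDE with the drift frozen at $-\nabla_x f(\bx_k,\bz_k)$, started at $\bx_k$, driven by a Brownian motion whose increment over the interval is $\sqrt{\eta}\,\hat\bw_k$. Comparing the genuine diffusion $X_t$, this interpolation, and the iterate \eqref{eq:projectedLangevin} isolates three error sources: (a) freezing the drift, which is $O(\eta^{3/2})$ per step by the $\ell$-Lipschitz bound together with the a priori inclusion in $\cK$; (b) replacing $\nabla\bar f(\bx_k)$ by $\nabla_x f(\bx_k,\bz_k)$, a conditionally mean-zero perturbation whose magnitude is governed by the uniform sub-Gaussian hypothesis \eqref{eq:subgaussian} (giving factors in $\sigma$ and $n$, and accumulating only in an $L^2$ sense since successive perturbations are martingale differences); and (c) the mismatch between a single Euclidean projection and the continuous Skorokhod reflection, which is the dominant effect and, as for Euler schemes of reflected SDEs, accumulates only to $O((\eta\log k)^{1/4})$ over the $k=T/\eta$ subintervals, the logarithm entering through the modulus of continuity of Brownian motion. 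The contraction from part (i) is what prevents the per-step errors in (a) and (b) from adding up linearly in $k$, leaving them of lower order than (c), and it likewise confines the reflection error to the $\eta^{1/4}$ scale; together these give $W_1(\cL(\bx_k),\cL(X_{\eta k}))\le c_{\ref{globalConst}}(\eta\log k)^{1/4}$.

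Part (iii) is bookkeeping. The triangle inequality $W_1(\cL(\bx_k),\pi_{\beta\bar f})\le W_1(\cL(\bx_k),\cL(X_{\eta k}))+W_1(\cL(X_{\eta k}),\pi_{\beta\bar f})$ combines parts (i) and (ii) into the first display. The second display is then immediate: substituting $\eta=\frac{\log T}{4aT}$ and $k=T$ gives $c_{\ref{globalContract}}e^{-\eta aT}=c_{\ref{globalContract}}T^{-1/4}$ and $c_{\ref{globalConst}}(\eta\log T)^{1/4}=\frac{c_{\ref{globalConst}}}{(4a)^{1/4}}T^{-1/4}(\log T)^{1/2}$, and since $\log T\ge1$ for $T\ge4$ the first of these is at most $c_{\ref{globalContract}}T^{-1/4}(\log T)^{1/2}$, which produces the stated constant.
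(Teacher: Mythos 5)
Your proposal follows essentially the same route as the paper: an Eberle-style reflection coupling giving exponential $W_1$-contraction of the reflected Langevin diffusion to $\pi_{\beta\bar f}$, a discretization/stochastic-gradient/Skorokhod-projection error of order $(\eta\log k)^{1/4}$ whose linear-in-$k$ accumulation is suppressed by reusing that contraction (the paper's ``switching'' lemma), and a triangle inequality plus the same substitution $\eta=\frac{\log T}{4aT}$. The only differences are organizational --- you rescale time so the diffusion carries no $\eta$ and you merge into one step what the paper splits into two comparisons (algorithm vs.\ stochastic-drift RSDE, and stochastic-drift RSDE vs.\ averaged RSDE, via the intermediate processes $\bx^D$ and $\bx^B$) --- so this is a faithful sketch of the paper's argument rather than a distinct proof.
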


The constants depend on  the dimension of $\bx_k$, $n$, the noise parameter, $\beta$, the Lipschitz constant, $\ell$, the diameter, $D$, the size of the inscribed ball $r$, and the smoothness constant, $u$. The specific form of the constants will be derived in the proof. For applications, it is useful to know how the constants depend on the dimension, $n$, and the noise parameter, $\beta$. 
The result below indicates that the algorithm exhibits two distinct regimes in which convergence is fast and slow, respectively. It is proved in Appendix~\ref{sec:constants}.

\begin{proposition}
  \label{prop:constants}
  \const{smallA}
  \const{largeBeta}
The constants $c_{\ref{globalContract}}$ and $c_{\ref{globalConst}}$ grow linearly with $n$. 
If $D^2\ell \beta <8$, then we can set $a=\frac{4}{D^2\beta}\ge \frac{\ell}{2}$, while $c_{\ref{globalContract}}$ and $c_{\ref{globalConst}}$ grow polynomially with respect to $\left(1-\frac{D^2\ell \beta}{8}\right)^{-2}$ and $\beta^{-1/4}$. 
In general, we have a positive constant $c_{\ref{smallA}}$ and a monotonically increasing polynomial $p$ (independent of $\eta$ and $\beta$) such that for all $\beta>0$, the following bounds hold:
\begin{align*}
  a&\ge c_{\ref{smallA}} \beta \exp\left(-\frac{D^2\ell \beta}{4}\right) \\
  \max\left\{c_{\ref{globalContract}},c_{\ref{globalConst}}\right\} &\le p(\beta^{-1/4})
    \exp\left(\frac{3 D^2\ell \beta}{4}\right).
\end{align*}
\end{proposition}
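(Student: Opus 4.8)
The plan is to reopen the proof of Theorem~\ref{thm:nonconvexLangevin} and track, parameter by parameter, how each of $a$, $c_{\ref{globalContract}}$, and $c_{\ref{globalConst}}$ depends on the dimension $n$ and the noise level $\beta$, holding $\ell$, $D$, $r$, $u$ fixed. That proof is assembled from three ingredients, each of which contributes in an identifiable way: (i) a geometric contraction rate $a$ for the reflected diffusion with drift $-\nabla\bar f$ and diffusion coefficient $\sqrt{2/\beta}$ on $\cK$, obtained by a reflection-coupling argument using a concave distortion function $g$ on $[0,D]$; (ii) a one-step bias estimate comparing the PSGLA iterate \eqref{eq:projectedLangevin} to the reflected diffusion; and (iii) control of the stochastic-gradient fluctuation $\nabla_x f(x,\bz)-\nabla_x\bar f(x)$ through the uniform sub-Gaussian bound \eqref{eq:subgaussian}. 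I would extract the $(\beta,n)$-dependence of each and then propagate it through the telescoping/coupling inequality that yields Theorem~\ref{thm:nonconvexLangevin}.

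For the rate $a$: because $\nabla_x f$ is $\ell$-Lipschitz, $\langle\nabla\bar f(x)-\nabla\bar f(y),x-y\rangle\ge-\ell\|x-y\|^2$, and reflection at the boundary of the convex set $\cK$ is non-expansive, so $\ell$ is the only source of expansiveness in the coupled difference process. The distortion function then solves an ODE whose solution is built from the factors $\exp(\pm\tfrac{\beta\ell}{8}s^2)$ for $s\in[0,D]$, and the rate has the explicit form $a=(\mathrm{const})\big/\big(\beta\int_0^D e^{\beta\ell s^2/8}\big(\int_0^s e^{-\beta\ell t^2/8}\,dt\big)\,ds\big)$ (the reflection term only improves it). Estimating the integral in two regimes reproduces the dichotomy in the statement. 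When $D^2\ell\beta<8$ the exponents $\tfrac{\beta\ell s^2}{8}$ all lie in $[0,1)$, so the exponentials stay within a bounded factor of $1$, the integral is of order $D^2$, and one may take $a=\tfrac{4}{\beta D^2}\ge\tfrac{\ell}{2}$. In general $\int_0^D e^{\beta\ell s^2/8}\,ds\le De^{\beta\ell D^2/8}$ while the inner integral is at least a fixed multiple of $\min\{D,(\beta\ell)^{-1/2}\}$, so $a\ge c_{\ref{smallA}}\,\beta\,e^{-\beta\ell D^2/4}$ (the exponent $D^2\ell\beta/4$, rather than the natural $D^2\ell\beta/8$, being deliberately loose so as to absorb the polynomial $\min\{\cdot\}$ factor into a clean bound). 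The same $g$ reappears in the coupling inequality through ratios such as $\sup_{[0,D]}g\big/\inf_{[0,D]}g'$ and $g(D)/g(\rho)$ at the inner radius $\rho$, together with the $1/a$ that multiplies the accumulated bias; each of these is controlled by $e^{\beta\ell D^2/8}$ (and, in the regime $D^2\ell\beta<8$, by a polynomial in $(1-\tfrac{D^2\ell\beta}{8})^{-1}$ via $e^x\le(1-x)^{-1}$). A careful count of how many times these conversions occur --- between the $g$-modified metric and the Euclidean norm on each side of the coupling, and in passing from the contraction of $g(\|\cdot\|)$ to that of $\|\cdot\|$ --- gives total exponent $6$, i.e.\ the factor $e^{3D^2\ell\beta/4}$, and the displayed power $(1-\tfrac{D^2\ell\beta}{8})^{-2}$ in the low-$\beta$ regime.

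The discretization bias and the gradient fluctuation are where $n$ enters, and only through second moments: $\bbE\|\hat\bw_k\|^2=n$; the bound \eqref{eq:subgaussian} gives $\bbE\|\nabla_x f(x,\bz)-\nabla_x\bar f(x)\|^2\le\sigma^2 n$ and hence $\bbE\|\nabla_x f(x,\bz)\|^2\le2u^2+2\sigma^2 n$; and a higher moment of the Gaussian increment that is polynomial in $n$. Carrying these through the one-step error and then raising the pieces to the fractional powers that survive in the final bound --- $1/2$ for the variance terms, $1/4$ for the step-size term --- the exponent of $n$ is at most $1$ once $\sqrt n\le n$ is used for $n\ge1$, so $c_{\ref{globalContract}}$ and $c_{\ref{globalConst}}$ are linear in $n$. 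Every $1/\beta$ accompanying the noise --- inside the $\eta/\beta$ appearing under the $(\,\cdot\,)^{1/4}$ and in the $1/a$ factors --- is collected into a monotone polynomial $p$ in $\beta^{-1/4}$, which in the regime $D^2\ell\beta<8$ further specializes to a polynomial in $\beta^{-1/4}$ and $(1-\tfrac{D^2\ell\beta}{8})^{-1}$. Multiplying these polynomial-in-$(\beta^{-1/4},n)$ prefactors by the $e^{3D^2\ell\beta/4}$ from the contraction analysis yields the two displayed bounds.

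I expect the main obstacle to be bookkeeping rather than anything conceptual. Making the exponent of $e^{D^2\ell\beta/8}$ in the prefactor come out to be exactly $6$ --- and, in the low-$\beta$ regime, obtaining a genuinely polynomial rather than exponential dependence on $(1-\tfrac{D^2\ell\beta}{8})^{-1}$ --- requires choosing the distortion function $g$ economically and being careful about every place where one passes between the modified and Euclidean metrics; a loose accounting inflates the exponent or produces an $e^{c/(1-D^2\ell\beta/8)}$ blow-up. Keeping the $n$-dependence linear likewise requires checking that no term in the one-step error genuinely produces $n^{3/2}$ (which would force taking a square root of an $n^3$ Gaussian moment); if a stray $\sqrt n$ survives it must be absorbed using $n\ge1$. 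Finally, one should simply note that the two $\beta$-regimes do not conflict: when $D^2\ell\beta<8$ the estimate $a=\tfrac{4}{\beta D^2}$ is the sharp one and blows up as $\beta\to0$, whereas $a\ge c_{\ref{smallA}}\beta e^{-D^2\ell\beta/4}$ is merely the all-$\beta$ fallback, and the proposition records both.
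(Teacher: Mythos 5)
Your plan reaches the right conclusions, but it diverges from what the paper actually does in a way that matters, and the decisive quantitative steps are asserted rather than derived. The paper's proof of this proposition is essentially bookkeeping: every constant in Theorem~\ref{thm:nonconvexLangevin} is already written out explicitly in Lemmas~\ref{lem:tanakaMean} and \ref{lem:dudley} and in the proofs of Lemmas~\ref{lem:AtoC} and \ref{lem:CtoM}, so the only real work is Lemma~\ref{lem:contractionConstants}, which bounds the harmonic-oscillator formulas for $a$ and $c_{\ref{W1mult}}$ coming out of Proposition~\ref{prop:W1contract}. Your contraction rate, by contrast, is the Eberle integral $a\propto\bigl(\beta\int_0^D e^{\beta\ell s^2/8}\Phi(s)\,ds\bigr)^{-1}$ with $\Phi(s)=\int_0^s e^{-\beta\ell t^2/8}dt$ --- but the paper deliberately avoids that construction: it bounds the drift of $\|\brho_t\|$ by the constant $\eta\ell D$ rather than by $\eta\ell\|\brho_t\|$, which turns the distortion ODE into a constant-coefficient oscillator with $a=\frac{D^2\ell^2\beta}{16}\bigl(1-\tanh^2(\frac{D^2\ell\beta}{8})\bigr)$ in general and $a=\frac{4}{D^2\beta}$ when $D^2\ell\beta<8$. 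Since the proposition is a statement about those particular constants, bounding a different $a$ proves a parallel statement, not this one; to make your route self-contained you would have to re-prove Proposition~\ref{prop:W1contract} with the Eberle distortion. Note also that your inner-integral estimate points the wrong way: a lower bound on $a$ needs an upper bound on the denominator, i.e.\ $\Phi(s)\le\min\{s,C(\beta\ell)^{-1/2}\}$, not ``at least.''

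The two genuinely load-bearing estimates are exactly the ones you wave at. First, the exponent $\frac{3D^2\ell\beta}{4}$: in the paper it is the product of $c_{\ref{W1mult}}\le\frac{4}{D^2\ell\beta}e^{D^2\ell\beta/2}$ (which rests on the nontrivial denominator bound $\cosh y-\sinh(y)/\tanh(x)\ge 2xe^{-5x}$ with $x=\frac{D^2\ell\beta}{8}$, $y=x\tanh x$) and $(1-e^{-a/2})^{-1}\lesssim \beta^{-1}e^{D^2\ell\beta/4}$, the latter requiring the elementary inequality $\frac{1}{1-e^{-y}}\le\max\{2/y,(1-e^{-1})^{-1}\}$ which your sketch never supplies; ``a careful count gives total exponent $6$'' is the conclusion, not an argument, and in your own Eberle framework the metric-conversion factor $h'(D)^{-1}$ would actually contribute a different power. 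Second, the linearity in $n$ does not come from second moments. The constants contain $\bbE[\sup_{s\in[0,k]}\|\bw_s-\bw_{\lfloor s\rfloor}\|]$ and $\bbE[\max_i\|\bg_i\|]$, which the paper bounds by $O(n\sqrt{\log k})$ via coordinate-wise soft-max arguments and Doob's inequality --- each supremum contributes a full factor of $n$ from summing over coordinates, and linearity survives only because these factors sit inside $\sqrt{O(n)\cdot O(n)}$ Cauchy--Schwarz products. A proof that only tracks $\bbE\|\hat\bw_k\|^2=n$ and $\bbE\|\nabla_xf-\nabla_x\bar f\|^2\le\sigma^2n$ cannot control those suprema, so the $n$-accounting as written has a hole even though the stated conclusion (linear growth) is correct.
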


\subsection{Application to Optimization and Learning}

The following result shows that the $\bx_k$ can be made arbitrarily near optimal, but the required time may be slightly super-exponential with respect to problem dimension, $n$. 
It is proved in Appendix~\ref{sec:nearOpt}.
\begin{proposition}
  \const{subOpt}
  \label{prop:suboptimality}
  Assume that $\eta\le 1/2$.
  There is a positive constant, $c_{\ref{subOpt}}$ such that for all $k\ge 4$, the following bound holds:
  \begin{equation*}
    \bbE[\bar f(\bx_k)] \le \min_{x\in \cK} \bar f(x) + uW_1(\cL(\bx_k),\pi_{\beta \bar f}) + \frac{n\log(c_{\ref{subOpt}}\max\{1,\beta\})}{\beta}.
  \end{equation*}
  In particular, given any $\rho > 4$ and any $\zeta > 1$, there are choices of $\eta$, $\beta$, and $T$, along with positive numbers $m(\rho,\zeta)$ and $\alpha(\rho,\zeta)$, such that for any suboptimality level, $\epsilon>0$, the following implication holds:
  \begin{equation*}
    T\ge \frac{m(\rho,\zeta)}{\epsilon^\rho} \exp\left(\alpha(\rho,\zeta) n^\zeta\right) \implies \bbE[\bar f(\bx_T)]\le \min_{x\in \cK} \bar f(x) + \epsilon.
  \end{equation*}
\end{proposition}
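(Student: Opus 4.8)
The plan is to establish the displayed inequality first and then obtain the implication by optimizing over the free parameters $\eta$, $\beta$, and $T$. Since $\bar f$ is $u$-Lipschitz on $\cK$, Kantorovich--Rubinstein duality for $W_1$ gives $\bbE[\bar f(\bx_k)] \le \int_{\cK}\bar f\,d\pi_{\beta\bar f} + u\,W_1(\cL(\bx_k),\pi_{\beta\bar f})$, so it suffices to bound the free-energy gap $\int_{\cK}\bar f\,d\pi_{\beta\bar f} - \min_{x\in\cK}\bar f(x)$. Write $f^\star=\min_{\cK}\bar f$ and $g=\bar f-f^\star\ge 0$. I would split $\int_{\cK} g\, e^{-\beta g}\,dx$ over $\{g\le\delta\}$, where it is at most $\delta\int_{\cK}e^{-\beta g}dx$, and over $\{g>\delta\}$, where $g\,e^{-\beta g}\le\frac{2}{e\beta}e^{-\beta g/2}\le\frac{2}{e\beta}e^{-\beta\delta/2}$ makes the contribution at most $\frac{2}{e\beta}e^{-\beta\delta/2}\operatorname{vol}(\cK)$. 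Dividing by $\int_{\cK}e^{-\beta g}dx$, which I would lower bound by a quantity of order $(\mathrm{const}/\beta)^n$ using that $\cK$ contains the radius-$r$ ball about the origin and has diameter $D$ (a convexity/cone argument at the minimizer $x^\star$, along which $g(x)\le u\|x-x^\star\|$), and then optimizing $\delta$ of order $\beta^{-1}\log(\operatorname{vol}(\cK)\beta^n(\mathrm{const})^{-n})$, produces $\int_{\cK}g\,d\pi_{\beta\bar f}\le\frac{n}{\beta}\log(c_{\ref{subOpt}}\max\{1,\beta\})$ with $c_{\ref{subOpt}}$ depending only on $D,u,r$. Combining the two estimates yields the claimed bound.

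For the implication, I would take $\eta=\frac{\log T}{4aT}$ and apply Theorem~\ref{thm:nonconvexLangevin} to get $W_1(\cL(\bx_T),\pi_{\beta\bar f})\le(c_{\ref{globalContract}}+c_{\ref{globalConst}}(4a)^{-1/4})T^{-1/4}(\log T)^{1/2}$; substituting into the inequality just proved bounds $\bbE[\bar f(\bx_T)]-\min_{\cK}\bar f$ by the sum of (i) $u(c_{\ref{globalContract}}+c_{\ref{globalConst}}(4a)^{-1/4})T^{-1/4}(\log T)^{1/2}$ and (ii) $\frac{n}{\beta}\log(c_{\ref{subOpt}}\max\{1,\beta\})$. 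Invoking Proposition~\ref{prop:constants}, $c_{\ref{globalContract}}$ and $c_{\ref{globalConst}}$ are linear in $n$, $a\ge c_{\ref{smallA}}\beta e^{-D^2\ell\beta/4}$, and $c_{\ref{globalContract}},c_{\ref{globalConst}}\le p(\beta^{-1/4})e^{3D^2\ell\beta/4}$, so (i) is of order $n\,\mathrm{poly}(\beta)\,e^{\Theta(D^2\ell\beta)}\,T^{-1/4}(\log T)^{1/2}$. Given $\epsilon$, I would first pick $\beta=\beta(\epsilon,n)$ just large enough that (ii) is at most $\epsilon/2$, and then take $T$ large enough that (i) is at most $\epsilon/2$; solving the latter for $T$ (and noting that this range also forces $\eta\le1/2$) gives a threshold of the form $T\ge\frac{m(\rho,\zeta)}{\epsilon^{\rho}}\exp(\alpha(\rho,\zeta)n^{\zeta})$, in which the slack $\rho>4$ absorbs the $(\log T)^{1/2}$ factor and the polynomial-in-$\beta$ prefactors, and the slack $\zeta>1$ absorbs the exponential-in-$\beta$ factors coming from Proposition~\ref{prop:constants} into the super-exponential-in-$n$ term.

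The crux is this final calibration. There is a genuine tension: (ii) forces $\beta$ to grow with $n$ and $\epsilon^{-1}$, whereas (i) deteriorates like $e^{\Theta(\beta)}$ because the contraction rate $a$ is exponentially small in $\beta$ while $c_{\ref{globalConst}}$ is exponentially large. Getting the required running time into exactly the advertised ``polynomial in $\epsilon^{-1}$, slightly super-exponential in $n$'' shape therefore requires choosing $\beta$ only marginally above the threshold that makes (ii) small, and then tracking carefully how the logarithmic and polynomial factors are swallowed by the strict inequalities $\rho>4$ and $\zeta>1$. The Lipschitz/duality step and the free-energy estimate, by contrast, are routine.
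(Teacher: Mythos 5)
Your route is essentially the paper's: Kantorovich--Rubinstein duality with the $u$-Lipschitz bound on $\bar f$ reduces everything to the Gibbs suboptimality $\bbE_{\pi_{\beta\bar f}}[\bar f]-\min_{\cK}\bar f$, and the final calibration chooses $\beta$ first (to make the $\tfrac{n}{\beta}\log(\cdot)$ term at most $\epsilon/2$) and then $T$ via Theorem~\ref{thm:nonconvexLangevin} and Proposition~\ref{prop:constants}, with the slacks $\rho>4$ and $\zeta>1$ playing exactly the roles you assign them (in the paper they appear as $\rho=4/(1-2\delta)$ and $\zeta=1/\lambda$ through (\ref{eq:removeLog}), (\ref{eq:betaRequirement}) and (\ref{eq:TRequirement})). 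Where you genuinely differ is the free-energy estimate: the paper routes it through Lemma~\ref{lem:KLfromUniform}, i.e.\ nonnegativity of the KL divergence of $\pi_{\beta\bar f}$ from the uniform measure, which reduces matters to the same two volume estimates you invoke ($\vol(\cK)\le \vol$ of a radius-$D$ ball, and a cone/inscribed-ball lower bound on $\int_{\cK}e^{-\beta(\bar f-\min\bar f)}dx$ near the minimizer using $\cK\supset B(0,r)$ and the Lipschitz bound). Your direct truncation is fine in spirit but, as written, slightly lossy: the step $g e^{-\beta g}\le \tfrac{2}{e\beta}e^{-\beta g/2}$ forces $\delta\approx\tfrac{2}{\beta}\log(\vol(\cK)/Z)$ with $Z=\int_{\cK}e^{-\beta(\bar f-\min\bar f)}dx$, and hence gives a coefficient $\tfrac{2n}{\beta}$ rather than the stated $\tfrac{n}{\beta}$; the KL route, or a layer-cake bound $\bbE_{\pi}[g]\le\delta+\int_\delta^\infty \pi(g\ge s)\,ds$ with $\pi(g\ge s)\le \vol(\cK)e^{-\beta s}/Z$, recovers the exact constant. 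Finally, you rightly flag the calibration as the crux but leave it as a plan: since the chosen $\beta$ scales like $(n/\epsilon)^{\zeta}$, the $e^{\Theta(D^2\ell\beta)}$ factor in the time bound carries $\epsilon$-dependence inside the exponent, and making the claimed form $m(\rho,\zeta)\epsilon^{-\rho}\exp(\alpha(\rho,\zeta)n^{\zeta})$ come out requires exactly the careful bookkeeping you defer; this is also the point at which the paper's own proof is tersest, so your sketch matches it but does not yet discharge that step.
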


\subsection{The Auxiliary Processes Used for the Main Bound}
\label{ss:processes}

Similar to other analyses of Langevin methods, e.g. \cite{raginsky2017non,bubeck2018sampling,chau2019stochastic}, the proof of Theorem~\ref{thm:nonconvexLangevin} utilizes a collection of auxiliary stochastic processes that fit between the algorithm iterates from (\ref{eq:projectedLangevin}) and a stationary Markov process with state distribution given by (\ref{eq:gibbs}). 

We will embed the iterates of the algorithm into continuous time by setting $\bx_t^A=\bx_{\floor*{t}}$. The $A$ superscript is used to highlight the connection between this process and the algorithm. The Gaussian variables $\hat \bw_k$ can be realized as $\hat \bw_k = \bw_{k+1}-\bw_k$ where $\bw_t$ is a Brownian motion.

We will let $\bx_t^C$ be a continuous approximation of $\bx_t^A$ and we will let $\bx_t^M$ be a variation on the process $\bx_t^C$ in which averages out the effect of the $\bz_k$ variables. The proof will proceed by showing that the law of $\bx_t^M$ converges exponentially to \eqref{eq:gibbs}, that $\bx_t^C$ has a similar law to $\bx_t^M$, and that $\bx_t^A$ has a similar law to $\bx_t^C$. Below we make these statements more precise.  

The continuous approximation of the algorithm is defined by the following reflected stochastic differential equation (RSDE):
\begin{equation}
  \label{eq:continuousProjectedLangevin}
  d\bx^C_t = -\eta \nabla_x f(\bx^C_t,\bz_{\floor*{t}}) dt +
  \sqrt{\frac{2\eta}{\beta}} d\bw_t - \bv_t^C d\bmu^C(t).
\end{equation}

Here $-\int_0^t \bv_s^Cd\bmu^C(s)$ is a bounded variation reflection process that ensures that $\bx_t^C\in\cK$ for all $t\ge 0$, as long as $\bx_0^C\in\cK$.  In particular, the measure $\bmu^C$ is such that $\bmu^C([0,t])$ is finite,  $\bmu^C$ supported on $\{s|\bx_s^C\in\partial \cK\}$, and $\bv_s^C\in N_{\cK}(\bx_s^C)$ where $N_{\cK}(x)$ is the normal cone of $\cK$ at $x$. Under these conditions, the reflection process is uniquely defined and $\bx^C$ is the unique solution to the Skorokhod problem for the process defined by:
\begin{equation*}
  \by^C_t = \bx_0^C + \sqrt{\frac{2\eta}{\beta}} \bw_t - \eta \int_0^t \nabla_x
  f(\bx^C_s,\bz_{\floor*{s}}) ds
\end{equation*}
See Appendix~\ref{appsec:skorohod} for more details on the Skorokhod
problem.

For compact notation, we denote the Skorkohod solution for given trajectory, $\by$, by 
$\cS(\by)$. So, the fact that $\bx^C$ is the solution to the Skorokhod
problem for $\by^C$ will be denoted succinctly by $\bx^C=\cS(\by^C)$.

The averaged version of $\bx_t^C$, denoted by $\bx_t^M$, where the $M$ corresponds to ``mean'', is 
defined by:
\begin{equation}
  \label{eq:averagedLangevin}
  d\bx^M_t = -\eta \nabla_x \bar{f}(\bx^M_t) dt +
  \sqrt{\frac{2\eta}{\beta}} d\bw_t - \bv_t^M d\bmu^M(t).
\end{equation}
Again $-\int_0^t \bv_s^Md\bmu^M(s)$ is the unique reflection process that ensures that $\bx_t^M\in\cK$ for all $t$ whenever $\bx_0^M\in\cK$. 
By construction, $\bx_t^M$
satisfies the Skorokhod
problem for the continuous process defined by
\begin{equation*}
  \by^M_t = \bx_0^M + \sqrt{\frac{2\eta}{\beta}} \bw_t - \eta \int_0^t \nabla_x
  \bar{f}(\bx^M_s) ds.
\end{equation*}
See Appendix~\ref{appsec:skorohod} for more details on the Skorokhod
problem.

The following lemmas describe the relationships between the laws all of these processes. They are proved in Sections~\ref{sec:langevinDiffusionBound}, \ref{sec:AtoC}, \ref{sec:averaging} respectively. 
\begin{lemma}
  \label{lem:convergeToStationary}
  There are positive constants $c_{\ref{globalContract}}$ and $a$ such that for all $t\ge 0$
  \begin{equation*}
     W_1(\cL(\bx_t^{M}),\pi_{\beta \bar f})\le c_{\ref{globalContract}}e^{-\eta at}.
   \end{equation*}
 \end{lemma}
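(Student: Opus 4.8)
The plan is to establish a contraction for the reflected stochastic differential equation \eqref{eq:averagedLangevin} by a reflection coupling argument, adapting the reflection‑coupling method of Eberle for contraction rates of diffusions to the constrained setting; the extra ingredient that makes the constrained case work is that reflection at the boundary of the \emph{convex} set $\cK$ never increases the distance between two copies of the process. First, a computation with the generator of \eqref{eq:averagedLangevin} under Neumann boundary conditions shows that $\pi_{\beta \bar f}$ is its unique stationary distribution: the factor $\eta$ multiplies the drift and the square of the diffusion coefficient equally, so $\eta$ may be scaled out of the generator and the invariant density is proportional to $e^{-\beta \bar f(x)}$ on $\cK$ (this uses only that $\nabla_x \bar f$ is $\ell$-Lipschitz, via the standard theory of reflected Langevin diffusions on convex domains). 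Hence, if $\by_t$ is a copy of \eqref{eq:averagedLangevin} started from $\pi_{\beta \bar f}$, so that $\cL(\by_t)=\pi_{\beta \bar f}$ for all $t$, and $\bx_t$ is the copy started from $\cL(\bx_0^M)$, then for any coupling of the two, $W_1(\cL(\bx_t^M),\pi_{\beta \bar f}) \le \bbE\|\bx_t-\by_t\|$, and it suffices to bound the right-hand side for a well-chosen coupling.

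I would couple $\bx_t$ and $\by_t$ (with reflection terms $-\bv_t^x d\bmu^x(t)$ and $-\bv_t^y d\bmu^y(t)$ respectively) by driving $\by_t$ with the reflection-coupled Brownian motion $d\tilde\bw_t = (I-2e_te_t^\top)\,d\bw_t$, where $e_t$ is the unit vector in the direction $\bx_t-\by_t$, switching to the synchronous coupling $\tilde\bw_t=\bw_t$ once the two processes meet. Writing $r_t=\|\bx_t-\by_t\|$ and applying It\^o's formula up to the meeting time $\tau=\inf\{t: r_t=0\}$: the reflected noise acts only in the radial direction, so there is no transverse It\^o correction and the martingale part of $r_t$ has quadratic variation $\tfrac{8\eta}{\beta}\,dt$; the drift contributed by $-\eta(\nabla_x \bar f(\bx_t)-\nabla_x \bar f(\by_t))$ is at most $\eta\ell\,r_t$ because $\nabla_x \bar f=\bbE[\nabla_x f(\cdot,\bz)]$ is $\ell$-Lipschitz; and each reflection term contributes $\le 0$, since $\bv_t^x\in N_{\cK}(\bx_t)$ with $\by_t\in\cK$ forces $\langle e_t,\bv_t^x\rangle\ge 0$ on the support of $\bmu^x$, and symmetrically for $\bmu^y$. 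Thus on $[0,\tau)$ the process $r_t$ is stochastically dominated by the one-dimensional diffusion with drift $\eta\ell r$ and diffusion coefficient $\sqrt{8\eta/\beta}$, absorbed at $0$, and $r_t\le D$ always.

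Next, following Eberle, I would build a strictly increasing, bounded, concave $C^1$ function $\phi$ on $[0,D]$ with $\phi(0)=0$, of the form $\phi(r)=\int_0^r\psi(s)\,ds$ with $\psi$ positive and decreasing, by perturbing the homogeneous solution $\psi_0(r)=\exp(-\beta\ell r^2/8)$ of $\tfrac{4}{\beta}\psi_0'+\ell r\psi_0=0$ so that $\tfrac{4}{\beta}\phi''(r)+\ell r\phi'(r)+a\,\phi(r)\le 0$ on $(0,D]$ for a suitable constant $a>0$ (explicitly $a$ of order $\big(\int_0^D\psi_0(s)^{-1}\int_0^s\psi_0(u)\,du\,ds\big)^{-1}$). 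Since $\psi$ is decreasing, $\psi(D)\,r\le\phi(r)\le r$ for $r\in[0,D]$. Combining the It\^o computation above with this differential inequality, Dynkin's formula gives $\tfrac{d}{dt}\bbE[\phi(r_{t\wedge\tau})]\le -a\eta\,\bbE[\phi(r_{t\wedge\tau})]$; since $r_t\equiv 0$ after $\tau$ we have $\phi(r_t)=\phi(r_{t\wedge\tau})$, so Gr\"onwall yields $\bbE[\phi(r_t)]\le e^{-a\eta t}\,\bbE[\phi(r_0)]$, whence
\[
W_1(\cL(\bx_t^M),\pi_{\beta \bar f}) \le \bbE[r_t] \le \frac{1}{\psi(D)}\bbE[\phi(r_0)]\,e^{-a\eta t} \le \frac{D}{\psi(D)}\,e^{-a\eta t},
\]
which is the claimed bound with $c_{\ref{globalContract}}=D/\psi(D)$ and $a$ as constructed.

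I expect the main obstacle to be the construction of $\phi$ together with the extraction of an explicit, dimension-free rate $a$ (with controlled dependence on $\beta,\ell,D$) out of the non-convex drift — this is the ``novel contraction result for reflected SDEs'' referred to in the introduction. The remaining care is routine but not trivial: well-posedness of the coupled reflected SDE and of the Skorokhod maps involved; negligibility of the set of times where $e_t$ is undefined before $\tau$; validity of It\^o--Tanaka's formula for a merely $C^1$ concave $\phi$ applied to a reflected diffusion carrying a finite-variation boundary term; and the localization and integrability needed to pass from the local-martingale identity to Dynkin's inequality. An alternative via a Poincar\'e or log-Sobolev inequality for the Neumann--Langevin semigroup would yield decay in $\chi^2$ or KL divergence rather than in $W_1$, and would demand spread-out, non-deterministic initializations — exactly the degeneracy the paper is avoiding — so the coupling route is the natural one here.
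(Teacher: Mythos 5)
Your proposal is correct and follows essentially the same route as the paper: invariance of $\pi_{\beta \bar f}$ for the reflected dynamics, a reflection coupling in which the normal-cone terms can only decrease the distance, an Eberle-type concave test function making $e^{a\eta t}\phi(r_t)$ a supermartingale, and comparison with $W_1$ via $\psi(D)\,r \le \phi(r) \le r$. The only divergence is in the construction of the test function: you keep the drift bound $\ell r$ and use Eberle's integral formula for the rate, whereas the paper coarsens $\ell r \le \ell D$ so that $\phi$ solves a constant-coefficient damped harmonic oscillator ODE with closed-form constants (needed later for the explicit $\beta$-dependence), and it proves the invariance you cite as standard in a separate appendix lemma via smooth-domain approximation.
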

 
 \begin{lemma}
   \label{lem:AtoC}
   \const{AtoC}
   Assume that $\bx_0^A=\bx_0^C\in\cK$ and $\eta \le 1/2$. There is a positive constant, 
$c_{\ref{AtoC}}$, such that for all $t\ge 4$,
  $$
  W_1(\cL(\bx_t^A),\cL(\bx_t^C))\le c_{\ref{AtoC}} \left( \eta \log t\right)^{1/4}.
  $$
\end{lemma}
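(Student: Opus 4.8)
The plan is to realize $\bx^A$ and $\bx^C$ on a common probability space by synchronous coupling: the same feasible initialization $\bx_0^A=\bx_0^C=\bx_0$, the same Brownian motion $\bw$ (so that $\hat\bw_k=\bw_{k+1}-\bw_k$), and the same data sequence $\bz_k$. Then $W_1(\cL(\bx_t^A),\cL(\bx_t^C))\le\bbE\|\bx_t^A-\bx_t^C\|$, so it suffices to bound this expected distance. Since $\bx_t^A=\bx_{\lfloor t\rfloor}$ is piecewise constant, I would split $\bbE\|\bx_t^A-\bx_t^C\|\le\bbE\|\bx_{\lfloor t\rfloor}-\bx_{\lfloor t\rfloor}^C\|+\bbE\|\bx_{\lfloor t\rfloor}^C-\bx_t^C\|$ and treat the two pieces separately: the second is the displacement of $\bx^C$ within its current unit interval, and the first is an ``integer-time'' error that I would control by a recursion in $k=\lfloor t\rfloor$.

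For the within-interval piece, over $[\lfloor t\rfloor,t]$ the process $\bx^C$ moves by its Brownian increment plus a drift term $\eta\int\|\nabla_x f(\bx^C_s,\bz_{\lfloor t\rfloor})\|\,ds$; using $\ell$-Lipschitzness of $\nabla_x f$ in $x$, the bound $\|\nabla_x\bar f\|\le u$, and the sub-Gaussian estimate~\eqref{eq:subgaussian}, this displacement is of order $\sqrt\eta\,(1+\xi_{\lfloor t\rfloor})$ with $\xi_k$ a sub-Gaussian quantity, and is also at most the diameter $D$ since both points lie in $\cK$. Taking expectations and applying a maximal inequality for the running maximum of the relevant Gaussian and sub-Gaussian increments over the first $\lfloor t\rfloor$ intervals gives a bound of order $\min\{D,\sqrt{\eta\log t}\,\}$, which is $\lesssim(\eta\log t)^{1/4}$.

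For the integer-time piece I would introduce an auxiliary continuous reflected process $\bar\bx$ driven by the same $\bw$ but with the drift frozen over each interval at the algorithm's value, i.e. $-\eta\nabla_x f(\bx_{\lfloor t\rfloor},\bz_{\lfloor t\rfloor})$, started at $\bx_0$, and write $\|\bx_k-\bx_k^C\|\le\|\bx_k-\bar\bx_k\|+\|\bar\bx_k-\bx_k^C\|$. The first term accumulates the one-step discrepancy between the convex projection $\Pi_\cK(\by_{k+1})$ and the Skorokhod reflection $\cS(\by)_{k+1}$ of the same driving path $\by$ on $[k,k+1]$; this discrepancy is bounded by a constant times the depth to which $\by$ exits $\cK$ over that interval, which in turn is controlled by the per-interval increment and is nonzero only when $\bx_k$ lies within that increment of $\partial\cK$. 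The second term is a standard Euler-type error from freezing the drift in time, of order $\eta^{3/2}$ per step times a sub-Gaussian increment. The key point is that, because $f$ is non-convex, I cannot merely invoke that synchronous coupling of reflected SDEs in a convex set is contractive; instead I would propagate past errors using the paper's contraction result for reflected SDEs — the same mechanism behind Lemma~\ref{lem:convergeToStationary} — which supplies a geometric factor $e^{-\eta a}$ per step. Summing the per-step forcing terms against this factor yields $\bbE\|\bx_k-\bx_k^C\|\lesssim \tfrac1a\,\sqrt\eta\,\bbE[\max_{j\le k}(\text{increment}_j)]\lesssim\sqrt{\eta\log k}$, and intersecting with the deterministic bound $\|\bx_k-\bx_k^C\|\le D$ upgrades this to $c_{\ref{AtoC}}(\eta\log k)^{1/4}$, which combined with the within-interval estimate gives the claim for all $t\ge 4$.

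The hard part will be the projection-versus-reflection step: relating a single convex projection to the continuous normal reflection encoded by the Skorokhod map, and showing the gap is small on average despite the interaction between the bounded-variation reflection term and the oscillation of the path near $\partial\cK$ — and doing this in a form compatible with the non-expansive/contractive error propagation, since the non-convexity of $f$ precludes the elementary synchronous-coupling contraction and forces the boundary estimates through the reflected-SDE contraction. A secondary subtlety is the bookkeeping of the $\log t$ factor: it must enter only through expected suprema of the Gaussian and sub-Gaussian increments over the first $\lfloor t\rfloor$ intervals, with the contraction keeping every other contribution uniform in $t$, so that the final rate is genuinely $(\eta\log t)^{1/4}$ rather than growing with the horizon.
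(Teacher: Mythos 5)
Your overall architecture is the right one in spirit (an intermediate process, a short-horizon discretization estimate, and the reflected-SDE contraction to make the bound uniform in $t$), but two steps that carry the real weight are not justified and, as stated, would fail. First, the projection-versus-reflection comparison: you assert the per-step gap between $\Pi_{\cK}$ applied at integer times and the continuous Skorokhod reflection is bounded by ``the depth to which $\by$ exits $\cK$'' and that summing these gives $\mathbb{E}\|\bx_k-\bx_k^C\|\lesssim\sqrt{\eta\log k}$. There is no per-step estimate of this kind available: the reflection term is a local-time-type quantity whose size over an interval is not controlled by the path increment alone. The paper instead compares $\bx^C=\cS(\by^C)$ with $\bx^D=\cS(\cD(\by^C))$ globally via Tanaka's Lemma 2.2, observes that the reflection measure of the discretized process is supported on the integers (so that term vanishes), and then bounds the remaining cross term by H\"older with the gauge/support functions followed by Cauchy--Schwarz, pairing $\sup_s\gamma(\by^C_s-\by^C_{\lfloor s\rfloor}|\cK)=O(\sqrt{\eta\log t})$ (Doob/sub-Gaussian maximal bounds) against $\mathbb{E}\int\delta^\star(\bv_s|\cK)d\bmu(s)=O(\eta t+D^2)$. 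It is exactly this pairing that produces the $(\eta\log t)^{1/4}$ rate; your claimed $\sqrt{\eta\log k}$ bound (which would be strictly stronger) has no supporting argument, and ``intersecting with $D$'' does not repair it.

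Second, the error propagation: you propose to discount past errors in a pathwise recursion for $\|\bx_k-\bx_k^C\|$ by a factor $e^{-\eta a}$ per step, citing the contraction behind Lemma~\ref{lem:convergeToStationary}. But Proposition~\ref{prop:W1contract} is a Wasserstein contraction between the laws of \emph{two solutions of the same continuous RSDE} \eqref{eq:continuousProjectedLangevin}, obtained by reflection coupling; it gives no per-step contraction for the synchronously coupled pair (discrete algorithm, continuous RSDE), whose drift mismatch in fact expands by $(1+\eta\ell)$ per step. Since $a\le\ell/2$ in the best case (and is exponentially small in $\beta$ in general), a naive per-step combination of ``expand by $(1+\eta\ell)$, contract by $e^{-\eta a}$'' nets an expansion. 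The paper resolves this with the switching construction of Lemma~\ref{lem:switch}: interpolating processes that follow the algorithm-type dynamics up to times $iH$ with $H\approx1/\eta$ and the RSDE thereafter, so the Gronwall-type expansion is confined to windows of length $1/\eta$ (factor $(1+\eta\ell)^{1/\eta}\le e^\ell$), while the contraction is applied only between genuine RSDE solutions across windows. Without that device (or an equivalent one), your recursion does not close, and the $\sqrt{\eta t}$ and $(1+\eta\ell)^t$ growth in the short-horizon bounds is never removed.
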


\begin{lemma}
  \label{lem:CtoM}
  \const{CtoM}
  Assume that $\bx_0^M=\bx_0^C\in\cK$ and $\eta\le 1/2$. There is a positive constant, $c_{\ref{CtoM}}$ such that for all $t\ge 0$,
  $$
  W_1(\cL(\bx_t^M),\cL(\bx_t^C))\le c_{\ref{CtoM}} \eta^{1/4}.
  $$
\end{lemma}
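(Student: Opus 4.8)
The plan is to construct an explicit coupling block by block along the integer grid, combining a synchronous coupling \emph{inside} each block with the contraction of the averaged reflected diffusion \emph{across} blocks. Fix an integer block length $L$, to be chosen at the end. On the block $[mL,(m+1)L]$, compare $\bx^C$ with a copy $\by^{(m)}$ of the averaged diffusion \eqref{eq:averagedLangevin} restarted at $\by^{(m)}_{mL}=\bx^C_{mL}$ and driven by the same Brownian motion as $\bx^C$; note that $\by^{(0)}=\bx^M$. Because $\cK$ is convex, for any two Skorokhod solutions the reflection vectors lie in the corresponding normal cones and hence contribute nonpositively to the evolution of the squared distance between the solutions; applying this to $\bx^C$ and $\by^{(m)}$, and writing $\Delta_t:=\bx^C_t-\by^{(m)}_t$, the Brownian increments cancel and
\[
  \|\Delta_t\|^2\le\|\Delta_{mL}\|^2-2\eta\int_{mL}^{t}\bigl\langle\Delta_r,\;\nabla_x f(\bx^C_r,\bz_{\floor*r})-\nabla_x\bar f(\by^{(m)}_r)\bigr\rangle\,dr.
\]
I would split the integrand into the \emph{centered gradient noise} $\phi_r:=\nabla_x f(\bx^C_r,\bz_{\floor*r})-\nabla_x\bar f(\bx^C_r)$ and the $\ell$-Lipschitz term $\nabla_x\bar f(\bx^C_r)-\nabla_x\bar f(\by^{(m)}_r)$. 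A Gr\"onwall argument applied to $\sup_{r\le t}\|\Delta_r\|^2$ (growth factor $e^{2\eta\ell L}$) then bounds the block discrepancy in terms of the initial gap $\|\Delta_{mL}\|$ and a forcing term $\sup_{t}\bigl|\eta\int_{mL}^{t}\langle\Delta_r,\phi_r\rangle\,dr\bigr|$.

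The averaging gain is extracted from this forcing term. At an integer time $k$, $\bz_k$ is independent of the past $\mathcal{F}_k$ while $\bx^C_k$ and $\by^{(m)}_k$ are $\mathcal{F}_k$-measurable, so $\bbE[\phi_k\mid\mathcal{F}_k]=0$ and the partial sums of $\langle\Delta_k,\phi_k\rangle$ form a martingale whose maximal function is controlled by Doob's inequality together with the sub-Gaussian bound \eqref{eq:subgaussian}, which in particular gives $\bbE[\|\phi_k\|^2\mid\mathcal{F}_k]=O(n\sigma^2)$. The corrections from the variation of $\Delta$ and $\phi$ within an integer subinterval are controlled by Skorokhod oscillation estimates: over one unit of time the displacements of $\bx^C$ and $\by^{(m)}$, including their reflection (local-time) parts, are $O(\sqrt\eta)$ in $L^2$, since the Brownian amplitude is $\sqrt{2\eta/\beta}$, the averaged drift is $O(\eta)$ by the $u$-smoothness of $\bar f$, and the gradient noise has second moment $O(n\sigma^2)$. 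Combining these, the block discrepancy and the resulting ``restart gap'' $\delta_{m+1}:=\bx^C_{(m+1)L}-\by^{(m)}_{(m+1)L}$ are, in $L^1$, of order $e^{O(\eta\ell L)}\,\eta\sigma\sqrt{nL}$, uniformly in $m$ (the initial gap being zero by construction).

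Finally I would telescope via the $W_1$ triangle inequality: with $M_t=\floor*{t/L}$,
\[
  W_1(\cL(\bx^C_t),\cL(\bx^M_t))\le W_1\bigl(\cL(\bx^C_t),\cL(\by^{(M_t)}_t)\bigr)+\sum_{m=1}^{M_t}W_1\bigl(\cL(\by^{(m)}_t),\cL(\by^{(m-1)}_t)\bigr).
\]
For each $m$, the laws $\cL(\by^{(m)})$ and $\cL(\by^{(m-1)})$ evolve under \eqref{eq:averagedLangevin} on $[mL,t]$ from initial laws within $\bbE\|\delta_m\|$ in $W_1$, so the novel reflected-SDE contraction result (the same one behind Lemma~\ref{lem:convergeToStationary}) gives $W_1(\cL(\by^{(m)}_t),\cL(\by^{(m-1)}_t))\le e^{-\eta a(t-mL)}\bbE\|\delta_m\|$, while the first term is a within-block discrepancy of the same order $e^{O(\eta\ell L)}\eta\sigma\sqrt{nL}$. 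Summing the geometric series gives, uniformly in $t$, a bound of order $e^{O(\eta\ell L)}\eta\sigma\sqrt{nL}\,/\,(1-e^{-\eta aL})$. Choosing $L$ to be the larger of $1$ and $\lfloor 1/(\eta a)\rfloor$ makes $e^{O(\eta\ell L)}$ an $\eta$-independent constant and $1-e^{-\eta aL}$ of order $\min\{1,\eta aL\}$, leaving a bound of order $\eta^{1/2}$, which is stronger than (hence implies) the stated $O(\eta^{1/4})$ bound. I expect the main obstacle to be the reflection: one must establish uniform-in-time Skorokhod moment estimates for the within-interval oscillations and the local-time terms of both $\bx^C$ and the averaged diffusion, and verify that these terms — which are not adapted so as to share the martingale structure of $\phi$, and so must be estimated crudely — contribute only lower-order errors; the block decomposition through the gluing/triangle inequality for $W_1$ is precisely what lets the reflection coupling used in the contraction step coexist with the synchronous coupling used inside each block.
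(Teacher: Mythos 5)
Your block-coupling strategy is viable and genuinely different in its mechanics from the paper's route. The paper inserts an intermediate process $\bx^B$ driven by $\nabla_x f(\bx^M_s,\bz_{\floor*{s}})$, bounds $\bx^B$ versus $\bx^M$ via Tanaka's Skorokhod estimate combined with conditional sub-Gaussianity given the Brownian $\sigma$-algebra and a Dudley entropy integral (Lemma~\ref{lem:dudley}), bounds $\bx^B$ versus $\bx^C$ by Gr\"onwall (Lemma~\ref{lem:gronwall}), and then makes the $t$-growing bound uniform in time with the switching Lemma~\ref{lem:switch}. Your restarted copies of the averaged diffusion, summed geometrically through the contraction of Proposition~\ref{prop:W1contract}, play exactly the role of the switching lemma, and your discrete Doob/martingale argument for $\sum_k\langle\Delta_k,\phi_k\rangle$ is legitimate (since $\bz_k$ is independent of $\cF_k$ while $\Delta_k$ and $\bx^C_k$ are $\cF_k$-measurable) and replaces the conditional-independence/Dudley machinery; the within-unit-interval oscillation estimates you invoke are available exactly as in the paper's bound \eqref{eq:tailNorm}, and the same argument applies to the averaged diffusion. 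A minor omission is the prefactor $c_{\ref{W1mult}}$ in the contraction estimate, which only affects constants.

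The genuine flaw is quantitative. The synchronous-coupling inequality you set up controls $\|\Delta_t\|^2$, and your martingale plus oscillation estimates bound the forcing term $2\eta\sup_t\bigl|\int_{mL}^t\langle\Delta_r,\phi_r\rangle dr\bigr|$ in $L^1$ by $O\bigl(\eta D\sigma\sqrt{nL}+\eta^{3/2}L\bigr)$; Gr\"onwall then gives $\bbE\bigl[\sup\|\Delta\|^2\bigr]\lesssim e^{2\eta\ell L}\bigl(\eta\sqrt{L}+\eta^{3/2}L\bigr)$, and taking square roots (Cauchy--Schwarz) yields $\bbE\|\Delta\|\lesssim e^{\eta\ell L}\bigl(\eta^{1/2}L^{1/4}+\eta^{3/4}L^{1/2}\bigr)$, not the claimed $e^{O(\eta\ell L)}\eta\sigma\sqrt{nL}$. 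You cannot evade this square-root loss by running Gr\"onwall on $\|\Delta\|$ directly: the reflection terms still have the right sign, but the martingale cancellation would then involve $\langle\Delta_r/\|\Delta_r\|,\phi_r\rangle$, and the unit direction is not slowly varying (it is discontinuous where $\Delta$ vanishes), so the within-interval corrections are no longer $O(\sqrt{\eta})$. Consequently, with $L\asymp(\eta a)^{-1}$ your argument actually delivers a uniform-in-time bound of order $\eta^{1/4}$, not $\eta^{1/2}$; this still proves the lemma as stated, but the claimed stronger rate is unsupported. The $\eta^{1/2}t^{1/4}$ term in the paper's Lemma~\ref{lem:dudley} reflects the same square-root loss inherent in the squared-distance (Tanaka/Skorokhod) estimates, which is why the exponent $1/4$ appears in both arguments.
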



Most of the rest of the paper focuses on proving these lemmas. Assuming that these lemmas hold, the main result now has a short proof, which we describe next.  

\subsection{Proof of Theorem~\ref{thm:nonconvexLangevin}}
\label{ss:mainProof}
Recall that $\bx_k^A=\bx_k$ for all integers $k\in\bbN$. Assume that $\bx_0=\bx_0^A=\bx_0^C=\bx_0^M$. 
The triangle inequality followed by Lemmas~\ref{lem:convergeToStationary}, \ref{lem:AtoC}, and \ref{lem:CtoM} shows that
\begin{align*}
  W_1(\cL(\bx_k),\pi_{\beta f}) &\le W_1(\cL(\bx_k^A),\cL(\bx_k^C))+W_1(\cL(\bx_k^C),\cL(\bx_k^M))+W_1(\cL(\bx_k^M),\pi_{\beta f}) \\
  & \le c_{\ref{globalContract}}e^{-\eta ak} + c_{\ref{AtoC}} \left( \eta \log k\right)^{1/4} +  c_{\ref{CtoM}} \eta^{1/4}.
\end{align*}
The result now follows by noting that $\log k \ge 1$ for $k\ge 4$ setting $c_{\ref{globalConst}}=c_{\ref{AtoC}}+c_{\ref{CtoM}}$. The specific bound when $\eta = \frac{\log T}{4aT}$ arises from direct computation. 
\hfill$\blacksquare$

\section{Contractions for the Reflected SDEs}

\label{sec:langevinDiffusionBound}

In this section, we will show how the laws of the processes $\bx_t^C$  and $\bx_t^M$ are contractive with respect to a specially constructed Wasserstein distance. By relating this specially constructed distance with $W_1$ we will prove Lemma~\ref{lem:convergeToStationary} which states that $\cL(\bx_t^M)$ converges to $\pi_{\beta f}$ exponentially with respect to $W_1$.  
The contraction will be derived by an  extension of the
reflection coupling argument of \cite{eberle2016reflection} to the
case of reflected SDEs with external randomness (from $\bz_k$). This result may be of independent
interest.

\begin{proposition}
  \label{prop:W1contract}
  \const{W1mult}
  There are positive
  constants $a$ and $c_{\ref{W1mult}}$ such that  for any two solutions,
  $\bx_t^{C,1}$ and $\bx_t^{C,2}$ to the
  continuous-time RSDE, \eqref{eq:continuousProjectedLangevin},
  their laws converge according to
     \begin{equation}
    \label{eq:W1contract}
    W_1(\cL(\bx_t^{C,1}),\cL(\bx_t^{C,2}))\le c_{\ref{W1mult}} e^{-\eta at} W_1(\cL(\bx_0^{C,1}),\cL(\bx_0^{C,2})) 
  \end{equation}

   To define the constants, let the natural frequency and damping
   ratio be given by
  \begin{equation}
    \label{eq:oscillatorConstants}
    \omega_N = \frac{\sqrt{a\beta}}{2} \quad \textrm{and} \quad
    \xi = \frac{D\ell}{4}\sqrt{\frac{\beta}{a}}.
  \end{equation}

  The constants can always be set to
  \begin{align*}
    a&=\frac{D^2\ell^2\beta}{16}\left(1-\tanh^2\left(
       \frac{D^2\ell\beta}{8}\right)\right) \\
    c_{\ref{W1mult}}&= \frac{e^{D\omega_N \xi}}{\cosh(D\omega_N\sqrt{\xi^2-1})-\frac{\xi}{\sqrt{\xi^2-1}}\sinh(D\omega_N\sqrt{1-\xi^2})}
  \end{align*}
  
   When $D^2\ell \beta <8$, a larger decay constant, $a$, can be
   defined by setting 
   \begin{align*}
     a &= \frac{4}{D^2\beta} \\
     c_{\ref{W1mult}} &= \frac{e^{D\omega_N \xi}}{\cos(D\omega_N\sqrt{1-\xi^2})-\frac{\xi}{\sqrt{1-\xi^2}}\sin(D\omega_N\sqrt{1-\xi^2})}.
   \end{align*}
\end{proposition}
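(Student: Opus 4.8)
The plan is to prove Proposition~\ref{prop:W1contract} by a reflection coupling argument adapted to reflected SDEs with external randomness. First I would set up the coupling: given two solutions $\bx_t^{C,1}$ and $\bx_t^{C,2}$ driven by the \emph{same} data sequence $\bz_{\floor*{t}}$ but with (possibly) different initial laws, I would couple their driving Brownian motions by reflection. Writing $\be_t = \bx_t^{C,1} - \bx_t^{C,2}$, the reflection coupling uses $d\bw_t^2 = (I - 2\be_t\be_t^\top/\|\be_t\|^2)\,d\bw_t^1$ until the coupling time when $\be_t = 0$, after which the two processes are run synchronously. The key structural fact that makes this work for RSDEs is that the reflection term $-\bv_t\,d\bmu(t)$ is \emph{contractive}: since $\cK$ is convex and $\bv_t^i \in N_{\cK}(\bx_t^{C,i})$, the quantity $\langle \be_t, \bv_t^1\,d\bmu^1(t) - \bv_t^2\,d\bmu^2(t)\rangle \ge 0$, so the reflection only helps bring the two trajectories together and never needs special handling beyond a sign.

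Next I would derive an It\^o formula for a suitable function of $\|\be_t\|$. The drift difference $-\eta(\nabla_x f(\bx_t^{C,1},\bz)-\nabla_x f(\bx_t^{C,2},\bz))$ is controlled in norm by $\eta\ell\|\be_t\|$ using the $\ell$-Lipschitz assumption (crucially, this holds pathwise in $\bz$, so the external randomness causes no trouble here — it is exactly why the argument goes through with IID $\bz_k$). The diffusion part, under reflection coupling, contributes $d\|\be_t\| = (\text{drift terms})\,dt + 2\sqrt{2\eta/\beta}\,dB_t$ for a scalar Brownian motion $B_t$ (there is no $d\langle\be\rangle$ correction to $\|\be_t\|$ itself under reflection coupling since the noise acts only along $\be_t$). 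Following Eberle's recipe, I would introduce a concave, increasing function $g:[0,D]\to\bbR_{\ge 0}$ with $g(0)=0$, chosen so that the generator of $g(\|\be_t\|)$ satisfies $\mathcal{L}g(\|\be_t\|) \le -\eta a\, g(\|\be_t\|)$; then $e^{\eta a t} g(\|\be_t\|)$ is a supermartingale and taking expectations gives the exponential contraction in the metric $d_g(x,y) = g(\|x-y\|)$. The equivalence $c^{-1}\|x-y\| \le g(\|x-y\|) \le \|x-y\|$ on $\cK$ (using $\sup g' = g'(0) = 1$ and concavity) then converts this into the stated $W_1$ bound with $c_{\ref{W1mult}}$ being precisely this distortion constant.

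The concrete form of $g$, and hence the formulas for $a$ and $c_{\ref{W1mult}}$, comes from solving an ODE: one wants $g$ satisfying $\frac{4\eta}{\beta} g''(s) + \eta\ell s\, g'(s) \le -\eta a\, g(s)$ with $g'$ decreasing, $g(0)=0$, $g'(0)=1$. The homogeneous version $\frac{4}{\beta}g'' + \ell s\, g' + a g = 0$ is a linear second-order ODE whose solution on the worst-case interval $s\in[0,D]$ is naturally expressed through the ``oscillator'' parameters $\omega_N = \sqrt{a\beta}/2$ and damping ratio $\xi = \frac{D\ell}{4}\sqrt{\beta/a}$ — after rescaling $s$ and absorbing the linear-in-$s$ drift coefficient, this is essentially a damped harmonic oscillator, which is why $\cosh$/$\sinh$ appear when $\xi>1$ (overdamped) and $\cos$/$\sin$ when $\xi<1$ (underdamped, the regime $D^2\ell\beta<8$). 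The choice $a = \frac{D^2\ell^2\beta}{16}(1-\tanh^2(D^2\ell\beta/8))$ is the value that makes the constructed $g$ admissible (keeping $g'\ge 0$ and concave on all of $[0,D]$), and $c_{\ref{W1mult}} = 1/\min_{[0,D]} g'$ evaluated from the explicit solution gives the displayed ratio; in the underdamped case a larger $a=4/(D^2\beta)$ is permissible because the oscillatory solution stays positive longer.

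The main obstacle I expect is verifying that the candidate $g$ is genuinely \emph{concave and increasing} on the whole interval $[0,D]$ for the claimed value of $a$ — i.e., that $g'$ stays nonnegative and $g''\le 0$ throughout, which is exactly the constraint that pins down $a$ and where the $\tanh$ expression comes from. A secondary technical point is making the reflection-coupling It\^o calculus rigorous at the coupling time and on $\partial\cK$ (justifying that the local time / reflection measure terms have the claimed sign and that $\|\be_t\|$ is a well-behaved semimartingale up to the coupling time, e.g. via a limiting argument on smooth approximations of $\|\cdot\|$ away from the origin). Relative to the unconstrained Eberle argument, the genuinely new ingredient is checking the reflection terms are contractive uniformly in the external data — but this reduces to convexity of $\cK$ and is not where the difficulty lies; the delicate part remains the ODE/concavity bookkeeping that yields the sharp constants.
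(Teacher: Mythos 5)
Your overall architecture matches the paper's: reflection coupling of the two solutions (run synchronously after the coupling time), the observation that the normal-cone reflection terms have a favorable sign by convexity of $\cK$, the vanishing of the second-order It\^o correction for $\|\be_t\|$ because the reflected noise acts only along $\be_t$, the construction of a concave increasing $h$ with $h(0)=0$, $h'(0)=1$ making $e^{\eta a t}h(\|\be_t\|)$ a supermartingale, and the conversion from the auxiliary Wasserstein distance to $W_1$ via $h'(D)\,x \le h(x)\le x$, giving $c_{\ref{W1mult}} = h'(D)^{-1}$. All of that is correct and is exactly what the paper does.

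The gap is in how you derive the explicit constants. You set up the comparison ODE as $\frac{4}{\beta}g'' + \ell s\, g' + a g = 0$, keeping the drift bound in the state-dependent form $\eta\ell\|\be_t\|$, and then claim that ``after rescaling $s$ and absorbing the linear-in-$s$ drift coefficient'' this becomes a damped harmonic oscillator. That reduction does not exist: a second-order linear ODE with a coefficient proportional to $s$ on the first-derivative term is a parabolic-cylinder/Hermite-type equation, not a constant-coefficient oscillator, and no rescaling of $s$ turns it into one. Its solutions are not the $\cos/\sin$ and $\cosh/\sinh$ expressions in the proposition, so your construction cannot produce the stated values of $a$ and $c_{\ref{W1mult}}$. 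The paper's key simplification --- and the reason the compactness of $\cK$ matters here, in contrast to Eberle's unbounded setting --- is to bound the Lipschitz drift term by its worst case over the diameter, $\eta\,\bu_t^\top(\nabla_x f(\bx_t^{C,2},\bz)-\nabla_x f(\bx_t^{C,1},\bz)) \le \eta\ell\|\be_t\| \le \eta\ell D$, so that the supermartingale condition becomes the genuinely constant-coefficient equation $\frac{a\beta}{4}h + \frac{D\ell\beta}{4}h' + h'' = 0$. This is what yields the oscillator with $\omega_N^2 = a\beta/4$ and $2\xi\omega_N = D\ell\beta/4$, the over/underdamped dichotomy at $D^2\ell\beta = 8$, and the displayed formulas; the admissibility analysis ($h'>0$, $h''<0$ on $[0,D]$) is then carried out for that equation. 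Your variable-coefficient $g$ would still satisfy the required differential inequality (since $\ell s\, g' \le \ell D\, g'$ when $g'>0$), so the supermartingale argument is not wrong in principle, but as written it does not establish the specific constants claimed in the proposition, which is a substantive part of the statement.
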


\begin{proof}
  We will follow the main idea behind
  \cite{eberle2016reflection}.  We will correlate the
  solutions using reflection coupling, and then construct a distance
  function, $h$, from the coupling. Then $h$ will be used to construct a
  Wasserstein distance for which the laws $\cL(\bx_t^{C,1})$ and
  $\cL(\bx_t^{C,2})$ converge exponentially.
  The desired bound is found by
  comparing this auxiliary distance to the classical $W_1$ distance. 

  Let $\brho_t = \bx_t^{C,1}-\bx_t^{C,2}$, $\bu_t = \brho_t /
  \|\brho_t\|$ and $\btau = \inf\{t | \bx_t^{C,1}=\bx_t^{C,2}\}$. Note
  that $\btau$.  The \emph{reflection coupling} between $\bx_t^{C,1}$ and
  $\bx_t^{C,2}$ is defined by:  
  \begin{subequations}
    \label{eq:reflectionCoupling}
  \begin{align}
    d\bx_t^{C,1} &= -\eta \nabla_x
                   f(\bx_t^{C,1},\bz_{\floor*{t}})+\sqrt{\frac{2\eta}{\beta}}d\bw_t
                   -\bv_t^{C,1}d\bmu^{C,1}(t)\\
    d\bx_t^{C,2} &= -\eta \nabla_x
                   f(\bx_t^{C,2},\bz_{\floor*{t}})+\sqrt{\frac{2\eta}{\beta}}(I-2\bu_t
                   \bu_t^\top\indic(t < \btau))d\bw_t - \bv_t^{C,2}d\bmu^{C,2}(t).
  \end{align}
\end{subequations}
Here $I$ is the $n\times n$ identity matrix. 
  Also, $\bvarphi_t^1=-\int_0^t\bv_s^{C,1}d\bmu^{C,1}(s)$ and $\bvarphi_t^2=-\int_0^t
  \bv_s^{C,2} d\bmu^{C,2}(s)$ are the unique projection processes that
  ensure that respective
  Skorkhod problem solutions, $\bx_t^{C,1}$ and $\bx_t^{C,2}$,
  remain in $\cK$.

  The processes from \eqref{eq:reflectionCoupling} define a valid
  coupling because $\int_0^T(I-2\bu_s\bu_s^\top\indic(s<\btau))d\bw_s$
  is a Brownian motion. Furthermore, for $t\ge \btau$, we have that
  $\bx_t^{C,1}=\bx_t^{C,2}$. 

  Analogous to \cite{eberle2016reflection}, we aim to construct a function
  $h:[0,D]\to \bbR$  such that $h(0)=0$, $h'(0)=1$, $h'(x)>0$,  and $h''(x) <0$ and
  a constant $a>0$ such that $e^{\eta at}h(\|\bz_t\|)$ is a
  supermartingale. 
  One simplifying assumption for the construction is
  that we only need to define $h$ over the compact set $[0,D]$, while
  \cite{eberle2016reflection} requires $h$ to be defined over
  $[0,\infty)$. This is due to the fact that our solutions will be
  contained in $\cK$ which has diameter $D$, while in
  \cite{eberle2016reflection} the solutions are unconstrained.  
  
  Now we will describe why the construction of such
  an $h$ proves the lemma.
  The supermartingale property will ensure that
  \begin{equation}
    \label{eq:couplingContract}
    \bbE[h(\|\brho_t\|)] \le e^{-\eta at} \bbE[h(\|\brho_0\|)]
  \end{equation}

  Let $W_h$ denote that $1$-Wasserstein distance corresponding to the
  function $d(x,y)=h(\|x-y\|)$ for $x,y\in \cK$.   In other words, if
  $P$ and $Q$ are probability distributions on $\cK$ and $C(P,Q)$ is
  the set of couplings between $P$ and $Q$, then
  \begin{equation*}
    W_h(P,Q)=\inf_{\Gamma\in C(P,Q)} \int_{\cK\times \cK} h(\|x-y\|)d\Gamma(x,y).
  \end{equation*}
  By the hypotheses on $h$,
  $d(x,y)=d(y,x)\ge 0$ and $d(x,y)=0$ if and only if $x=y$. Thus, $W_h$ is
  a valid Wasserstein distance. 

  Assume that $\Gamma_0$ is an optimal coupling of the initial laws
  $C(\cL(\bx_0^{C,1}),\cL(\bx_0^{C,2}))$ so that
  $$
  W_h(\cL(\bx_0^{C,1}),\cL(\bx_0^{C,2})) = \int_{\cK\times
    \cK}h(\|x-y\|)d\Gamma_0(x,y).
  $$
  Such a coupling exists by Theorem
  4.1 of \cite{villani2008optimal}. Then using this initial coupling on the
  right of \eqref{eq:couplingContract} and minimizing over all couplings
  of the dynamics
  on the left shows that
  \begin{equation}
    \label{eq:auxWassersteinContract}
    W_h(\cL(\bx_t^{C,1}),\cL(\bx_t^{C,2}))\le \bbE[h(\|\brho_t\|)] \le
    e^{-\eta at} W_h(\cL(\bx_0^{C,1}),\cL(\bx_0^{C,1})).
  \end{equation}
  In other words, the law of the continuous-time RSDE is
  contractive with respect to $W_h$.

  By the assumptions that $h(0)=0$, $h'(0)=1$, $h'(x) >0$, and
  $h''(x)<0$, we have that for all $x\in [0,D]$:
  \begin{equation*}
    h'(D)x \le h(x) \le x.
  \end{equation*}
  It then follows from the definition of $W_h$ and $W_1$ that for all
  probability measures $P$ and $Q$ over $\cK$ that
  \begin{equation*}
    h'(D) W_1(P,Q)\le W_h(P,Q) \le W_1(P,Q).
  \end{equation*}
  Combining these inequalities with \eqref{eq:auxWassersteinContract}
  gives \eqref{eq:W1contract} with $c_{\ref{W1mult}} = h'(D)^{-1}$.

  Now we will construct $h$. The restriction of $h$ to the domain of
  $[0,D]$, along with the Lipschitz bound on $\nabla_x f$ will
  enable an explicit construction of $h$ as the solution to a simple
  harmonic oscillator problem. This is in contrast to the more
  abstract construction in terms of integrals from \cite{eberle2016reflection}.

  To ensure that $e^{\eta at}h(\|\brho_t\|)$ is a supermartingale, we
  must ensure that this process is non-increasing on average. Recall
  that $\btau$ is the coupling time so that $e^{\eta
    at}h(\|\brho_t\|)=0$ for $t\ge \btau$. So, it suffices to bound the
  behavior of the process for all $t< \btau$. In this case,
  we require
  that non-martingale terms of $d\left( e^{\eta at}h(\|\brho_t\|)\right)$ are
  non-positive.  
  By It\^o's formula we have that
  \begin{equation}
    \label{eq:superDifferential}
    d(e^{\eta at}h(\|\brho_t\|)) = e^{\eta at} \eta ah(\|\brho_t\|)dt +
    e^{\eta at} h'(\|\brho_t\|)d\|\brho_t\| + \frac{1}{2} e^{\eta a t}
    h''(\|\brho_t\|)(d\|\brho_t\|)^2.
  \end{equation}
  Thus, the desired differential is computed from $d\|\brho_t\|$ and
  $(d\|\brho_t\|)^2$. So, our next goal is to derive these terms.
  
  Let $\bb_t$ be the one-dimensional Brownian motion defined by
  $d\bb_t = \bu_t^\top d\bw_t$. Then for $t<\btau$,  $d\brho_t$ can be expressed as
  \begin{multline}
    \label{eq:differenceLin}
    d\brho_t = \eta (\nabla_x f(\bx_t^{C,2},\bz_{\floor*{t}})-\nabla_x f(\bx_t^{C,1},\bz_{\floor*{t}}))dt + \\
    \sqrt{\frac{8\eta }{\beta}} \bu_td\bb_t
    +\bv_t^{C,2}d\bmu^{C,2}(t)-\bv_t^{C,1} d\bmu^{C,1}(t).
  \end{multline}
  Since $\bvarphi_t^1$ and $\bvarphi_t^2$ are bounded variation
  processes, the quadratic terms are given by
  \begin{equation}
    \label{eq:differenceQuad}
    (d\brho_t)(d\brho_t)^\top = \frac{8\eta}{\beta} \bu_t \bu_t^\top dt.
  \end{equation}
  
  If
  $u=\rho/\|\rho\|$ and $\rho\ne 0$, then the gradient and Hessian of $\|\rho\|$ are given by
  \begin{equation}
    \label{eq:normDerivatives}
    \nabla \|\rho\| = \|\rho\|^{-1} \rho = u \quad \textrm{and} \quad \nabla^2 \|\rho\| = \|\rho\|^{-1}
    I - \|\rho\|^{-1}uu^\top.
  \end{equation}
  Plugging \eqref{eq:differenceLin}, \eqref{eq:differenceQuad}, and
  \eqref{eq:normDerivatives} into It\^o's formula and simplifying
  gives
  
  \begin{align}
    \nonumber
    d\|\brho_t\|&= \eta \bu_t^\top (\nabla_x f(\bx_t^{C,2},\bz_{\floor*{t}})-\nabla_x
                f(\bx_t^{C,1},\bz_{\floor*{t}}))dt +\sqrt{\frac{8\eta}{\beta}}d\bb_t  \\
    \nonumber
    & +
                \bu_t^\top \bv_t^{C,2}d\bmu^{C,2}(t) -\bu_t^\top
                \bv_t^{C,1}d\bmu^{C,1}(t) \\
    \label{eq:reflectionNormBound}
    &\le \eta \ell Ddt + \sqrt{\frac{8\eta}{\beta}} d\bb_t.
  \end{align}
  The simplification in the equality arises because $(d\brho_t)^\top
  (\nabla^2 \|\brho_t\|) (d\brho_t)=0$. The inequality uses two
  simplifications. The first term on the right arises due to the
  Lipschitz bound on $\nabla_x f$ and the diameter bound on $\cK$. The
  other terms can be removed since $\bx_t^{C,1}$ and $\bx_t^{C,2}$ are
  both in $\cK$,  so that $\bv_t^2 \in N_{\cK}(\bx_t^{C,2})$
  implies that $(\bx_t^{C,1}-\bx_t^{C,2})^\top \bv_t^2\le 0$. Likewise,
  $\bv_t^1\in N_{\cK}(\bx_t^{C,1})$ implies that
  $-(\bx_t^{C,1}-\bx_t^{C,2})^\top \bv_t^1\le 0$. Then since $\bmu^1$
  and $\bmu^2$ are non-negative measures, the corresponding terms are
  non-positive.

  Note that we also have that $(d\|\brho_t\|)^2=\frac{8\eta}{\beta}
  dt$. Plugging the bounds for $d\|\brho_t\|$ and $(d\|\brho_t\|)^2$ into
  \eqref{eq:superDifferential} gives
  \begin{multline}
    \label{eq:supermartingaleRequirement}
    d(e^{\eta at}h(\|\brho_t\|))\le \frac{4\eta}{\beta} e^{\eta at}
    \left(
      \frac{a\beta}{4} h(\|\brho_t\|) + \frac{D\ell\beta}{4}
      h'(\|\brho_t\|) + h''(\|\brho_t\|)
    \right)dt +\\
    \sqrt{\frac{8\eta}{\beta}}e^{\eta a t} h'(\|\brho_t\|)d\bb_t.
  \end{multline}
  Thus, we see that a sufficient condition for $e^{\eta a
    t}h(\|\brho_t\|)$ to be a supermartingale is that
  \begin{equation}
    \frac{a\beta}{4} h(x) + \frac{D\ell\beta}{4}
    h'(x) + h''(x)=0
  \end{equation}
  for all $x\in [0,D]$. This is precisely the simple harmonic
  oscillator equation for natural frequency and damping ratio defined
  by:
  \begin{equation*}
    \omega_N^2 = \frac{a\beta}{4} \quad\textrm{and} \quad
    2\xi \omega_N = \frac{D\ell\beta}{4}.
  \end{equation*}

  For any positive $a$, the simple harmonic oscillator has a solution with $h(0)=0$, and $h'(0)=1$. Lemma~\ref{lem:oscillator} from Appendix~\ref{sec:oscillator} gives explicit values of $a$ that lead to $h$ with $h'(x) >0$ and  $h''(x) <0$ for all $x\in D$, and gives explicit expressions for $c_{\ref{W1mult}}=(h'(D))$ in these cases. The result follows by plugging in these values. 
\end{proof}

Note that the function $\bar{f}(x)$ satisfies all of the same assumptions that $f(x,z)$ does, with the further property that it is independent of $z$. As a result, Proposition~\ref{prop:W1contract} applies to $\bx_t^M$ as well. We can use this fact to prove the exponential convergence with respect to $W_1$ result from \ref{lem:convergeToStationary}. 

\paragraph{Proof of Lemma~\ref{lem:convergeToStationary}.}
Lemma \ref{lem:invariance} from Appendix~\ref{sec:invariance} implies that  $\pi_{\beta \bar f}$ is invariant with respect to the dynamics of the process $\bx^M$.

Now, apply Proposition~\ref{prop:W1contract} to $\bx^M=\bx^{M,1}$ and $\bx^{M,2}$ such that  
$\cL(\bx_0^{M,2})=\pi_{\beta \bar f}$ to give
$$
W_1(\cL(\bx_t^M),\pi_{\beta \bar f})\le c_{\ref{W1mult}}e^{-\eta a t} W_1(\cL(\bx_0^M),\pi_{\beta \bar f})
\le c_{\ref{W1mult}} D e^{-\eta a t} .
$$
   The specific form from the lemma
 arises because in this case
  $\cL(\bx_t^{M,2})=\pi_{\beta \bar f}$ for all $t\ge 0$, and also that $W_1(\cL(\bx_0^M),\pi_{\beta \bar f})\le D$,
  since $\cK$ has diameter $D$. 
Setting $c_{\ref{globalContract}}=c_{\ref{W1mult}}D$ gives the result.
  \hfill$\blacksquare$

\section{A Switching Argument for Uniform Bounds}

The following lemma, which is based on a method from \cite{chau2019stochastic}, is useful for deriving $W_1$ bounds from $\cL(\bx_t^C)$ that hold uniformly over time. 
It is proved in Appendix~\ref{sec:lemProofs}.

\begin{lemma}
  \label{lem:switch}
  Assume that $\eta \le 1/2$. 
  Let $\hat \bx$ be a process such that for all $0\le s\le t$, if $\hat \bx_s=\bx_s^C$ then $W_1(\cL(\hat \bx_t),\cL(\bx_t^C))\le g(t-s)$, where $g$ is a monotonically increasing function. If $\hat\bx_0=\bx_0^C$, then for all $t\ge 0$, we have that 
  \begin{equation*}
    W(\cL(\hat\bx_t),\cL(\bx_t^C))\le g(\eta^{-1})\left(1+\frac{c_{\ref{W1mult}}}{1-e^{-a/2}}\right)
  \end{equation*}
\end{lemma}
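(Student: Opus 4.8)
The plan is to exploit the contraction property of the RSDE $\bx^C$ (Proposition~\ref{prop:W1contract}) on a block-by-block basis, restarting the comparison between $\hat\bx$ and $\bx^C$ at each integer multiple of $\eta^{-1}$. First I would partition time into intervals $[j\eta^{-1},(j+1)\eta^{-1})$ for $j=0,1,2,\dots$. On each such block the hypothesis on $\hat\bx$ says that if we were to couple $\hat\bx$ and $\bx^C$ so that they agree at the left endpoint of the block, then the $W_1$ distance at any later point in the block is at most $g$ evaluated at the elapsed time, which is at most $g(\eta^{-1})$ by monotonicity of $g$. The difficulty is that $\hat\bx$ and $\bx^C$ do \emph{not} agree at the block endpoints; they have accumulated some discrepancy. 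The trick (following \cite{chau2019stochastic}) is to introduce, for each block index $j$, an auxiliary process that coincides with $\hat\bx$ for $t< j\eta^{-1}$ and then switches to following the $\bx^C$ dynamics (driven by the same Brownian motion) for $t\ge j\eta^{-1}$. Call this $\hat\bx^{(j)}$. Then $\hat\bx^{(0)}=\bx^C$, and $\hat\bx^{(j)}\to\hat\bx$ in an appropriate sense; more usefully, on the block $[j\eta^{-1},(j+1)\eta^{-1})$ we can compare $\hat\bx^{(j)}$ with $\hat\bx^{(j+1)}$.

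The key estimate is a telescoping bound. At a time $t$ lying in block $j$ (so $j=\floor*{\eta t}$), write
\begin{equation*}
  W_1(\cL(\hat\bx_t),\cL(\bx_t^C)) \le \sum_{i=0}^{j} W_1\bigl(\cL(\hat\bx_t^{(i+1)}),\cL(\hat\bx_t^{(i)})\bigr),
\end{equation*}
where $\hat\bx^{(j+1)}$ is interpreted as $\hat\bx$ itself on this block (since up to time $t<(j+1)\eta^{-1}$ it has not yet switched). For the top term $i=j$: the processes $\hat\bx^{(j+1)}=\hat\bx$ and $\hat\bx^{(j)}$ agree at $j\eta^{-1}$ (both equal $\hat\bx_{j\eta^{-1}}$), and $\hat\bx^{(j)}$ follows the $\bx^C$ dynamics from there, so the hypothesis gives a bound of $g(t-j\eta^{-1})\le g(\eta^{-1})$. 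For a term with $i<j$: the processes $\hat\bx^{(i+1)}$ and $\hat\bx^{(i)}$ agree for $t<i\eta^{-1}$, and on $[i\eta^{-1},\infty)$ both follow $\bx^C$-type dynamics but started from the (different) laws $\cL(\hat\bx^{(i+1)}_{i\eta^{-1}})$ and $\cL(\hat\bx^{(i)}_{i\eta^{-1}})$. Hence by Proposition~\ref{prop:W1contract},
\begin{equation*}
  W_1\bigl(\cL(\hat\bx_t^{(i+1)}),\cL(\hat\bx_t^{(i)})\bigr) \le c_{\ref{W1mult}} e^{-\eta a (t-i\eta^{-1})} W_1\bigl(\cL(\hat\bx_{i\eta^{-1}}^{(i+1)}),\cL(\hat\bx_{i\eta^{-1}}^{(i)})\bigr),
\end{equation*}
and the initial discrepancy at time $i\eta^{-1}$ is itself controlled: $\hat\bx^{(i)}$ agrees with $\hat\bx$ up to $i\eta^{-1}$ except that it switched at $i\eta^{-1}$... more precisely $\hat\bx^{(i+1)}_{i\eta^{-1}}=\hat\bx_{i\eta^{-1}}$ and $\hat\bx^{(i)}_{i\eta^{-1}}$ has evolved under $\bx^C$-dynamics over exactly one block $[( i-1)\eta^{-1}, i\eta^{-1}]$ starting from agreement with $\hat\bx$ at $(i-1)\eta^{-1}$, so that initial discrepancy is at most $g(\eta^{-1})$ by the hypothesis applied to the previous block. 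Since $\eta\le 1/2$ gives $\eta a\cdot\eta^{-1}=a$, the exponent over a full block is $e^{-a}$... (one should double-check whether the clean factor is $e^{-a}$ or $e^{-a/2}$; the statement's $e^{-a/2}$ suggests a block of length $\tfrac12\eta^{-1}$ or an $\eta\le 1/2$ slack is used here, which I would track carefully).

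Putting this together, the sum is bounded by $g(\eta^{-1})\bigl(1 + c_{\ref{W1mult}}\sum_{i\ge 1} e^{-a i/2}\bigr) = g(\eta^{-1})\bigl(1 + \tfrac{c_{\ref{W1mult}}}{1-e^{-a/2}}\cdot(\text{something})\bigr)$, which after summing the geometric series gives exactly the claimed constant $\bigl(1+\tfrac{c_{\ref{W1mult}}}{1-e^{-a/2}}\bigr)$. The main obstacle I anticipate is the bookkeeping for the switching processes $\hat\bx^{(j)}$: one must verify that (i) the coupling used in the hypothesis ``if $\hat\bx_s=\bx_s^C$ then $W_1\le g(t-s)$'' can legitimately be chained with the reflection coupling of Proposition~\ref{prop:W1contract} across block boundaries, and (ii) the discrepancy injected at each switch time is genuinely bounded by $g(\eta^{-1})$ and not something larger. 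The geometric summation and the role of $\eta\le 1/2$ in converting $e^{-\eta a t}$ over a block into a clean ratio are routine once the decomposition is set up correctly.
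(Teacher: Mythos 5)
Your plan is correct and follows essentially the same route as the paper: the paper introduces switching processes that coincide with $\hat\bx$ up to a multiple of $H=\lfloor 1/\eta\rfloor$ and follow the dynamics of $\bx^C$ thereafter, telescopes over blocks, bounds each newly injected discrepancy by $g(H)\le g(\eta^{-1})$ via the hypothesis, and decays it using Proposition~\ref{prop:W1contract}, the per-block factor $e^{-a/2}$ coming exactly from $\eta H\ge 1/2$ under $\eta\le 1/2$ (the slack you flagged). The one bookkeeping fix for your $i<j$ terms is that consecutive switching processes still coincide at time $i\eta^{-1}$; the discrepancy of size at most $g(\eta^{-1})$ is created at the later switch time $(i+1)\eta^{-1}$, and the contraction of Proposition~\ref{prop:W1contract} is then applied on $[(i+1)\eta^{-1},t]$, after which the geometric sum yields the stated constant, just as in the paper.
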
\

We will refer to this as the ``switching lemma'', as the proof follows by constructing a sequence of processes that switch from the dynamics of $\hat\bx$ to the dynamics of $\bx^C$. 

\section{Bounding the Algorithm from the Continuous RSDE}
\label{sec:AtoC}

The goal of this section is to prove Lemma~\ref{lem:AtoC}, which states that the law of the algorithm, $\bx_t^A$, is close to the law of the continuous reflected SDE, $\bx_t^C$. 
To derive this bound, we introduce an intermediate process $\bx^D$, and show that its law is close to that of both $\bx^C$ and $\bx^A$.

Recall the process $\by^C$ defined in Subsection~\ref{ss:processes}. For any initial $\bx_0^D\in\cK$, we define the following iteration on the integers:
$$
\bx_{k+1}^D=\Pi_{\cK}(\bx_k^D+\by^C_{k+1}-\by^C_k),
$$
and set $\bx_t^D=\bx_{\floor*{t}}^D$ for all $t\in \bbR$.

The process, $\bx^D$, can also be interpreted as a Skorokhod solution. Indeed, let $\cD$ be the discretization operator that sets $\cD(x)_t = x_{\floor*{t}}$ for any continuous-time trajectory, $x_t$. Then, provided that $\bx_0^D=\bx_0^C$, we have that $\bx^D=\cS(\cD(\by^C))$. Recall that $\cS$ corresponds to the Skorokhod solution. See Appendix~\ref{appsec:skorohod} for a more detailed explanation of this construction.  

The following lemmas give the specific bounds on the differences between $\cL(\bx_t^C)$ and $\cL(\bx_t^D)$, and between $\cL(\bx_t^A)$ and $\cL(\bx_t^D)$, respectively. They are proved in Appendix~\ref{sec:lemProofs}.

\begin{lemma}
\label{lem:tanakaMean}
\const{tanakaRt}
\const{tanakaConst}
Assume that $\bx_0^D=\bx_0^C$ and $\eta\le 1$. 
There are constants, $c_{\ref{tanakaRt}}$ and $c_{\ref{tanakaConst}}$ such that for all  $t\ge 0$, the following bound holds:
\begin{equation*}
W_1(\cL(\bx_t^C),\cL(\bx_t^D))\le  \bbE\left[\|\bx_t^C-\bx_t^D\| \right] \le \left( \eta \log(4\max\{1,t\})\right)^{1/4}
  \left( c_{\ref{tanakaRt}} \sqrt{\eta t} + c_{\ref{tanakaConst}} \right)
\end{equation*}
  The constants are given by:
  \begin{align*}
    c_{\ref{tanakaRt}}  &=  \sqrt{2 \left( \frac{u+\ell D}{2r} +\frac{n \sigma}{\sqrt{2}r} +\frac{2n\sqrt{2}}{r\sqrt{\beta}}\right)
                          \left(
        \frac{n}{\beta} + Du + 2Dn\sigma
        \right)}
                       \\
    c_{\ref{tanakaConst}} &=
                            \sqrt{2 \left(Du + 2n\sigma + \frac{n}{\beta} \right)} + D \sqrt{\frac{u+\ell D}{2r} +\frac{n \sigma}{\sqrt{2}r} +\frac{2n\sqrt{2}}{r\sqrt{\beta} } }
  \end{align*}
\end{lemma}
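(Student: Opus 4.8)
\emph{Proof plan.}
Since $\bx^C=\cS(\by^C)$ and $\bx^D=\cS(\cD(\by^C))$ are driven by the same Brownian motion $\bw$ and the same sequence $\bz_k$, the pair $(\bx^C_t,\bx^D_t)$ is a coupling of $\cL(\bx^C_t)$ and $\cL(\bx^D_t)$, so $W_1(\cL(\bx^C_t),\cL(\bx^D_t))\le\bbE[\|\bx^C_t-\bx^D_t\|]$ is immediate and the task reduces to bounding the right-hand side; equivalently, to proving a strong ($L^1$) convergence rate for the projected Euler--Maruyama discretization of the reflected SDE \eqref{eq:continuousProjectedLangevin}. My plan is to run an It\^o/Tanaka argument on $t\mapsto\|\bx^C_t-\bx^D_t\|^2$, treating $\bx^D$ as the piecewise-constant process with projection jumps at the integer times, bound the resulting terms, and close with Jensen's inequality $\bbE\|\cdot\|\le(\bbE\|\cdot\|^2)^{1/2}$ --- which is what converts an $O(\eta t)$-type second-moment bound into the $\sqrt{\eta t}$ term of the statement.

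The crucial simplification is that both reflection mechanisms are dissipative. Over any interval on which $\bx^D$ is frozen at some $\bx^D_k\in\cK$, the reflection of $\bx^C$ contributes $\langle\bx^C_s-\bx^D_k,d\bvarphi^C_s\rangle=-\langle\bx^C_s-\bx^D_k,\bv^C_s\rangle\,d\bmu^C(s)\le0$, since $\bv^C_s\in N_\cK(\bx^C_s)$ and convexity of $\cK$ give $\langle\bx^C_s-y,\bv^C_s\rangle\ge0$ for all $y\in\cK$. At the integer times, $\bx^C_{k+1}\in\cK$ and non-expansiveness of $\Pi_\cK$ give $\|\bx^C_{k+1}-\bx^D_{k+1}\|\le\|\bx^C_{k+1}-(\bx^D_k+\by^C_{k+1}-\by^C_k)\|$, so the projection step cannot increase the distance to $\bx^C_{k+1}$ beyond the ``free'' update. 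After these cancellations one is left, per step, with: (a) the quadratic variation $\frac{2\eta n}{\beta}$ of the Brownian term, summing to $\frac{2\eta n}{\beta}t$; (b) a drift cross-term of size $O\!\big(\eta D\,\|\nabla_x f(\bx^C_s,\bz_{\floor*{s}})\|\big)$; and (c) cross-terms between the accumulated error $\bx^C_k-\bx^D_k$ and the one-step reflection increments of $\bx^C$.

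Controlling (b) and (c) rests on two auxiliary estimates. First, a uniform gradient bound: since $\|\nabla_x f(\bx^C_s,\bz_{\floor*{s}})\|\le\|\nabla_x f(0,\bz_{\floor*{s}})\|+\ell\|\bx^C_s\|\le\|\nabla_x f(0,\bz_{\floor*{s}})\|+\ell D$, applying \eqref{eq:subgaussian} at the deterministic point $0$ and using $\|\nabla_x\bar f(0)\|\le u$ gives $\bbE\|\nabla_x f(\bx^C_s,\bz_{\floor*{s}})\|\le u+\ell D+O(n\sigma)$. Second, a bound on the boundary local time: applying It\^o's formula to $\|\bx^C_t\|^2$, the reflection contributes $-2(\bx^C_t)^\top\bv^C_t\,d\bmu^C(t)$, and because $B(0,r)\subseteq\cK$ one has $(\bx^C_t)^\top\bv^C_t\ge r$ whenever $\bx^C_t\in\partial\cK$ (the support-function / normal-cone inequality), so after removing the martingale and using $\|\bx^C_s\|\le D$,
\[ 2r\,\bbE[\bmu^C([0,t])] \le D^2 + 2\eta D\int_0^t\bbE\|\nabla_x f(\bx^C_s,\bz_{\floor*{s}})\|\,ds + \frac{2\eta n}{\beta}\,t . \]
These are exactly what produce the $r^{-1}$ factors and the grouping $\frac{n}{\beta}+Du+2Dn\sigma$ appearing in $c_{\ref{tanakaRt}}$ and $c_{\ref{tanakaConst}}$; a single-window version of the local-time estimate also controls the within-step motion $\bbE\|\bx^C_t-\bx^C_{\floor*{t}}\|$ needed for non-integer $t$. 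Feeding these back into the It\^o/Tanaka expansion --- together with maximal inequalities over the $\floor*{t}+1$ steps, which supply the $\log(4\max\{1,t\})$ factor and, after the final square root, the fourth power --- yields $\bbE\|\bx^C_t-\bx^D_t\|\le(\eta\log(4\max\{1,t\}))^{1/4}(c_{\ref{tanakaRt}}\sqrt{\eta t}+c_{\ref{tanakaConst}})$.

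I expect the main obstacle to be the bookkeeping of the discretization error: arguing the reflection terms are dissipative \emph{uniformly} so that they can simply be discarded, and then accounting for the residual cross-terms (c) --- via summation by parts against the one-step reflection increments of $\bx^C$ and a second-moment version of the local-time bound --- in a way that keeps the total accumulation linear in $t$ rather than quadratic. That the rate degrades to order $\eta^{1/4}$, rather than the order $\eta$ one obtains without constraints, is the price of the reflection, and it is traceable entirely to the boundary-local-time factor.
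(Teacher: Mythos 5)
Your toolkit is the right one and largely matches the paper's: an It\^o/Tanaka argument on the squared difference, dissipativity of the reflection terms against points of $\cK$, non-expansiveness of $\Pi_{\cK}$ at integer times, the boundary local-time estimate obtained from It\^o's formula for $\|\bx^C_t\|^2$ together with the inscribed ball of radius $r$, and maximal inequalities supplying the logarithm and (after nested square roots) the exponent $1/4$. The paper packages the first step by citing Lemma~2.2 of Tanaka and then splits the resulting integral with the gauge/support-function inequality.

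There is, however, a genuine gap in your error accounting that would defeat the stated rate. You claim that after the cancellations one is left, per step, with (a) the Brownian quadratic variation, "summing to $\frac{2\eta n}{\beta}t$," and (b) a drift cross-term of size $O(\eta D\|\nabla_x f\|)$ per step. If those terms really accumulated over the horizon, the best conclusion available is $\bbE\|\bx^C_t-\bx^D_t\|=O(\sqrt{\eta t})$, which at the horizon $t\asymp\eta^{-1}$ where Lemma~\ref{lem:switch} is applied is a \emph{constant}, not the vanishing quantity $(\eta\log t)^{1/4}(c\sqrt{\eta t}+c')$ the lemma asserts; Lemma~\ref{lem:AtoC} and Theorem~\ref{thm:nonconvexLangevin} would then collapse. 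The point you are missing is that $\bx^D$ is driven by the \emph{same} path $\by^C$ sampled at integer times, so the Brownian increment and the drift integral cancel exactly at every integer time. Indeed, your own non-expansiveness step, combined with $\bx^C_{k+1}=\bx^C_k+(\by^C_{k+1}-\by^C_k)+(\bvarphi^C_{k+1}-\bvarphi^C_k)$, gives $\|\bx^C_{k+1}-\bx^D_{k+1}\|\le\|\bx^C_k-\bx^D_k+(\bvarphi^C_{k+1}-\bvarphi^C_k)\|$: the only surviving per-step contribution is the reflection increment of $\bx^C$ interacting with the within-step fluctuation of $\by^C$. Tanaka's Lemma~2.2 applied to the driving paths $\by^C$ and $\cD(\by^C)$ (which agree at integers) delivers this in one line, $\|\bx^C_k-\bx^D_k\|^2\le 2\int_0^k(\by^C_s-\by^C_{\floor*{s}})^\top\bv^C_s\,d\bmu^C(s)$, with no quadratic-variation or drift accumulation at all; your terms (a) and (b) belong only to the single fractional step controlling $\bbE\|\bx^C_t-\bx^C_{\floor*{t}}\|=O(\sqrt{\eta})$. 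A secondary caution: take the pathwise square root of the product $\sup_s\gamma(\cdot\,|\cK)\cdot\int\delta^\star(\cdot\,|\cK)\,d\bmu^C$ \emph{before} taking expectations and then apply Cauchy--Schwarz, as the paper does; closing with Jensen on $\bbE\|\cdot\|^2$ forces you to control second moments of the local time, which is avoidable.
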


\begin{lemma}
  \label{lem:linUpper}
  Assume that $\bx_0^A=\bx_0^D$ and $\eta \le 1$. Then for all $t\ge 0$:
  $$
  W_1(\cL(\bx_t^A),\cL(\bx_t^D)) \le
\left( \eta \log(4\max\{1,t\})\right)^{1/4} 
      \left( c_{\ref{tanakaRt}} \sqrt{\eta t} + c_{\ref{tanakaConst}} \right) ((1+\eta\ell)^t -1)
  $$
\end{lemma}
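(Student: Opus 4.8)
The plan is to compare the two iterations $\bx_k^A$ and $\bx_k^D$ step by step, exploiting that both are projections onto $\cK$ of closely related quantities. Recall that $\bx_{k+1}^A = \Pi_{\cK}(\bx_k^A - \eta \nabla_x f(\bx_k^A, \bz_k) + \sqrt{2\eta/\beta}\,\hat\bw_k)$ while $\bx_{k+1}^D = \Pi_{\cK}(\bx_k^D + \by_{k+1}^C - \by_k^C)$, and that the Brownian increment $\by_{k+1}^C - \by_k^C = \sqrt{2\eta/\beta}\,\hat\bw_k - \eta \int_k^{k+1} \nabla_x f(\bx_s^C, \bz_{\floor*{s}})\,ds$. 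The key observation is that both iterations are driven by the \emph{same} Gaussian variables $\hat\bw_k$ and the same data variables $\bz_k$, so a synchronous coupling is available and I would work directly with $\bbE\|\bx_k^A - \bx_k^D\|$ rather than only the Wasserstein distance. First I would use the fact that $\Pi_{\cK}$ is $1$-Lipschitz to write
\begin{align*}
  \|\bx_{k+1}^A - \bx_{k+1}^D\| &\le \big\| \bx_k^A - \bx_k^D - \eta\big(\nabla_x f(\bx_k^A,\bz_k) - \textstyle\int_k^{k+1}\nabla_x f(\bx_s^C,\bz_{\floor*{s}})\,ds\big) \big\| \\
  &\le \|\bx_k^A - \bx_k^D\| + \eta \big\| \nabla_x f(\bx_k^A,\bz_k) - \textstyle\int_k^{k+1}\nabla_x f(\bx_s^C,\bz_{\floor*{s}})\,ds \big\|.
\end{align*}

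Next I would split the gradient-difference term by inserting $\nabla_x f(\bx_k^D,\bz_k)$ and then $\nabla_x f(\bx_s^C,\bz_{\floor*{s}}) = \nabla_x f(\bx_s^C,\bz_k)$ for $s\in[k,k+1)$ (since $\floor*{s}=k$ there). The $\ell$-Lipschitz property of $\nabla_x f$ in $x$ bounds the first piece by $\ell\|\bx_k^A - \bx_k^D\|$, which combines with the leading term to give a factor $(1+\eta\ell)$. The remaining piece is $\eta\,\bbE\int_k^{k+1}\|\nabla_x f(\bx_k^D,\bz_k) - \nabla_x f(\bx_s^C,\bz_k)\|\,ds \le \eta\ell\int_k^{k+1}\bbE\|\bx_k^D - \bx_s^C\|\,ds$; but $\bx_s^C$ on $[k,k+1)$ is not constant. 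The cleanest route is instead to bound $\bbE\|\bx_k^D - \bx_s^C\| \le \bbE\|\bx_k^D - \bx_k^C\| + \bbE\|\bx_k^C - \bx_s^C\|$ — or, more slickly, to recognize that the entire ``new error introduced at step $k$'' is controlled by $\bbE\|\bx_k^C - \bx_k^D\|$ plus the intra-step fluctuation of $\bx^C$, and that Lemma~\ref{lem:tanakaMean} already bounds $\bbE\|\bx_k^C-\bx_k^D\|$ by $(\eta\log(4\max\{1,k\}))^{1/4}(c_{\ref{tanakaRt}}\sqrt{\eta k} + c_{\ref{tanakaConst}})$. So I would aim for a recursion of the shape
\begin{equation*}
  \bbE\|\bx_{k+1}^A - \bx_{k+1}^D\| \le (1+\eta\ell)\,\bbE\|\bx_k^A - \bx_k^D\| + \eta\ell\cdot(\text{bound from Lemma~\ref{lem:tanakaMean} at time }k),
\end{equation*}
with $\bbE\|\bx_0^A - \bx_0^D\| = 0$ by hypothesis.

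Unrolling this linear recursion gives $\bbE\|\bx_k^A - \bx_k^D\| \le \eta\ell \sum_{j=0}^{k-1}(1+\eta\ell)^{k-1-j} B_j$ where $B_j$ is the Lemma~\ref{lem:tanakaMean} bound at time $j$. Since $B_j$ is monotonically increasing in $j$, I would bound $B_j \le B_t$ (with $t=k$, using the continuous-time convention $\bx_t^A = \bx_{\floor*t}^A$) and pull it out of the sum, leaving $\eta\ell\sum_{j=0}^{k-1}(1+\eta\ell)^{k-1-j} = \eta\ell \cdot \frac{(1+\eta\ell)^k - 1}{\eta\ell} = (1+\eta\ell)^k - 1 = (1+\eta\ell)^t - 1$. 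This produces exactly the claimed bound, after noting $W_1(\cL(\bx_t^A),\cL(\bx_t^D)) \le \bbE\|\bx_t^A - \bx_t^D\|$ for the synchronous coupling. The main obstacle is the bookkeeping around the intra-step term $\int_k^{k+1}\bbE\|\bx_k^D - \bx_s^C\|\,ds$: getting the constant exactly to match Lemma~\ref{lem:tanakaMean}'s $(c_{\ref{tanakaRt}}\sqrt{\eta t}+c_{\ref{tanakaConst}})(\eta\log(4\max\{1,t\}))^{1/4}$ form requires either absorbing the $\bbE\|\bx_k^C-\bx_s^C\|$ fluctuation into the existing constants (it is $O(\sqrt\eta)$ and lower order than $c_{\ref{tanakaConst}}$, so this should be harmless if Lemma~\ref{lem:tanakaMean}'s proof already accounts for such terms) or structuring the decomposition so that the quantity being recursed on is $\bbE\|\bx_k^A - \bx_k^C\|$ split via $\bx_k^D$. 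I would also need $\eta \le 1$ to ensure $(1+\eta\ell)^t-1$ is well-behaved and that the step-size conditions in Lemma~\ref{lem:tanakaMean} are met; the logarithmic and $\sqrt{\eta t}$ factors ride along untouched since they depend only on $t$, not on the recursion index after the monotonicity step.
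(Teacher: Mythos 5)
Your argument is correct, and it reaches the paper's bound by a more elementary route. The paper derives the one-step recursion in continuous time: it forms the difference process $\brho_t=(\bx^A_t+\by^A_t-\by^A_{\lfloor t\rfloor})-(\bx^D_t+\by^C_t-\by^C_{\lfloor t\rfloor})$, bounds $d\|\brho_t\|$ by $\eta\ell(\|\bx^C_t-\bx^D_t\|+\|\bx^D_t-\bx^A_t\|)dt$ via the chain rule, invokes Lemma~\ref{lem:variationAtZero} to handle times where $\brho_t=0$, and only then uses non-expansiveness of $\Pi_{\cK}$ at integer times. You instead work purely at the discrete level: since $\bx^A_{k+1}$ and $\bx^D_{k+1}$ are projections of arguments driven by the same $\hat\bw_k$ and $\bz_k$, the Gaussian increments cancel, and $1$-Lipschitzness of $\Pi_{\cK}$ plus the $\ell$-Lipschitz gradient gives
\begin{equation*}
\bbE\|\bx^A_{k+1}-\bx^D_{k+1}\|\le(1+\eta\ell)\,\bbE\|\bx^A_k-\bx^D_k\|+\eta\ell\int_k^{k+1}\bbE\|\bx^C_s-\bx^D_s\|\,ds,
\end{equation*}
which is exactly the recursion the paper obtains; unrolling, bounding the forcing terms by Lemma~\ref{lem:tanakaMean}, and summing the geometric series yields $((1+\eta\ell)^t-1)$ times the Lemma~\ref{lem:tanakaMean} bound, with $W_1\le\bbE\|\cdot\|$ under the synchronous coupling. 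What your approach buys is the avoidance of the continuous-time bookkeeping and of Lemma~\ref{lem:variationAtZero} altogether; what the paper's formulation buys is consistency with its Skorokhod-solution framework, where both $\bx^A$ and $\bx^D$ are treated as $\cS(\cD(\cdot))$ objects.

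One clarification that removes your main hedge: there is nothing to ``absorb'' into the constants. Lemma~\ref{lem:tanakaMean} is stated for all real $t\ge 0$, and its proof already splits $\|\bx^C_t-\bx^D_t\|\le\|\bx^C_t-\bx^C_{\lfloor t\rfloor}\|+\|\bx^C_{\lfloor t\rfloor}-\bx^D_{\lfloor t\rfloor}\|$, with the intra-step fluctuation contributing the $\sqrt{2\eta(Du+2n\sigma+n/\beta)}$ piece of $c_{\ref{tanakaConst}}$. Since $\bx^D_s=\bx^D_k$ for $s\in[k,k+1)$, your term $\bbE\|\bx^D_k-\bx^C_s\|$ equals $\bbE\|\bx^D_s-\bx^C_s\|$ and is bounded directly by Lemma~\ref{lem:tanakaMean} at the non-integer time $s$; this is precisely how the paper uses it, integrating the bound over each unit interval before invoking monotonicity in $t$. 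No alternative decomposition through $\bbE\|\bx^A_k-\bx^C_k\|$ is needed.
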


Now Lemma~\ref{lem:AtoC} can be proved by combining Lemmas~\ref{lem:switch}, \ref{lem:tanakaMean}, and \ref{lem:linUpper}.

\paragraph{Proof of Lemma~\ref{lem:AtoC}}
  Using the triangle inequality and Lemmas~
  \ref{lem:tanakaMean} and \ref{lem:linUpper} gives
  \begin{align*}
    W_1(\cL(\bx_t^A),\cL(\bx_t^C))
    &\le
      W_1(\cL(\bx_t^A),\cL(\bx_t^D)) + W(\cL(\bx_t^D),\cL(\bx_t^C)) \\
    &\le \left( \eta \log(4\max\{1,t\})\right)^{1/4} 
      \left( c_{\ref{tanakaRt}} \sqrt{\eta t} + c_{\ref{tanakaConst}} \right) (1+\eta\ell)^t.
  \end{align*}
  
  Now we will utilize the switching trick from Lemma~\ref{lem:switch} to simplify the bound. Define $g:[0,t]\to \bbR$ by
  $g(s) =  \left( \eta \log(4\max\{1,t\})\right)^{1/4} 
  \left( c_{\ref{tanakaRt}} \sqrt{\eta s} + c_{\ref{tanakaConst}} \right) (1+\eta\ell)^s.
  $
  Then applying Lemma~\ref{lem:switch} using the bound from $g$ gives the desired bound:
  \begin{align*}
    \MoveEqLeft
    W_1(\cL(\bx_t^A),\cL(\bx_t^C))
    \\
    &\le \left( \eta \log(4\max\{1,t\})\right)^{1/4} 
  \left( c_{\ref{tanakaRt}} + c_{\ref{tanakaConst}} \right) (1+\eta\ell)^{1/\eta} \left(1+\frac{c_{\ref{W1mult}}}{1-e^{-a/2}}\right) 
    \\
    &\le  \left( \eta \log(4\max\{1,t\})\right)^{1/4} 
  \left( c_{\ref{tanakaRt}} + c_{\ref{tanakaConst}} \right) e^\ell \left(1+\frac{c_{\ref{W1mult}}}{1-e^{-a/2}}\right) 
  \end{align*}
  The second inequality uses the fact that for all $\eta >0$,
  \begin{align*}
    (1+\eta\ell)^{1/\eta} \le e^\ell & \iff \frac{\log(1+\eta \ell)}{\eta} \le  \ell
  \end{align*}
  where the right inequality holds due to concavity of the logarithm. 

  Now, for $t\ge 4$ we have $\log(4t)\le 2\log(t)$. So, setting
  $$
  c_{\ref{AtoC}}=2^{1/4} \left( c_{\ref{tanakaRt}} + c_{\ref{tanakaConst}} \right) e^\ell \left(1+\frac{c_{\ref{W1mult}}}{1-e^{-a/2}}\right) 
  $$
  gives the bound $W_1(\cL(\bx_t^A),\cL(\bx_t^C))\le c_{\ref{AtoC}} (\eta \log t)^{1/4}$.
\hfill$\blacksquare$

\section{Averaging Out the External Variables}
\label{sec:averaging}

Now we show that the dynamics of the continuous reflected SDE, $\bx^C$, and its averaged version, $\bx^M$, have similar laws. In particular, we will prove Lemma~\ref{lem:CtoM}. The general strategy is similar to that of Section~\ref{sec:AtoC}. Namely, we devise a new process, $\bx^B$ that fits ``between'' $\bx^C$ and $\bx^M$. Then the desired bound is given by showing that $\cL(\bx_t^M)$ is close to $\cL(\bx_t^B)$ and that $\cL(\bx_t^B)$ is close to $\cL(\bx_t^C)$.

The new process is defined by $\bx^B=\cS(\by^B)$ where
\begin{equation}
  \label{eq:Ybetween}
  \by_t^B = \bx_0^B + \sqrt{\frac{2\eta}{\beta}} \bw_t - \eta \int_0^t \nabla_x
  f(\bx^M_s,\bz_{\floor*{s}}) ds.
\end{equation}
So, we see that $\bx^B$ has similar dynamics to $\bx^C$, but $\bx^M$ is used in place of  $\bx^C$ in the drift term.

The lemmas describing the relations between $\cL(\bx_t^M)$ and $\cL(\bx_t^B)$ and between $\cL(\bx_t^C)$ and $\cL(\bx_t^B)$ are stated below. They are proved in Appendix~\ref{sec:lemProofs}.

\begin{lemma}
  \label{lem:dudley}
  \const{aveTanakaLin}
  \const{aveTanakaRoot}
  \const{aveTanakaTQ}
  Assume that $\bx_0^M = \bx_0^B$ and that $\eta \le 1$. Then is a  positive constants, $c_{\ref{aveTanakaLin}}$, $c_{\ref{aveTanakaRoot}}$, abd $c_{\ref{aveTanakaTQ}}$ such that for all $t\ge 0$,
  \begin{equation*}
W_1(\cL(\bx_t^B),\cL(\bx_t^M))\le    \bbE\left[\|\bx_t^B-\bx_t^M\| \right] \le c_{\ref{aveTanakaLin}} \eta t^{1/2} + c_{\ref{aveTanakaRoot}} \eta^{1/2} t^{1/4} + c_{\ref{aveTanakaTQ}} \eta t^{3/4}
  \end{equation*}
  The constants are given by:
  \begin{align*}
    \nonumber
    c_{\ref{aveTanakaLin}} &=
                             2\sigma \sqrt{n} \\
    c_{\ref{aveTanakaRoot}} &=
\sqrt{\frac{64 n \sigma D\sqrt{2\pi} }{r} } \\
    c_{\ref{aveTanakaTQ}} &=
\sqrt{\frac{128 n \sigma\sqrt{2\pi}}{r} \left(
        \frac{n}{\beta} + Du + 2Dn\sigma
    \right)} 
  \end{align*}
\end{lemma}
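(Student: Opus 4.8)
The plan is to bound $\bbE\|\bx_t^B - \bx_t^M\|$ directly by exploiting that both processes are Skorokhod solutions driven by the \emph{same} Brownian motion and reflected in the same convex set $\cK$. Writing $\bx^B = \cS(\by^B)$ and $\bx^M = \cS(\by^M)$, I would first invoke the standard Lipschitz property of the Skorokhod map for convex domains (recorded in Appendix~\ref{appsec:skorohod}) — or, more precisely, a Tanaka-type estimate that controls $\|\bx_t^B - \bx_t^M\|$ in terms of the driving trajectories. Since both $\by^B$ and $\by^M$ share the term $\sqrt{2\eta/\beta}\,\bw_t$ and the same initial condition, the difference of the driving paths is the bounded-variation drift
\begin{equation*}
  \by_t^B - \by_t^M = -\eta \int_0^t \bigl( \nabla_x f(\bx_s^M,\bz_{\floor*{s}}) - \nabla_x \bar f(\bx_s^M) \bigr)\, ds .
\end{equation*}
So the entire discrepancy comes from replacing the stochastic gradient by its mean, evaluated along the \emph{same} process $\bx^M$.

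Next I would apply Itô's formula to $\|\bx_t^B - \bx_t^M\|$ (or to $\|\bx_t^B-\bx_t^M\|^2$, then take square roots via Jensen). As in the proof of Proposition~\ref{prop:W1contract}, the reflection terms contribute non-positively by convexity of $\cK$ and the normal-cone property, and the Brownian terms cancel because both processes use the identical noise. What remains is a drift inequality of the form
\begin{equation*}
  d\|\bx_t^B - \bx_t^M\| \le \eta \bigl\| \nabla_x f(\bx_t^M,\bz_{\floor*{t}}) - \nabla_x \bar f(\bx_t^M) \bigr\|\, dt ,
\end{equation*}
so that $\bbE\|\bx_t^B-\bx_t^M\| \le \eta \int_0^t \bbE\|\nabla_x f(\bx_s^M,\bz_{\floor*{s}}) - \nabla_x \bar f(\bx_s^M)\|\, ds$. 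The sub-Gaussian assumption \eqref{eq:subgaussian} gives $\bbE\|\nabla_x f(x,\bz)-\nabla_x\bar f(x)\| \le c\sigma\sqrt{n}$ for each fixed $x$, which already yields a term of order $\eta t$ — this is the source of $c_{\ref{aveTanakaLin}}\eta t^{1/2}$ after the switching lemma is applied (the raw $\eta t$ becomes $\eta \cdot \eta^{-1} = 1$ over one block, but here the statement is pre-switching, so I should be more careful: the bound is for the $\bx^B$ vs.\ $\bx^M$ pair directly). The subtlety that produces the $t^{1/2}$ and $t^{1/4}$ exponents rather than a plain $t$ is that consecutive increments over the unit intervals $[\,k, k+1)$ are conditionally independent given $\bx_k^M$: the term $\nabla_x f(\bx_s^M,\bz_k) - \nabla_x\bar f(\bx_s^M)$ is, \emph{for $s$ in block $k$ and after conditioning on the $\bz$'s from earlier blocks and on $\bx_k^M$}, a centered sub-Gaussian vector. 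So I would decompose the time integral into per-block contributions, bound the sum of a martingale-difference-type sequence by its $L^2$ norm (getting $\sqrt{t}$ instead of $t$), and separately control the within-block drift of $\bx^M$ itself, which moves by at most $O(\eta)$ per unit time plus the Brownian fluctuation of size $O(\sqrt{\eta/\beta})$; the latter couples to the $\sqrt{2\pi}/r$ factors through the density bound on the reflected process near $\partial\cK$ (the $n/\beta + Du + 2Dn\sigma$ cluster is recognizably a second-moment bound on the gradient, matching $c_{\ref{tanakaRt}}$). Tracking these pieces gives the three terms $\eta t^{1/2}$, $\eta^{1/2}t^{1/4}$, $\eta t^{3/4}$ with the stated constants.

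The main obstacle I anticipate is the careful handling of the conditional-independence structure across blocks so as to extract the $\sqrt{t}$-type scaling — in particular, showing that the cross-block covariance terms vanish requires that the drift of $\bx^M$ over a block, which depends on $\bz$'s only through earlier blocks via the initial point $\bx_k^M$, is appropriately measurable, and pairing this with a Dudley-type entropy or direct chaining estimate for the sub-Gaussian increments (hence the name of the lemma). A secondary technical point is justifying the Itô expansion of the non-smooth function $\|\cdot\|$ at $0$ and the reflection-term sign, both of which can be dispatched exactly as in Proposition~\ref{prop:W1contract}. Once the per-block $L^2$ bounds and the moment bounds on $\nabla_x f$ and on the reflected-process excursions are in hand, the final inequality follows by summing a geometric/arithmetic series over the at most $\lceil t\rceil$ blocks and collecting constants.
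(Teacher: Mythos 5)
There is a genuine gap at the heart of your argument. After applying the chain rule to $\|\bx_t^B-\bx_t^M\|$ and discarding the reflection terms, you arrive at the pathwise inequality $d\|\bx_t^B-\bx_t^M\|\le \eta\|\nabla_x f(\bx_t^M,\bz_{\floor*{t}})-\nabla_x\bar f(\bx_t^M)\|\,dt$. Once you have taken the norm inside the time integral, the averaging cancellation you later appeal to is irretrievably lost: $\bbE\|\nabla_x f(x,\bz)-\nabla_x\bar f(x)\|$ is bounded \emph{below} by a positive constant of order $\sigma$, so this route can only ever give $\bbE\|\bx_t^B-\bx_t^M\|=O(\eta t)$, not the stated $\eta t^{1/2}+\eta^{1/2}t^{1/4}+\eta t^{3/4}$. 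This is not a cosmetic loss: Lemma~\ref{lem:CtoM} feeds this bound into the switching lemma at $t\approx\eta^{-1}$, where $\eta t=O(1)$ while the lemma's bound gives $O(\eta^{1/4})$, so your version is too weak for the downstream argument. Your paragraph about conditional independence across blocks correctly identifies \emph{why} a $\sqrt{t}$-type improvement should hold, but it cannot be executed inside the differential inequality you set up, because the unit vector $(\bx_t^B-\bx_t^M)/\|\bx_t^B-\bx_t^M\|$ depends on the past $\bz$'s, so the drift term is no longer conditionally centered and no martingale-difference structure survives.

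The paper's proof avoids this by never differentiating the norm of the difference of the \emph{reflected} processes. Instead it compares the two Skorokhod solutions through their driving paths via Tanaka's inequality (Lemma 2.2 of \cite{tanaka1979stochastic}), which yields $\|\bx_t^B-\bx_t^M\|^2\le\|\by_t^B-\by_t^M\|^2+2\int_0^t(\by_t^B-\by_t^M-\by_s^B+\by_s^M)^\top(\bv_s^M d\bmu^M(s)-\bv_s^B d\bmu^B(s))$. The first term is handled exactly by the cancellation you describe: conditionally on the Brownian path, $\by_t^M=\bbE[\by_t^B\mid\cF_\infty]$ and $\by_t^B-\by_t^M$ is a sum of independent sub-Gaussian block contributions, giving $\bbE\|\by_t^B-\by_t^M\|\le 2\sigma\eta\sqrt{nt}$ (the $c_{\ref{aveTanakaLin}}$ term). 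The reflection cross-term cannot simply be dropped (its sign is not controlled), and is instead bounded by the H\"older-type inequality $x^\top y\le\gamma(x|\cK)\delta^\star(y|\cK)$ plus Cauchy--Schwarz: the factor $\bbE\int_0^t\delta^\star(\bv_s^M|\cK)d\bmu^M(s)$ is controlled by \eqref{eq:supportBound} (that is where the cluster $\tfrac{n}{\beta}+Du+2Dn\sigma$ comes from, rather than a second moment of the gradient), while $\bbE\sup_{s\in[0,t]}\gamma(\by_t^B-\by_t^M-\by_s^B+\by_s^M|\cK)$ is bounded through $\gamma(\cdot|\cK)\le r^{-1}\|\cdot\|$ and a Dudley entropy-integral bound, since the coordinates of $\by_s^B-\by_s^M$ have conditionally sub-Gaussian increments with respect to the metric $\eta\sigma\sqrt{|s-\hat s|}$; this is the origin of the $\sqrt{2\pi}/r$ factors, not a boundary density estimate. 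If you want to salvage your approach, you would need to redirect it along these lines: keep the vector-valued difference of the driving paths, exploit the conditional independence there, and use chaining for the supremum appearing in the reflection cross-term.
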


\begin{lemma}
  \label{lem:gronwall}
  Assume that $\bx_0^C=\bx_0^B$  and $\eta \le 1$. Then for all $t\ge 0$,
  \begin{equation*}
    W_1(\cL(\bx_t^B),\cL(\bx_t^C)) \le \left( c_{\ref{aveTanakaLin}} \eta t^{1/2} + c_{\ref{aveTanakaRoot}} \eta^{1/2} t^{1/4} + c_{\ref{aveTanakaTQ}} \eta t^{3/4}\right) (e^{\eta \ell t}-1).
  \end{equation*}
  \end{lemma}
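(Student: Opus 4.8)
The plan is to exploit the fact that $\bx^B$ and $\bx^C$ are both Skorokhod solutions driven by trajectories $\by^B$ and $\by^C$ that differ only through the argument of the gradient in their respective drift terms — $\by^B$ uses $\nabla_x f(\bx^M_s,\bz_{\floor*{s}})$ while $\by^C$ uses $\nabla_x f(\bx^C_s,\bz_{\floor*{s}})$. Since reflection at a convex set is $1$-Lipschitz in the Skorokhod metric (a standard Lipschitz property of $\cS$, which is invoked elsewhere in the paper), we have $\|\bx^B_t-\bx^C_t\| \le \sup_{0\le s\le t}\|\by^B_s-\by^C_s\|$ — or, working directly with the reflected dynamics, one bounds $\frac{d}{dt}\|\bx^B_t-\bx^C_t\|$ pathwise and finds that the Brownian and reflection terms drop out (the Brownian increments are identical, and the normal-cone terms contribute non-positively by convexity of $\cK$, exactly as in the proof of Proposition~\ref{prop:W1contract}). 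What remains is
\begin{equation*}
\|\bx^B_t-\bx^C_t\| \le \eta \int_0^t \left\| \nabla_x f(\bx^M_s,\bz_{\floor*{s}}) - \nabla_x f(\bx^C_s,\bz_{\floor*{s}}) \right\| ds \le \eta \ell \int_0^t \left( \|\bx^M_s-\bx^B_s\| + \|\bx^B_s-\bx^C_s\| \right) ds,
\end{equation*}
using the $\ell$-Lipschitz property of $\nabla_x f(\cdot,z)$ and the triangle inequality $\|\bx^M_s-\bx^C_s\|\le\|\bx^M_s-\bx^B_s\|+\|\bx^B_s-\bx^C_s\|$.

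Next I would take expectations and set $\phi(t) = \bbE[\|\bx^B_t-\bx^C_t\|]$, $\psi(t) = \bbE[\|\bx^M_t-\bx^B_t\|]$. The inequality above becomes $\phi(t) \le \eta\ell\int_0^t \psi(s)\,ds + \eta\ell\int_0^t \phi(s)\,ds$. From Lemma~\ref{lem:dudley} we already have the bound $\psi(s) \le c_{\ref{aveTanakaLin}}\eta s^{1/2} + c_{\ref{aveTanakaRoot}}\eta^{1/2}s^{1/4} + c_{\ref{aveTanakaTQ}}\eta s^{3/4} =: G(s)$, and $G$ is monotonically increasing, so $\eta\ell\int_0^t\psi(s)\,ds \le \eta\ell t\, G(t) \le \ell t\, G(t)$. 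Actually, to land exactly the stated form $G(t)(e^{\eta\ell t}-1)$, the cleanest route is to write the inequality as $\phi(t)\le \eta\ell\int_0^t (G(s)+\phi(s))\,ds$ and apply Gr\"onwall's inequality in integral form to the function $s\mapsto G(s)+\phi(s)$, or equivalently apply the standard Gr\"onwall consequence: if $\phi(t)\le A(t) + \eta\ell\int_0^t\phi(s)\,ds$ with $A(t)=\eta\ell\int_0^t G(s)\,ds$ nondecreasing, then $\phi(t)\le A(t)e^{\eta\ell t}$. Combining with $A(t)\le \eta\ell t\,G(t)$ and the elementary bound $\eta\ell t\, e^{\eta\ell t} \le e^{\eta\ell t}-1 \cdot(\text{something})$ — more precisely $x e^x \le e^x(e^x-1)/\!\sim$, one should instead note $\eta\ell t \le e^{\eta\ell t}-1$ so that $A(t)\le G(t)(e^{\eta\ell t}-1)$ already, and then the $e^{\eta\ell t}$ factor from Gr\"onwall must be absorbed. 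The honest bookkeeping is: $\phi(t)\le G(t)(e^{\eta\ell t}-1)$ follows because $\eta\ell\int_0^t G(s)e^{\eta\ell(t-s)}\,ds \le G(t)\,\eta\ell\int_0^t e^{\eta\ell(t-s)}\,ds = G(t)(e^{\eta\ell t}-1)$, using monotonicity of $G$ and the variation-of-constants form of the Gr\"onwall estimate. Finally $W_1(\cL(\bx^B_t),\cL(\bx^C_t)) \le \bbE[\|\bx^B_t-\bx^C_t\|] = \phi(t)$ since the two processes are defined on a common probability space, which gives the claimed bound.

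The main obstacle is not the Gr\"onwall step itself but making the pathwise differential inequality for $\|\bx^B_t-\bx^C_t\|$ rigorous: one must justify that the reflection terms $\bv^B d\bmu^B$ and $\bv^C d\bmu^C$ contribute non-positively to $d\|\bx^B_t-\bx^C_t\|$ (this is the same convexity argument as in \eqref{eq:reflectionNormBound}, applied with the normal cones at $\bx^B_s$ and $\bx^C_s$), and that the stochastic integral terms cancel exactly because both processes are driven by the \emph{same} Brownian motion $\bw_t$ with no reflection-coupling modification here. Care is also needed with the non-differentiability of $\|\cdot\|$ at the origin, but since any contribution there is negligible (the set of times where $\bx^B_s=\bx^C_s$ contributes nothing to the growth), this is handled by the usual approximation $\sqrt{\|\cdot\|^2+\varepsilon}$ and letting $\varepsilon\to 0$, or by appealing directly to the Skorokhod-map Lipschitz property cited in Appendix~\ref{appsec:skorohod}. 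Once the deterministic integral inequality $\phi(t)\le \eta\ell\int_0^t(G(s)+\phi(s))\,ds$ is in hand, the rest is routine.
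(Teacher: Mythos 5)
Your proposal is correct and follows essentially the same route as the paper: the Brownian terms cancel, the reflection terms are dropped by the normal-cone/convexity argument, the Lipschitz bound plus triangle inequality yields the integral inequality, and Gr\"onwall in variation-of-constants form together with monotonicity of $G$ gives $G(t)(e^{\eta\ell t}-1)$. The only cosmetic differences are that you take expectations before applying Gr\"onwall (the paper applies it pathwise first, then uses Lemma~\ref{lem:dudley}) and that you handle the non-differentiability at $\bx^B_s=\bx^C_s$ by smoothing, where the paper invokes its Lemma~\ref{lem:variationAtZero}; both are immaterial.
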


  \paragraph*{Proof of Lemma~\ref{lem:CtoM}}
    Using the triangle inequality along with Lemmas~\ref{lem:dudley} and \ref{lem:gronwall} shows that
  \begin{align*}
    W_1(\cL(\bx_t^M),\cL(\bx_t^C))
    &\le
      W_1(\cL(\bx_t^M),\cL(\bx_t^B)) + W_1(\cL(\bx_t^B),\cL(\bx_t^C)) \\
    &\le \left( c_{\ref{aveTanakaLin}} \eta t^{1/2} + c_{\ref{aveTanakaRoot}} \eta^{1/2} t^{1/4}+c_{\ref{aveTanakaTQ}}\eta t^{3/4}\right) e^{\eta \ell t}
  \end{align*}
  Using Lemma~\ref{lem:switch} along with the fact that $\eta^{1/2}\le \eta^{1/4}$ gives $W_1(\cL(\bx_t^M),\cL(\bx_t^C))\le c_{\ref{CtoM}} \eta ^{1/4}$ with
  $$
  c_{\ref{CtoM}} =  (c_{\ref{aveTanakaLin}} + c_{\ref{aveTanakaRoot}} + c_{\ref{aveTanakaTQ}}) e^{\ell} \left(1+\frac{c_{\ref{W1mult}}}{1-e^{-a/2}}\right).
  $$
\hfill$\blacksquare$

\section{Conclusions and Future Work}
    In this paper, we have given non-asymptotic bounds for a projected stochastic gradient Langevin algorithm applied to non-convex functions with IID external random variables. In particular, we demonstrated convergence of sampling methods with respect to the $1$-Wasserstein distance and showed how the sampling results can be utilized for non-convex learning. The results were derived using a novel approach contraction analysis for reflected SDEs. The contraction analysis utilizes connections with simple harmonic oscillator problems to get explicit contraction rate bounds. Future work will include a variety of extensions. The assumption of a compact convex domain, $\cK$, can likely be relaxed to a general class of non-convex non-compact domains. This would require use of more general analysis of Skorokhod problems as in \cite{lions1984stochastic} along with a dissipativity condition to ensure that the reflected SDEs remain contractive. The assumptions that $\bz_k$ are IID and that $\nabla_x f(x,\bz_k)$ is sub-Gaussian will also be relaxed in future work. The eventual goal will be to use the method for problems in time-series analysis, control, and reinforcement learning. 

    \acks{The author would like to thank Tyler Lekang, Jonah Roux, Suneel Sheikh, Chuck Hisamoto, and Michael Schmit for helpful discussions. The author acknowledges funding from NASA STTR 19-1-T4.03-3451 and NSF CMMI 1727096}

\if\ARXIV0 
\newpage
\fi
    
\bibliography{cool-refs}
\appendix

\section{Bounds on Simple Harmonic Oscillator Coefficients.}
\label{sec:oscillator}
\begin{lemma}
   \label{lem:oscillator}
   Consider the simple harmonic oscillator
   $$
    \omega_N^2 h(x) + 2\xi\omega_N
    h'(x) + h''(x)=0
    $$
    with
  \begin{equation*}
    \omega_N = \frac{\sqrt{a\beta}}{2} \quad \textrm{and} \quad
    \xi = \frac{D\ell}{4}\sqrt{\frac{\beta}{a}}.
  \end{equation*}
  and boundary condition $h(0)=0$ and $h'(0)=1$. 

  For any positive values of $D$, $\ell$, and $\beta$ if $a$ is set to 
  $$
    a=\frac{D^2\ell^2\beta}{16}\left(1-\tanh^2\left(
       \frac{D^2\ell\beta}{8}\right)\right)
  $$
  then $h'(x) >0$ and $h''(x)<0$ for all $x\in[0,D]$ and
  $$
  (h'(D))^{-1} = \frac{e^{D\omega_N \xi}}{\cosh(D\omega_N\sqrt{\xi^2-1})-\frac{\xi}{\sqrt{\xi^2-1}}\sinh(D\omega_N\sqrt{1-\xi^2})}
  $$

  If $D^2\ell \beta <8$, then $a$ can be set to $a=\frac{4}{D^2\beta}$ and in this case $h'(x) >0$ and $h''(x)<0$ for all $x\in[0,D]$ and
  $$
  (h'(D))^{-1} =
\frac{e^{D\omega_N \xi}}{\cos(D\omega_N\sqrt{1-\xi^2})-\frac{\xi}{\sqrt{1-\xi^2}}\sin(D\omega_N\sqrt{1-\xi^2})}.
  $$

\end{lemma}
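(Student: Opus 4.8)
The plan is to solve the ODE explicitly as a constant-coefficient linear second-order equation and then check the sign conditions and the value of $h'(D)$ for each of the two prescribed choices of $a$. The characteristic polynomial is $\lambda^2 + 2\xi\omega_N\lambda + \omega_N^2$, with roots $\lambda_{\pm} = -\xi\omega_N \pm \omega_N\sqrt{\xi^2-1}$, so the sign of $\xi^2-1$ splits the argument into an overdamped regime ($\xi>1$) and an underdamped regime ($\xi<1$); these correspond exactly to the two cases in the statement. Since $\xi^2 = D^2\ell^2\beta/(16a)$, we have $\xi>1$ iff $a < D^2\ell^2\beta/16$, and the first prescribed value has $1-\tanh^2(\cdot)\in(0,1)$ hence lands in the overdamped regime, while the second, $a=4/(D^2\beta)$, gives $\xi = D^2\ell\beta/8$, which is $<1$ precisely when $D^2\ell\beta<8$.

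For the overdamped case, imposing $h(0)=0$ and $h'(0)=1$ gives $h(x) = \frac{1}{\omega_N\sqrt{\xi^2-1}}e^{-\xi\omega_N x}\sinh\!\big(\omega_N\sqrt{\xi^2-1}\,x\big)$. Writing $\delta = \xi\omega_N = D\ell\beta/8$, $\gamma = \omega_N\sqrt{\xi^2-1}$, and $\theta := D\delta = D^2\ell\beta/8$, differentiation gives $h'(x) = e^{-\delta x}\big(\cosh(\gamma x) - \frac{\delta}{\gamma}\sinh(\gamma x)\big)$, which (using $\delta>\gamma>0$) is positive on $[0,x_\star)$, where $x_\star$ is the unique point with $\tanh(\gamma x_\star) = \gamma/\delta$. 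The point of the choice $a = \frac{D^2\ell^2\beta}{16}(1-\tanh^2\theta)$ is that it forces $\gamma = \delta\tanh\theta$: indeed $\omega_N^2 = a\beta/4 = \delta^2(1-\tanh^2\theta)$, so $\gamma^2 = \delta^2-\omega_N^2 = \delta^2\tanh^2\theta$. Then $x_\star = \theta/\gamma = D/\tanh\theta > D$, so $h'>0$ on $[0,D]$. The condition $h''<0$ comes for free from the ODE, $h'' = -\omega_N^2 h - 2\xi\omega_N h'$: on $(0,D]$ both $h$ and $h'$ are positive ($h$ because $h(0)=0$ and $h$ is increasing there), while at $x=0$ one has $h''(0) = -2\xi\omega_N<0$. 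Evaluating $h'(D) = e^{-D\omega_N\xi}\big(\cosh(D\omega_N\sqrt{\xi^2-1}) - \frac{\xi}{\sqrt{\xi^2-1}}\sinh(D\omega_N\sqrt{\xi^2-1})\big)$ and inverting yields the stated $(h'(D))^{-1}$.

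The underdamped case is parallel. With $a = 4/(D^2\beta)$ one gets $\omega_N = 1/D$ and $\xi = D^2\ell\beta/8\in(0,1)$; the boundary conditions give $h(x) = \frac{1}{\omega_d}e^{-\xi\omega_N x}\sin(\omega_d x)$ with $\omega_d = \omega_N\sqrt{1-\xi^2}$, so $h'(x) = e^{-\xi\omega_N x}\big(\cos(\omega_d x) - \frac{\xi\omega_N}{\omega_d}\sin(\omega_d x)\big) = \sec(\arcsin\xi)\,e^{-\xi\omega_N x}\cos(\omega_d x + \arcsin\xi)$. Hence $h'>0$ on $[0,D]$ provided $\omega_d D + \arcsin\xi < \pi/2$; writing $\xi = \cos\psi$ with $\psi\in(0,\pi/2)$, we have $\omega_d D = \sqrt{1-\xi^2} = \sin\psi$ and $\arcsin\xi = \pi/2-\psi$, so the condition reads $\sin\psi<\psi$, which is elementary. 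As before $h''<0$ follows from the ODE together with $h>0$ on $(0,D]$ (here $\omega_d D = \sin\psi<\pi$ keeps $\sin(\omega_d x)>0$), and evaluating and inverting $h'(D)$ gives the second formula.

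The main obstacle — really the only place where anything nontrivial occurs — is showing that the first zero of $h'$ lies beyond $D$ for these particular $a$: in the overdamped case this rests on the hyperbolic identity $\frac{1+\tanh\theta}{1-\tanh\theta} = e^{2\theta}$ together with the built-in relation $\gamma = \delta\tanh\theta$, and in the underdamped case on the inequality $\sin\psi<\psi$. Everything else is routine: solving a linear ODE with the given boundary conditions, reading the sign of $h''$ off the equation itself, and algebraic simplification of $h'(D)$.
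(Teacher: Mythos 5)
Your proposal is correct and follows essentially the same route as the paper: solve the constant-coefficient ODE explicitly, split into the overdamped and underdamped regimes, show the chosen $a$ pushes the first zero of $h'$ past $D$ (via $\tanh\theta<1$, respectively $\sin\psi<\psi$, exactly the paper's inequalities), and read $h''<0$ off the oscillator equation. Your exact computation of the critical point $x_\star=D/\tanh\theta$ and the phase-shifted form of $h'$ are only cosmetic repackagings of the paper's inequality-based argument.
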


\begin{proof}
We will tune $a$ to ensure that $h'(x) >0$ for all
  $x\in [0,D]$. Then since $h(x)\ge 0$ for $x\in [0,D]$, the simple
  harmonic oscillator equation implies that $h''(x) <0 $ for $x\in
  [0,D]$.

  We will consider the underdamped case with $\xi < 1$ and the
  overdamped case with $\xi > 1$. We will see that for any collection
  of parameters, $a$ can be chosen to give an overdamped solution with
  the desired properties. However, when $D^2\ell \beta < 8$, a larger
  $a$ can be chosen which gives rise to an underdamped solution with
  the desired properties.

  First we consider the underdamped case. The expression for $\xi$
  from \eqref{eq:oscillatorConstants} shows
  that 
  \begin{equation}
    \label{eq:underdampedLB}
    \xi^2 < 1 \iff \frac{D^2\ell^2 \beta}{16} < a.
  \end{equation}
  Now we will try to maximize $a$ while ensuring that
  $h'(x)>0$. Standard methods from linear differential equations show
  that $h$ and its derivative are given by:
  \begin{align*}
    h(x) &= e^{-x\omega_N \xi
           }\frac{\sin(x\omega_N\sqrt{1-\xi^2})}{\omega_N\sqrt{1-\xi^2}}
    \\
    h'(x)&=
           \frac{e^{-x\omega_N\xi}}{\sqrt{1-\xi^2}}(\sqrt{1-\xi^2}\cos(x\omega_N
           \sqrt{1-\xi^2})-\xi \sin(x\omega_N\sqrt{1-\xi^2})).
  \end{align*}
  The smallest $x>0$ such that  $h'(x)=0$ is the smallest
  $x>0$ such
  that $$(\cos(x\omega_N\sqrt{1-\xi^2}),\sin(x\omega_N\sqrt{1-\xi^2}))=(\xi,\sqrt{1-\xi^2}).$$

  Using the fact that $\sin'(\theta) < 1$ for $\theta\ne 2\pi k$, we
  have that
  \begin{equation*}
    \sin(D\omega_N \sqrt{1-\xi^2})< D \omega_N \sqrt{1-\xi^2}.
  \end{equation*}
  So, if we choose $\omega_N \le D^{-1}$ we will have $\sin(x\omega_N
  \sqrt{1-\xi^2})< \sqrt{1-\xi^2}$ and thus $h''(x) > 0$ for all $x\in
  [0,D]$. Plugging in the expression for $\omega_N$ from
  \eqref{eq:oscillatorConstants} shows that $a$ must satisfy
  \begin{equation}
    \label{eq:underdampedUB}
    a \le \frac{4}{D^2\beta}.
  \end{equation}
  Comparing \eqref{eq:underdampedLB} and \eqref{eq:underdampedUB}
  shows that a suitable $a$ can only be chosen when $D^2\ell\beta <
  8$. The $a$ from the lemma statement is chosen by taking the largest
  possible value. Note that by construction, $a$
  satisfies \eqref{eq:underdampedLB} and so $\xi < 1$ in this case. 

  Now we consider the overdamped case, so that $\xi^2 > 1$. In this
  case, standard methods from linear differential equations show that
  $h$ and its derivative are given by:
  \begin{align*}
    h(x) &= e^{-x\omega_N
           \xi}\frac{\sinh(x\omega_N\sqrt{\xi^2-1})}{\omega_N
           \sqrt{\xi^2-1}} \\
    h'(x)&= \frac{e^{-x\omega_N\xi}}{\sqrt{\xi^2-1}}
           (\sqrt{\xi^2-1}\cosh(x\omega_N\sqrt{\xi^2-1})-\xi\sinh(x\omega_N\sqrt{\xi^2-1}))
  \end{align*}
  Thus $h'(x)=0$ precisely when
  $\tanh(x\omega_N\sqrt{\xi^2-1})=\frac{\sqrt{\xi^2-1}}{\xi}$. Since
  $\tanh$ is monotonically increasing, if
  $\tanh(D\omega_N\sqrt{\xi^2-1}) < \frac{\sqrt{\xi^2-1}}{\xi}$, then
  we will have that $h'(x)>0$ for all $x\in[0,D]$.

  Plugging in the expressions for $\omega_N$ and $\xi$ gives for all $a>0$
  \begin{equation*}
    \tanh(D\omega_N\sqrt{\xi^2-1})=
    \tanh\left(\frac{D}{2}
      \sqrt{\frac{D^2\ell^2\beta^2}{16}-a\beta}\right) < \tanh\left(
      \frac{D^2\ell\beta}{8}
      \right).
    \end{equation*}
    So to ensure that $h'(x)>0$ for all $x\in[0,D]$, it suffices to choose $a$ so that $\frac{\sqrt{\xi^2-1}}{\xi}$ achieves the
    bound on the right. In particular, after some algebra we find that
    \begin{equation*}
      a = \frac{D^2\ell^2\beta}{16}\left(1-\tanh^2\left( \frac{D^2\ell\beta}{8}\right)\right)>0.
    \end{equation*}
    Plugging this expression into the definition of $\xi$ shows that 
     $\xi^2 >1$, and so the oscillator is indeed overdamped. Thus, $h$
     is well-defined and
     has all the desired properties, so the proof is complete. 
   \end{proof}

\section{Proofs of Supporting Lemmas}

\label{sec:lemProofs}

The proofs below use the following notation from \cite{rockafellar2015convex}. Let $\gamma(x|\cK)$ denote the gauge function:
\begin{equation*}
  \gamma(x|\cK) = \inf \{t >0 | x \in t\cK\}
\end{equation*}
and let $\gamma^*(x|\cK)$ be the support function:
\begin{equation*}
  \delta^\star(x|\cK) = \sup\{y^\top x | y\in\cK\}.
\end{equation*}
By the assumption on $\cK$, it follows that $\gamma(x|\cK)\le r^{-1}
\|x\|$.

\paragraph{Proof of Lemma~\ref{lem:switch}}
  Consider the family of switching processes $\hat \bx_{s,t}^C$ be the process such that $\hat \bx_{s,t}^C=\hat \bx_t$ for $t\le s$ and then for $t\ge s$, the dynamics of $\hat \bx_{s,t}^C$ follow (\ref{eq:continuousProjectedLangevin}), the definition of $\bx_t^C$.

  Let $H=\floor*{1/\eta}$ and assume that $t\in [kH,(k+1)H)$. It follows that $\hat \bx_{0,t}^C=\bx_t^C$ and $\hat \bx_{(k+1)H,t}^C=\hat \bx_t$. The triangle inequality then implies that
  \begin{align*}
    W_1(\cL(\bx_t^C),\cL(\hat\bx_t))
    &\le
    \sum_{i=0}^{k} W_1(\cL(\hat\bx_{iH,t}^C),\cL(\hat\bx_{(i+1)H,t}^C)) 
  \end{align*}

  For $i< k$, using Proposition \ref{prop:W1contract}, followed by the hypothesis gives
  \begin{align*}
    W_1(\cL(\hat\bx_{iH,t}^C),\cL(\hat\bx_{(i+1)H,t}^C))
    &\le c_{\ref{W1mult}}e^{-\eta a (t-(i+1)H)}W(\cL(\hat\bx_{iH,(i+1)H}^C),\cL(\hat \bx_{(i+1)H})) \\
    &\le c_{\ref{W1mult}}e^{-\eta a (t-(i+1)H)} g(H) \\
    &\le c_{\ref{W1mult}}e^{- a (k-i-1)/2}g(\eta^{-1})
  \end{align*}
  The final inequality uses the facts that $\frac{1}{2}\le \eta H \le 1$ along with monotonicity of $g$. The lower bound on $\eta H$ arises because $H\ge \eta^{-1}-1$ and so $\eta H \ge 1-\eta \ge 1/2$, since $\eta \le 1/2$. 

  It follows that the first $k$ terms of the sum can be bounded by:
  \begin{align*}
    \sum_{i=0}^{k-1} W_1(\cL(\hat\bx_{iH,t}^C),\cL(\hat\bx_{(i+1)H,t}^C))
    &\le \sum_{i=0}^{k-1}c_{\ref{W1mult}}e^{- a (k-i-1)/2}g(1) \\
    &\le \frac{c_{\ref{W1mult}}g(1)}{1-e^{-a/2}}
  \end{align*}
  
  For $i=k$ the hypothesis gives
  \begin{align*}
    W_1(\cL(\hat\bx_{iH,t}^C),\cL(\hat\bx_{(i+1)H,t}^C))
    &=
     W_1(\cL(\hat\bx_{kH,t}^C),\cL(\hat\bx_{t}))  
    \\
    &\le g(t-kH) \le g(\eta^{-1})
  \end{align*}
  Adding this to the bound from the first $k$ terms gives the result.
  \hfill$\blacksquare$

  \paragraph*{Proof of Lemma~\ref{lem:tanakaMean}}
    The basic idea follows arguments from \cite{bubeck2018sampling}. However, we must deviate from the method to account for the extra randomness due to $\bz_i$. 
  First note that since $\bx_{t}^D=\bx_{\floor*{t}}^D$, we get the following triangle inequality bound:
  \begin{align}
    \label{eq:tailSplit}
    \|\bx_t^C-\bx_t^D\|
    &=\|\bx_t^C-\bx_{\floor*{t}}^C+\bx_{\floor*{t}}^C-\bx_{\floor*{t}}^D\| \\
    &\le \|\bx_t^C-\bx_{\floor*{t}}^C\| +  \|\bx_{\floor*{t}}^C-\bx_{\floor*{t}}^D\|.
  \end{align}

  For simpler notation, set $\floor*{t}=k$.
  
  The first term can be estimated directly via It\^o's rule:
  \begin{multline}
    \label{eq:itoCS}
    d\|\bx_t^C-\bx_{k}^C\|^2
    \\
    =2(\bx_t^C-\bx_k^C)^\top \left(-\eta \nabla_x f(\bx_t^C,\bz_{\floor*{t}})dt -\bv_s d\bmu(s)
      +\sqrt{\frac{2\eta}{\beta}}d\bw_t
      \right) + \frac{2\eta n}{\beta}dt. 
  \end{multline}

  Note that the inner products between the state difference and gradient terms can be bounded by:
  \begin{align}
    \nonumber
    \MoveEqLeft
    (\bx_t^C-\bx_k^C)^\top \nabla_x \bar f(\bx_t^C) + 
    (\bx_t^C-\bx_k^C)^\top \left(\nabla_x f(\bx_t^C,\bz_{\floor*{t}})
    -\nabla_x \bar f(\bx_t^C)  \right) \\
    \label{eq:gradTriangle}
    &\le Du + D \|\nabla_x f(\bx_t^C,\bz_{\floor*{t}})
    -\nabla_x \bar f(\bx_t^C)\|
  \end{align}
  The uniform sub-Gaussian assumption implies that the mean of the term on the right is bounded above by $2n\sigma $. Indeed, if $\bg$ is a sub-Gaussian vector in $\bbR^n$ with sub-gaussian parameter $\sigma$, then
  \begin{align*}
    \bbE\left[\|\bg\|\right] &\le \sum_{i=1}^n \bbE[|e_i^\top\bg|] = \sum_{i=1}^n \bbE[\max\{e_i^\top\bg,-e_i^\top\bg\}]
    \le n \sigma\sqrt{2 \log(2)} \le 2n\sigma.                              
  \end{align*}
  Here $e_i$ are the standard basis vectors of $\bbR^n$. 
  The third inequality is based on a standard bounding method for sub-Gaussian variables. See exercise 2.21 of \cite{wainwright2019high}. Additionally, we will work out these details for bounding a different maximum of sub-Gaussian variables.

   Thus, by integrating (\ref{eq:itoCS}), taking expectations, and noting that $t-k\le 1$, we can conclude that
  \begin{equation*}
    \bbE\left[\|\bx_t^C-\bx_k^C\|^2 \right] \le
    2\eta \left( \frac{n}{\beta} + uD + 2n\sigma \right)
  \end{equation*}
  Thus,  an elementary Cauchy-Schwarz bound gives that 
  \begin{align}
    \label{eq:tailNorm}
    \bbE\left[\|\bx_t^C-\bx_k^C\| \right]\le \sqrt{\bbE\left[
 \|\bx_t^C-\bx_k^C\|^2
    \right]} \le \sqrt{2\eta \left(Du + 2n\sigma + \frac{n}{\beta} \right)}.
  \end{align}

  The rest of the proof bounds $\bbE[\|\bx_k^C-\bx_k^D\|]$. When $k=0$, this term is $0$, so we focus on the $k\ge 1$ case. 
  Recall that $\bx_t^C$ solves the Skorokhod problem for  $\by_t^C$
  and $\bx_t^D$ solves the Skorkhod problem for $\cD(\by^C)_t =
  \by_{\floor*{t}}^C$. Let $\bvarphi_t^D = -\int_0^t \bv_t^D
  d\bmu^D(t)$ be the unique projection process such that $\bx_t^D =
  \by_{\floor*{t}}^C + \bvarphi_t^D$. 
  Then, Lemma 2.2 of \cite{tanaka1979stochastic}
  implies that
  \begin{align*}
    \MoveEqLeft
    \|\bx_k^C-\bx_k^D\|^2 \\
    &\le \|\by_k^C-\by_{k}^C\|^2 
    +2\int_0^k(\by_k^C-\by_{k}^C-\by_s^C+\by_{\floor*{s}}^C)^\top(\bv_s^Dd\bmu^D(s)-\bv_s^C
      d\bmu^C(s)) \\
    &= 2\int_0^k(\by_{\floor*{s}}^C-\by_s^C)^\top (\bv_s^Dd\bmu^D(s)-\bv_s
      d\bmu(s))
  \end{align*}

  Note that for any integer, $i$, $\by_{\floor*{s}}^C$ is constant for  $s\in (i,i+1)$. It follows that the measure, $\bmu^D$ is supported on the integers. However, the integrand is zero on the integers, so we arrive at the simplified bound:
\begin{equation*}
  \|\bx_k^C-\bx_k^D\|^2 \le 2 \int_0^k(\by_s^C-\by_{\floor*{s}}^C)^\top \bv_s^C d\bmu^C(s).
\end{equation*}

Now, the elementary inequality $x^\top y \le \gamma(x|\cK)\delta^*(x|\cK)$ followed by H\"older's inequality gives:
\begin{align*}
  \|\bx_k^C-\bx_k^D\|^2
  &\le 2\int_0^k \gamma(\by_s^C-\by_{\floor*{s}}^C|\cK) \delta^\star(\bv_s|\cK) d\bmu(s)
  \\
  &\le 2\left(\sup_{s\in[0,k]} \gamma(\by_s^C-\by_{\floor*{s}}^C|\cK) \right)
    \int_0^k \delta^\star(\bv_s|\cK)d\bmu(s)
\end{align*}

Taking square-roots, then followed by expectations, and then employing the Cauchy-Schwarz inequality gives:
\begin{align}
  \nonumber
  \bbE\left[\|\bx_k^C-\bx_k^D\| \right]
  &\le
    \sqrt{2}
    \bbE\left[
    \sqrt{\sup_{s\in[0,k]} \gamma(\by_s^C-\by_{\floor*{s}}^C|\cK)}\sqrt{\int_0^k \delta^\star(\bv_s|\cK)d\bmu(s)}
    \right] \\
  \label{eq:CDcauchySchwarz}
  &\le \sqrt{2 \bbE\left[ \sup_{s\in[0,k]} \gamma(\by_s^C-\by_{\floor*{s}}^C|\cK)\right]
    \bbE\left[ \int_0^k \delta^\star(\bv_s|\cK)d\bmu(s)\right]}.
\end{align}

So, now it suffices to bound both terms on the right of (\ref{eq:CDcauchySchwarz}).

We first bound the $\gamma$ term. The methodology deviates from that of \cite{bubeck2018sampling}, as this term is now a bit more complicated. 
First note that $\gamma(x|\cK)\le r^{-1}\|x\|$ because $\cK$ contains a ball of radius $r$ around the origin. Thus, plugging in the defition for $\by_t^C$ and using the triangle inequality gives:
\begin{align}
  \nonumber
  \gamma(\by_s^C-\by_{\floor*{s}}^C|\cK)
  &\le
    r^{-1} \|\by_s^C-\by_{\floor*{s}}^C\| \\
  \nonumber
  &\le r^{-1} \eta \int_{\floor*{s}}^s \| \nabla_x f(\bx_\tau^C,\bz_{\floor*{s}}) \|d\tau + r^{-1}\sqrt{\frac{2\eta}{\beta}} \|\bw_{s}-\bw_{\floor*{s}}\| 
\end{align}

For compact notation, set $i=\floor*{s}$ and $\bg_\tau = \nabla_x f(\bx_{\tau}^C,\bz_i)-\nabla_x \bar{f}(\bx_{\tau}^C)$.
To bound the integral term, note that
\begin{align}
  \nonumber
  \| \nabla_x f(\bx_\tau^C,\bz_{\floor*{s}}) \| &= \| \nabla_x \bar{f}(\bx_{\tau}^C) + (\bg_{\tau}-\bg_i) + \bg_i\| \\
  \label{eq:tripleSplit}
  &\le u + \ell D + \|\bg_i\|.
\end{align}
The inequality arises because of the bound on $\|\nabla_x \bar f\|$ and the Lipschitz property of $\nabla_x f$.

It follows that
\begin{equation}
  \label{eq:gaugeTriangle}
  \gamma(\by_s^C-\by_{\floor*{s}}^C|\cK) \le
  \frac{\eta (u+\ell D)}{2r} + \frac{\eta}{2r}\|\bg_i\| + r^{-1}\sqrt{\frac{2\eta}{\beta}}\|\bw_{s}-\bw_{\floor*{s}}\|.
\end{equation}

Thus, to bound the first term on the right of (\ref{eq:CDcauchySchwarz}), it suffices to bound $\max_{i=0,\ldots,k-1} \|\bg_i\|$ and $\sup_{s\in[0,k]} \|\bw_s-\bw_{\floor*{s}}\|$. The $\|\bg_i\|$ terms can be bounded using a modification of a standard sub-Gaussian bounding method from exercise 2.21 of \cite{wainwright2019high}. We show it explicitly, as the method will be generalized when bounding the $\|\bw_s-\bw_{\floor*{s}}\|$ terms.

Let $e_j$ be the standard basis vectors of $\bbR^n$. Then the following bound follows from the triangle inequality:
\begin{align*}
  \|\bg_i\| &\le \sum_{j=1}^n |e_j^\top \bg_i| \\
  &= \sum_{j=1}^n \max_{\varepsilon \in \{-1,1\}} \varepsilon e_j^\top \bg_i.
\end{align*}

Then for any  $\lambda >0$ we have that
\begin{align*}
  \max_{i=0,\ldots,k-1} \|\bg_i\| &\le \max_{i=0,\ldots,k-1} \sum_{j=1}^n \max_{\varepsilon \in \{-1,1\}} \varepsilon e_j^\top \bg_i \\
                                  &\le \sum_{j=1}^n \max_{i\in \{0,\ldots,k-1\},\varepsilon \in \{-1,1\}} \varepsilon e_j^\top \bg_i \\
                                  &\le \sum_{j=1}^n \lambda^{-1} \log\left(
                                    \sum_{i=0}^{k-1}\sum_{\varepsilon\in\{-1,1\}} \exp\left(\lambda \varepsilon e_j^\top \bg_i  \right)
                                    \right).
\end{align*}

Taking expectations and using Jensen's inequality, followed by the sub-Gaussian property of $\bg_i$ gives
\begin{align}
  \nonumber
  \bbE\left[
  \max_{i=0,\ldots,k-1} \|\bg_i\|
  \right] &\le \sum_{j=1}^n \lambda^{-1} \log\left(
 \sum_{i=0}^{k-1}\sum_{\varepsilon\in\{-1,1\}} \bbE\left[\exp\left(\lambda \varepsilon e_j^\top \bg_i  \right) \right]
            \right) \\
          &\le \frac{n}{\lambda}\log\left(2k e^{\lambda^2 \sigma^2/2} \right) \\
  &= \frac{n\log(2k)}{\lambda} + \frac{n\lambda \sigma^2}{2}.
\end{align}
Optimizing over $\lambda$ gives:
\begin{equation}
  \label{eq:gradMax}
  \bbE\left[
  \max_{i=0,\ldots,k-1} \|\bg_i\|
  \right] \le n\sigma \sqrt{2 \log(2k)}.
\end{equation}

Now we will bound $\bbE\left[\sup_{s\in[0,k]} \|\bw_s-\bw_{\floor*{s}}\|\right]$ using an extension of the argument just used. Note that

\begin{align*}
\bbE\left[
\sup_{s\in [0,k]} \|\bw_{s}-\bw_{\floor*{s}}\|
  \right] &=
  \bbE\left[
\max_{i=0,\ldots,k-1} \sup_{s\in[i,i+1]} \|\bw_{s}-\bw_i\|
            \right]
\end{align*}

Then the triangle inequality implies that 
\begin{equation*}
  \|\bw_{s}-\bw_i\| \le \sum_{j=1}^n |e_j^\top  (\bw_{s}-\bw_i)|
  = \sum_{j=1}^n \max_{\varepsilon\in\{-1,1\}}\varepsilon e_j^\top (\bw_s-\bw_i) .
\end{equation*}
It follows that for all $\lambda >0$, we get:
\begin{align*}
  \max_{i=0,\ldots,k-1} \sup_{s\in[i,i+1]} \|\bw_{s}-\bw_i\| &\le \sum_{j=1}^n \max_{i\in \{0,\ldots,k\},\varepsilon\in \{-1,1\}}  \sup_{s\in[i,i+1]} \varepsilon e_j^\top (\bw_s-\bw_i) \\                                                         &\le                                                                                                                                       \sum_{j=1}^n\lambda^{-1} \log\left(                                                                                                 \sum_{i=0}^{k-1}\sum_{\varepsilon\in\{-1,1\}}\sup_{s\in[i,i+1]}                                                              e^{\lambda \varepsilon e_j^\top (\bw_s-\bw_i)}                                                                                                                    \right) 
\end{align*}

So, taking expectations and using Jensen's inequality gives:
\begin{align}
  \nonumber
  \MoveEqLeft
  \bbE\left[ \max_{i=0,\ldots,k-1} \sup_{s\in[i,i+1]} \|\bw_{s}-\bw_i\| \right] \\
  \nonumber
  &\le 
    \sum_{j=1}^n\lambda^{-1} \bbE\left[\log\left(  \sum_{i=0}^{k-1}\sum_{\varepsilon\in\{-1,1\}}\sup_{s\in[i,i+1]}                                                              e^{\lambda \varepsilon e_j^\top (\bw_s-\bw_i)}
    \right) \right] \\
  \label{eq:logsumexp}
  &\le 
    \sum_{j=1}^n\lambda^{-1} \log\left(  \sum_{i=0}^{k-1}\sum_{\varepsilon\in\{-1,1\}}
    \bbE\left[    \sup_{s\in[i,i+1]}                                                              e^{\lambda \varepsilon e_j^\top (\bw_s-\bw_i)}
    \right]
    \right) 
\end{align}

Now we bound the expectation of each term on the right of (\ref{eq:logsumexp}).
For simple notation, let $\alpha = \varepsilon e_j$ correspond to one of the terms in the sum. Note that $\alpha$ is a unit vector and that $e^{\lambda \alpha^\top (\bw_s-\bw_i)}$ is convex with respect to $\bw_s$. Now since, $\bw_s$ is martingale, it follows that $e^{\lambda \alpha^\top(\bw_s-\bw_i)}$ is a submartingale for $s\in[i,i+1]$. So, a Cauchy-Schwarz bound followed by Doob's maximal inequality, and then direct computation gives: 
\begin{align*}
  \bbE\left[\sup_{s\in[i,i+1]} e^{\lambda \alpha^\top(\bw_s-\bw_i)} \right]
  &\le 
  \sqrt{\bbE\left[\sup_{s\in[i,i+1]} e^{2\lambda \alpha^\top(\bw_s-\bw_i)} \right]} \\
  &\le 2 \sqrt{\bbE\left[ e^{2\lambda \alpha^\top(\bw_{i+1}-\bw_i)} \right]} \\
  &=2e^{\lambda^2}
\end{align*}
Combining this result with (\ref{eq:logsumexp}) shows that 
\begin{align*}
\bbE\left[
\sup_{s\in [0,k]} \|\bw_{s}-\bw_{\floor*{s}}\| 
  \right]
  &\le \frac{n}{\lambda}\log(4ke^{\lambda^2}) = \frac{n\log(4k)}{\lambda} + n\lambda
\end{align*}
for all $\lambda >0$.

Optimizing over $\lambda$ shows that
\begin{align*}
\bbE\left[
\sup_{s\in [0,k]} \|\bw_{s}-\bw_{\floor*{s}}\| 
  \right]
\le 2n\sqrt{\log(4k)}
\end{align*}
Combining  this result with (\ref{eq:gaugeTriangle}) and (\ref{eq:gradMax}) shows that
\begin{align}
  \nonumber
  \bbE\left[\sup_{s\in[0,k]}\gamma(\by_s^C-\by_{\floor*{s}}^C|\cK)\right]
  & \le \frac{\eta (u+\ell D + n\sigma \sqrt{2\log(2k)}) }{2r} + \frac{2n}{r} \sqrt{\frac{2\eta \log(4k)}{\beta}} \\
  \label{eq:gaugeBoundBasic}
  &\le \sqrt{\eta \log (4k)}\left( \frac{u+\ell D}{2r} +\frac{n \sigma}{\sqrt{2}r} +\frac{2n\sqrt{2}}{r\sqrt{\beta}}\right). 
\end{align}
The second inequality used the assumption that $\eta\le 1$ so that $\eta\le \sqrt{\eta}$, and the fact that $\log(4k)\ge 1$ for $k\ge 1$. 

  Now we bound the second term on the right of (\ref{eq:CDcauchySchwarz}).
Note that $\bx_t^C$ is a continuous semimartingale and the process
$\int_0^t\bv_s d\bmu(s)$ has bounded variation. Thus, from It\^o's
formula, \cite{kallenberg2002foundations}, we have that
\begin{equation}
  \label{eq:squareDifferential}
  d\|\bx_t^C\|^2 = 2(\bx_t^C)^\top \left(-\eta \nabla_x f(\bx_t^C,\bz_{\floor*{t}})
    +\sqrt{\frac{2\eta}{\beta}}d\bw_t -\bv_t d\bmu(t)
    \right)+ \frac{2\eta n}{\beta} dt
  \end{equation}

Reasoning as in (\ref{eq:gradTriangle}) shows that 
\begin{equation*}
  \bbE\left[|(\bx_t^C)^\top \nabla f_x(\bx_t^C,\bz_{\floor*{t}})|
  \right] \le Du + D\bbE\left[\|\bg_t\|\right]\le Du + 2Dn\sigma.
\end{equation*}
  
  By construction, $(\bx_t^C)^\top \bv_t = \sup \{x^\top\bv_t | x\in\cK
  \}=\delta^\star(\bv_t|\cK)$.
  Thus, re-arranging, integrating, and
    taking expectations gives the bound:
    \begin{align}
      \nonumber
      \MoveEqLeft
      \bbE\left[
        \int_0^{t} \delta^\star(\bv_s|\cK)d\bmu(s)
      \right] \\
      \nonumber
      &
      = \frac{\eta n t}{\beta} -\eta\bbE\left[\int_0^{t} (\bx_s^C)^\top
        \nabla_x f(\bx_s^C,\bz_{\floor*{s}})ds
      \right] + \frac{1}{2}\bbE\left[\|\bx_0^C\|^2-\|\bx_{t}^C\|^2 \right] \\
      \label{eq:supportBound}
      &\le \eta t \left(
        \frac{n}{\beta} + Du + 2Dn\sigma
        \right)+ \frac{D^2}{2} 
    \end{align}

    Combining (\ref{eq:CDcauchySchwarz}), (\ref{eq:gaugeBoundBasic}), and (\ref{eq:supportBound}) shows that 
    \begin{align*}
\bbE\left[
      \|
      \bx_k^C-\bx_k^D\|
      \right] 
      &\le \sqrt{2}\sqrt{\sqrt{\eta \log (4k)}\left( \frac{u+\ell D}{2r} +\frac{n \sigma}{\sqrt{2}r} +\frac{2n\sqrt{2}}{r\sqrt{\beta}}\right)} \\
      & \cdot \sqrt{\eta k \left(
        \frac{n}{\beta} + Du + 2Dn\sigma
        \right)+ \frac{D^2}{2} }
    \end{align*}

Combining this result with (\ref{eq:tailSplit}) and (\ref{eq:tailNorm}) finishes the proof.
\hfill$\blacksquare$

\paragraph{Proof of Lemma~\ref{lem:linUpper}}
  Recall that $\bx_t^A$ and $\bx_t^D$ are discretized processes: $\bx_t^A=\bx_{\floor*{t}}^A$ and $\bx_t^D=\bx_{\floor*{t}}^D$.
  Furthermore, if we set $\by_t^A$ as
  $$
  \by_t^A = \bx_0^A + \eta \int_0^t \nabla_x f(\bx_{\floor*{s}}^A,\bz_{\floor*{s}})ds + \sqrt{\frac{2\eta}{\beta}}\bw_t,
  $$
  then we have $\bx^A = \cS(\cD(\by^A))$ and $\bx^D=\cS(\cD(\by^C))$, where $\cD$ is the discretization operator and $\cS$ is the Skorokhod solution operator. In particular
  $$
  \bx_{k+1}^A = \Pi_{\cK}(\bx_k^A+\by_{k+1}^A-\by_k^A).
  $$

  Define a difference process, $\brho_t$, by:
  $$\brho_t = (\bx^A_t + \by_t^A-\by_{\floor*{t}}^A) -
  (\bx^D_t + \by_t^C-\by_{\floor*{t}}^C)
  $$
  Note that for integers $k$, $\brho_k=\bx_k^A-\bx_k^D$. While $\brho_t$ can jump at the integers,  non-expansiveness of convex projections implies that
  \begin{equation}
  \label{eq:nonExpansive}
  \|\brho_k\|=\|\bx_k^A-\bx_k^D\|\le \lim_{t\uparrow k} \|\brho_t\|
  \end{equation}

  Let $k\ge 0$ be an integer. 
  For $t\in[k,k+1)$ we have that
  \begin{equation*}
    d\brho_t = d(\by_t^A-\by_t^C) = \eta (\nabla_x f(\bx_t^C,\bz_{\floor*{t}})-\nabla_x f(\bx_t^A,\bz_{\floor*{t}})) dt
  \end{equation*}

  It follows that $\brho_t$ is a continuous bounded variation process on the interval $[k,k+1)$. When $\brho_t\ne 0$, we can bound the growth of $\|\brho_t\|$ using the chain rule, followed by the Cauchy-Schwarz inequality, the Lipshitz property of $\nabla_x f$, and the triangle inequality:
  \begin{align}
    \nonumber
    d\|\brho_t\| &= \left(\frac{\brho_t}{\|\brho_t\|}\right)^\top \eta (\nabla_x f(\bx_t^C,\bz_{\floor*{t}})-\nabla_x f(\bx_t^A,\bz_{\floor*{t}})) dt \\
                 &\le \eta \ell \|\bx_t^C-\bx_t^A\|dt \\
    \label{eq:diffTriangle}
    &\le \eta \ell (\|\bx_t^C-\bx_t^D\|+ \|\bx_t^D-\bx_t^A\|)dt.
  \end{align}

  While we have not characterized the behavior when $\brho_t=0$, the Lemma~\ref{lem:variationAtZero} from Appendix~\ref{app:integration} can be used to show that this behavior does not cause problems. Specifically for $t\in [k,k+1)$

  \begin{align*}
    \|\brho_t\|&=\|\brho_k\| + \int_k^td\|\brho_s\| \\
    &\overset{Lem.~\ref{lem:variationAtZero}}{=}\|\brho_k\| + \lim_{\epsilon \downarrow 0}\int_k^t \indic(\|\brho_s\| \ge \epsilon) d\|\brho_s\|\\
               &\overset{~(\ref{eq:diffTriangle})}{\le}\|\brho_k\| + \lim_{\epsilon \downarrow 0}\int_k^t  \indic(\|\brho_s\| \ge \epsilon) \eta \ell (\|\bx_s^C-\bx_s^D\|+ \|\bx_s^D-\bx_s^A\|)ds \\
               &\le (1+\eta \ell) \|\brho_k\| + \eta \ell\int_k^{t} \|\bx_s^C-\bx_s^D\|ds
  \end{align*}

  The final inequality used the fact that $\brho_k = \bx_s^A-\bx_s^D$ for all $s\in[k,k+1)$.

  Now using (\ref{eq:nonExpansive}) we see that
  \begin{equation*}
    \|\brho_{k+1}\|\le (1+\eta \ell) \|\brho_k\| +\eta \ell\int_k^{k+1} \|\bx_s^C-\bx_s^D\|ds 
  \end{equation*}
  Then using the assumption that $\brho_0=\bx_0^A-\bx_0^D=0$, we have that
  \begin{equation*}
    \|\brho_k\|\le \sum_{i=0}^{k-1} \eta \ell (1+\eta\ell)^{k-i-1} \int_i^{i+1}\|\bx_s^C-\bx_s^D\|ds 
  \end{equation*}

  Taking expectations and using Lemma~\ref{lem:tanakaMean} gives that
  \begin{align*}
\bbE[ \|\brho_k\|] &\le \eta \ell \sum_{i=1}^{k-1} (1+\eta\ell)^{k-i-1} \int_i^{i+1} \left( \eta \log(4\max\{1,s\})\right)^{1/4} 
  \left( c_{\ref{tanakaRt}} \sqrt{\eta s} + c_{\ref{tanakaConst}} \right)ds
    \\
    &\le \eta \ell  \left( \eta \log(4\max\{1,k\})\right)^{1/4} 
      \left( c_{\ref{tanakaRt}} \sqrt{\eta k} + c_{\ref{tanakaConst}} \right) \sum_{i=1}^{k-1} (1+\eta\ell)^{k-i-1} \\
    &\le  \left( \eta \log(4\max\{1,k\})\right)^{1/4} 
      \left( c_{\ref{tanakaRt}} \sqrt{\eta k} + c_{\ref{tanakaConst}} \right) ((1+\eta\ell)^k -1)
  \end{align*}
  The result now follows because $\bx_t^A$ and $\bx_t^D$ are constant for $t\in [k,k+1)$ and the bound above is monotinically increasing in $k$.
\hfill$\blacksquare$

\paragraph{Proof of Lemma~\ref{lem:dudley}}
  Let $-\int_0^t \bv_s^B d\bmu^B(s)$ be the unique finite-variation process that enforces that $\bx_t^B\in \cK$ in the Skorokhod solution.
  Lemma 2.2 of \cite{tanaka1979stochastic} implies that
  \begin{multline}
    \nonumber
    \|\bx_t^B-\bx_t^M\|^2 \le \|\by_t^B-\by_t^M\|^2 \\
    + 2\int_0^t\left(
      \by_t^B-\by_t^M-\by_s^B+\by_s^M
      \right)^\top \left(\bv_s^Md\bmu^M(s)-
        \bv_s^Bd\bmu^B(s)
        \right)
      \end{multline}

 Thus, taking square roots and using the triangle inequality gives
  \begin{multline}
    \label{eq:averagineSkorkhodBound}
    \|\bx_t^B-\bx_t^M\| \le \|\by_t^B-\by_t^M\| \\
    + \sqrt{2\left|\int_0^t\left(
      \by_t^B-\by_t^M-\by_s^B+\by_s^M
    \right)^\top \bv_s^Md\bmu^M(s)\right|}\\
+ \sqrt{2\left|\int_0^t\left(
      \by_t^B-\by_t^M-\by_s^B+\by_s^M
      \right)^\top \bv_s^Bd\bmu^B(s)\right|}
      \end{multline}
 
      Now we analyze the various terms of this equation.

      First we will bound $\bbE[\|\by_t^B-\by_t^M\|]\le \sqrt{ \bbE[\|\by_t^B-\by_t^M\|^2 ]}$.
      
      Let $\cF_{\infty}$ be the $\sigma$-algebra generated by the Brownian motion. In the following discussion, we will assume that the realization of the Brownian motion is fixed and examine the effects of $\bz_t$. Note that the initial condition assumption and the definition of $\by_t^B$ from (\ref{eq:Ybetween}) imply that $\by_t^M = \bbE[\by_t^B | \cF_\infty]$. In other words, $\by_t^B-\by_t^M$ is a zero-mean function of the random variables $\bz_0,\bz_1,\ldots$.

      To bound $\bbE\left[\|\by_t^B-\by_t^M\|^2 | \cF_\infty\right]$, it suffices to bound the individual coordinates. Each coordinate can be represented as $e_j^\top(\by_t^B-\by_t^M)$, where $e_j$ is a corresponding unit basis vector. Thus, it suffices to bound $\bbE\left[ \left(\alpha^\top(\by_t^B-\by_t^M)\right)^2\vert \cF_\infty \right]$ for an arbitrary unit vector, $\alpha$.

With the realization of the Brownian motion fixed, $\bv_t:=\alpha^\top(\by_t^B-\by_t^M)$ can be decomposed as a sum of independent, sub-Gaussian random variables:
      \begin{align*}
        \nonumber
        \bv_t=\alpha^\top(\by_t^B-\by_t^M) &= \eta \sum_{i=0}^{\floor*{t}-1} \int_i^{i+1} \alpha^\top \left(\nabla_x \bar f(\bx_s^M)-\nabla_x f(\bx_s^M,\bz_i) \right)ds \\
        &+\eta \int_{\floor*{t}}^t \alpha^\top \left(\nabla_x \bar f(\bx_s^M)-\nabla_x f(\bx_s^M,\bz_i) \right)ds
        \\
        &=: \sum_{i=0}^{\floor*{t}}\brho_i
      \end{align*}

      Recall that $\nabla_x \bar f(\bx_s^M)-\nabla_x f(\bx_s^M,\bz_i)$ is sub-Gaussian for all $\bx_s^M$. In particular, for $i<\floor*{t}$, we have for all $\lambda\in\bbR$,
      \begin{align*}
        \MoveEqLeft
        \bbE[
        \exp(\lambda \brho_i) |\cF_{\infty}
        ] \\
        &= \bbE\left[ \exp\left(\int_{i}^{i+1} \lambda\eta \alpha^\top \left(\nabla_x \bar f(\bx_s^M)-\nabla_x f(\bx_s^M,\bz_i) \right)ds  \right)
            \middle| \cF_{\infty}
            \right] \\
        &\overset{Jensen+Fubini}{=}
          \int_i^{i+1}\bbE\left[ \exp\left( \lambda\eta\alpha^\top \left(\nabla_x \bar f(\bx_s^M)-\nabla_x f(\bx_s^M,\bz_i) \right)  \right)
            \middle| \cF_{\infty}
          \right] ds \\
        &\overset{sub-Gaussian}{\le} \int_i^{i+1} \exp\left(\frac{1}{2}\lambda^2\eta^2\sigma^2  \right) ds \\
        &= \exp\left(\frac{1}{2}\lambda^2\eta^2\sigma^2  \right) .
      \end{align*}

Now consider the case that $i=\floor*{t}$. When $t=\floor*{t}=i$, we have that $\brho_i=0$ and so $\bbE[\exp(\lambda\brho_i)|\cF_{\infty}]=1$. When $t>\floor*{t}$, a similar argument as above gives:
       \begin{align*}
        \MoveEqLeft[0]
        \bbE[
        \exp(\lambda \brho_i) |\cF_{\infty}
        ] \\
         &= \bbE\left[ \exp\left(
\frac{1}{t-\floor*{t}}
           \int_{\floor*{t}}^{t} \lambda\eta(t-\floor*{t})\alpha^\top \left(\nabla_x \bar f(\bx_s^M)-\nabla_x f(\bx_s^M,\bz_i) \right)ds  \right)
            \middle| \cF_{\infty}
            \right] \\
        &\overset{J+F}{=} \frac{1}{t-\floor*{t}}
          \int_{\floor*{t}}^{t}\bbE\left[ \exp\left( \lambda\eta(t-\floor*{t})\alpha^\top \left(\nabla_x \bar f(\bx_s^M)-\nabla_x f(\bx_s^M,\bz_i) \right)  \right)
            \middle| \cF_{\infty}
          \right] ds \\
         &\overset{sub-Gaussian}{\le}
\frac{1}{t-\floor*{t}}
          \int_{\floor*{t}}^{t}
           \exp\left(\frac{1}{2}\lambda^2\eta^2(t-\floor*{t})^2\sigma^2  \right) ds \\
        &\le \exp\left(\frac{1}{2}\lambda^2\sigma^2\eta^2 (t-\floor*{t}) \right) .
      \end{align*}
     The final inequality used the fact that $(t-\floor*{t})^2 \le t-\floor*{t}$.

      Now using the fact that $\brho_i$ are independent, conditioned on $\cF_{\infty}$, we have that
      \begin{align*}
        \bbE\left[
        \exp\left(
        \lambda
        \bv_t
        \right)
        |\cF_\infty
        \right]
        &= \prod_{i=0}^{\floor*{t}}
          \bbE\left[
          e^{\lambda \brho_i}|\cF_\infty
          \right] \\
        &\le e^{(\lambda\eta \sigma)^2 (t-\floor*{t}))/2} \prod_{i=0}^{\floor*{t}-1} e^{(\lambda\eta \sigma)^2/2}
        \\
        &\le e^{\lambda^2 \eta^2 \sigma^2 t /2}.
      \end{align*}

      Thus we have shown that $\bv_t$ is sub-Gaussian with parameter $\hat\sigma^2=\eta^2 \sigma^2t$. Then a standard Chernoff bound argument shows that $\bbP(|\bv_t|^2 > \epsilon|\cF_{\infty})\le 2e^{-\epsilon /(2\hat\sigma^2)}$. Then we can bound the variance by:
      \begin{align}
        \nonumber
        \bbE\left[
        \left(
        \alpha^\top(\by_t^B-\by_t^M)
        \right)^2
        \vert \cF_\infty
        \right] &= \int_0^\infty \bbP(|\bv_t|^2 > \epsilon|\cF_{\infty}) d\epsilon \\
        \nonumber
                &\le 2 \int_0^\infty e^{-\epsilon/(2\hat\sigma^2)} d\epsilon \\
        \nonumber
                &= 4\hat\sigma^2 \\
        \label{eq:coordinateVarBound}
        &= 4\eta^2 \sigma^2 t.
      \end{align}

      Applying (\ref{eq:coordinateVarBound}) to $\alpha=e_j$ for all of the standard basis vectors, then summing and using the tower property gives:
      \begin{equation*}
        \bbE\left[
          \|\by_t^B-\by_t^M\|^2
        \right] \le 4n\eta^2 \sigma^2 t.
      \end{equation*}
      
Taking square roots gives
\begin{equation}
\label{eq:endBound}
 \bbE\left[
          \|\by_t^B-\by_t^M\|
        \right] \le \sqrt{\bbE\left[
          \|\by_t^B-\by_t^M\|^2
        \right]} \le 2\sigma \eta \sqrt{nt}.
\end{equation}

      Bounding the integral terms from (\ref{eq:averagineSkorkhodBound}) is more complex. First we consider the integral with rerspect to $\bmu^M$. The integral with respect to $\bmu^B$ is similar. As in the proof of Lemma~\ref{lem:tanakaMean} we will use a H\"older inequality bound, followed by a Cauchy-Schwarz bound:
      \begin{align}
        \nonumber
        \MoveEqLeft
        \bbE\left[
        \sqrt{
        \left|
        \int_0^t\left(
      \by_t^B-\by_t^M-\by_s^B+\by_s^M
        \right)^\top \bv_s^Md\bmu^M(s)
        \right|
        }
        \right] \\
        \nonumber
        &\le \bbE\left[
          \sqrt{
          \int_0^t
          \gamma(\by_t^B-\by_t^M-\by_s^B+\by_s^M|\cK)
          \delta^\star(\bv_s^M|\cK)d\bmu^M(s)}
          \right] \\
        &\le \bbE\left[
          \sqrt{
          \sup_{s\in [0,t]} \gamma(\by_t^B-\by_t^M-\by_s^B+\by_s^M|\cK)}
          \sqrt{
          \int_0^t \delta^\star(\bv_s^M|\cK)d\bmu^M(s)
          }
          \right] \\
        \label{eq:averageHolder}
        &\le \sqrt{\bbE\left[
          \sup_{s\in [0,t]} \gamma(\by_t^B-\by_t^M-\by_s^B+\by_s^M|\cK)\right]}
          \sqrt{\bbE\left[
          \int_0^t \delta^\star(\bv_s^M|\cK)d\bmu^M(s)
          \right]
          }
      \end{align}

      The integral bound follows from (\ref{eq:supportBound}) applied to $\bar f$ in place of $f$:
      \begin{equation}
        \label{eq:fullSupport}
\bbE\left[
          \int_0^t \delta^\star(\bv_s^M|\cK)d\bmu^M(s)
          \right] \le
\eta t \left(
        \frac{n}{\beta} + Du + 2Dn\sigma
        \right)+ \frac{D^2}{2} 
      \end{equation}

      Bounding the supremum will take more work. The eventual plan is to bound the individual components using the Dudley entropy integral. 
      To this end, we first note that for any vector in $x\in\bbR^n$, we have that 
      \begin{align}
        \label{eq:gaugeSplit}
        \gamma(x|\cK) \le r^{-1} \|x\| \le r^{-1} \sum_{i=1}^n |x_i|.
      \end{align}
      Thus, it suffices to bound $\bbE[\sup_{s\in[0,t]} |\bv_t-\bv_s| \vert \cF_{\infty}]$,
      where
      \begin{equation*}
        \bv_s = \alpha^\top \left(
\by_s^B-\by_s^M
          \right).
        \end{equation*}
        and $\alpha$ is an arbitrary unit vector.

 Also note that 
 \begin{equation*}
   \label{eq:supRelax}
   \sup_{s\in [0,t]} |\bv_t-\bv_s|
   \le \sup_{s,\hat s \in [0,t]}(\bv_s-\bv_{\hat s}).
 \end{equation*}
 The expectation of the expression on the right will now be bounded via the Dudley entropy integral. To derive the bound, we must show that $\bv_s$ has sub-Gaussian increments. A mild extension of the argument that $\bv_t$ is sub-Guassian will suffice.

 Without loss of generality, assume that $\hat s \ge s$. Then
 \begin{align*}
   \bv_{\hat s} - \bv_{s}
   &= \eta \int_s^{\hat s} \alpha^\top \left(
     \nabla_x \bar f(\bx_{\tau}^M)-\nabla_x \bar f(\bx_\tau^M,\bz_{\floor*{\tau}})
     \right)d\tau \\
   &= \eta \int_{s}^{\ceil*{s}} \alpha^\top \left(
     \nabla_x \bar f(\bx_{\tau}^M)-\nabla_x \bar f(\bx_\tau^M,\bz_{\floor*{\tau}})
     \right)d\tau + \\
   &
   \eta \sum_{i=\ceil*{s}}^{\floor*{\hat s}-1}
   \int_{i}^{i+1}\alpha^\top \left(
     \nabla_x \bar f(\bx_{\tau}^M)-\nabla_x \bar f(\bx_\tau^M,\bz_{\floor*{\tau}})
     \right)d\tau + \\
   &= \int_{\floor*{s}}^{s}\alpha^\top \left(
     \nabla_x \bar f(\bx_{\tau}^M)-\nabla_x \bar f(\bx_\tau^M,\bz_{\floor*{\tau}})
     \right)d\tau \\
   &=: \sum_{i=\ceil*{s}-1}^{\floor*{\hat s}}\brho_i
 \end{align*} 
 Then, similar to the case above,  $\brho_i$ are independent sub-Gaussian random variables with the following bounds for all $\lambda\in\bbR$: 
 \begin{equation*}
   \bbE[e^{\lambda \brho_i}|\cF_{\infty}]
   \le \begin{cases}
     e^{(\eta \sigma \lambda)^2(\ceil*{s}-s)/2} & \textrm{ if } i=\ceil*{s}-1 \\
     e^{(\eta \sigma \lambda^2)/2} & \textrm{ if } i=\ceil*{s},\ldots,\floor*{\hat s}-1 \\
     e^{(\eta\sigma\lambda)^2(\hat s-\floor*{s})/2} &\textrm{ if } i=\floor*{\hat s}
   \end{cases}
 \end{equation*}
 Then independence implies that for all $\lambda\in\bbR$, the following bound holds:
 \begin{align*}
   \bbE[e^{\lambda (\bv_s-\bv_{\hat s})} |\cF_{\infty}]
   &
    \le
   \exp\left(\frac{\lambda^2\eta^2 \sigma^2}{2}\left(
     (\floor*{s}-s)^2 + \sum_{i=\ceil*{s}}^{\floor*{\hat s}-1} 1
     +(\hat s - \floor*{\hat s})^2 \right)
     \right) \\
   &\le \exp\left(\frac{\lambda^2\eta^2 \sigma^2|s-\hat s|}{2} \right).
 \end{align*}
 It follows that $\bv_s$ is sub-Guassian with respect to the metric defined by
 $d(s,\hat s) = \eta \sigma \sqrt{|s-\hat s|}$. See Definition 5.16 of \cite{wainwright2019high}.

 Let $N([0,t],d,\epsilon)$ be the covering number of the interval $[0,t]$ via closed balls of  radius $\epsilon$ under the metric $d$, and similarly $N([0,t],|\cdot|,\epsilon)$ is the corresponding covering number with respect to the absolute value metric. A standard argument shows that $N([0,t],|\cdot|,\epsilon) = 1$ when $\epsilon \ge t/2$ and when $\epsilon \le t/2$, we have that
 \begin{equation*}
   N([0,t],|\cdot|,\epsilon)\le \frac{t}{2 \epsilon} + 1 \le t/\epsilon.
 \end{equation*}
 See Example 5.2 of \cite{wainwright2019high}.
 
 By the definition of $d$, a ball of radius $\epsilon$ in the $d$ metric corresponds to a ball of radius $\left(\frac{\epsilon}{\eta \sigma}\right)^2$ in the absolute value metric. And so, when $\epsilon \ge \eta \sigma \sqrt{\frac{t}{2}}$, we have that $N([0,t],d,\epsilon)=1$ and when $\epsilon \le  \eta \sigma \sqrt{\frac{t}{2}}$ the following bound holds:
 \begin{equation*}
N([0,t],d,\epsilon)\le \frac{t\eta^2\sigma^2}{\epsilon^2}
\end{equation*}
Thus, the Dudley entropy integral bound (see Theorem 5.22 of \cite{wainwright2019high}) implies that
\begin{align}
  \MoveEqLeft
  \nonumber
  \bbE[\sup_{s,\hat s \in[0,t]} (\bv_s-\bv_{\hat s})|\cF_{\infty}] \\
  \nonumber
  &
                                                                    \le 32 \int_0^{\eta \sigma \sqrt{\frac{t}{2}}} \sqrt{\log N([0,t],d,\epsilon)} d\epsilon \\
  \nonumber
                                                     &\le 32 \int_0^{\eta \sigma\sqrt{t}} \sqrt{\log\left(\frac{t\eta^2 \sigma^2}{\epsilon^2} \right)}d\epsilon \\
  \nonumber
                                                     &= 32\eta \sigma \sqrt{2t} \int_0^\infty x^{1/2} e^{-x} dx \quad \left(\textrm{using }2x=\log\left(\frac{t\eta^2 \sigma^2}{\epsilon^2} \right) \right) \\
  \label{eq:dudley}
  &= 16\eta \sigma \sqrt{2\pi t}
\end{align}

Thus, applying (\ref{eq:dudley}) for all of the standard basis vectors and plugging the bound into 
(\ref{eq:gaugeSplit}) and using the tower property gives 
\begin{equation}
  \label{eq:dudleyGauge}
  \bbE\left[
    \sup_{s\in [0,t]} \gamma(\by_t^B-\by_t^M-\by_s^B+\by_s^M|\cK)
  \right] \le \frac{16 n\eta \sigma \sqrt{2\pi t}}{r}
\end{equation}

Combining \eqref{eq:averageHolder}, \eqref{eq:fullSupport}, and \eqref{eq:dudleyGauge} shows that 
\begin{multline}
  \label{eq:finalAveHolder}
\bbE\left[\sqrt{
    \left|
        \int_0^t\left(
      \by_t^B-\by_t^M-\by_s^B+\by_s^M
        \right)^\top \bv_s^Md\bmu^M(s)
        \right|
        }
      \right] \le \\
      \sqrt{
      \frac{16 n\eta u \sqrt{2\pi t}}{r} \left(
 \eta t \left(
        \frac{n}{\beta} + Du  + 2Dn\sigma
        \right) + \frac{1}{2} D^2.
        \right)}
    \end{multline}

    An identical argument holds for the integral with respect to $\bv_s^Bd\bmu^M(s)$. So, multiplying the bound from (\ref{eq:finalAveHolder}) by $2\sqrt{2}=\sqrt{8}$ and adding it to (\ref{eq:endBound})  shows that
    \begin{align*}
      \MoveEqLeft
      W_1(\cL(\bx_t^B),\cL(\bx_t^M))
\\
      &
                                      \le
                                      \bbE\left[ \|\bx_t^B-\bx_t^M\|\right] \\
      &
      \le
2\sigma \eta \sqrt{nt}
       +   \sqrt{\frac{128 n\eta \sigma\sqrt{2\pi t}}{r} \left(
 \eta t \left(
        \frac{n}{\beta} + Du + 2Dn\sigma
        \right) + \frac{1}{2} D^2
        \right)}.
    \end{align*}
The result follows by factoring out the terms depending on $\eta$ and $t$.
\hfill$\blacksquare$

\paragraph*{Proof of Lemma~\ref{lem:gronwall}}
  Note that
  \begin{equation*}
    d(\bx_t^C-\bx_t^B) = \eta \left(
      \nabla_x f(\bx_t^M,\bz_{\floor*{t}})-\nabla_x f(\bx_t^C,\bz_{\floor*{t}})
    \right)dt
    +\bv_t^Bd\bmu^B(t)-\bv_t^Cd\bmu^C(t)
  \end{equation*}
  so that $\bx_t^C-\bx_t^B$ is a continuous bounded-variation process. Thus, whenever $\bx_t^C\ne \bx_t^B$, we have that:
  \begin{align*}
    d\|\bx_t^C-\bx_t^B\| 
    &= \left(\frac{\bx_t^C-\bx_t^B}{\|\bx_t^C-\bx_t^B\|}\right)^\top
      \eta 
      \left(
      \nabla_x f(\bx_t^M,\bz_{\floor*{t}})-\nabla_x f(\bx_t^C,\bz_{\floor*{t}})
      \right)dt \\
     &
     +\left(\frac{\bx_t^C-\bx_t^B}{\|\bx_t^C-\bx_t^B\|}\right)^\top \left(\bv_t^Bd\bmu^B(t)-\bv_t^Cd\bmu^C(t)
       \right) \\
    &\le \eta \ell \|\bx_t^M-\bx_t^C\| dt \\
    &\le \eta \ell (\|\bx_t^M-\bx_t^B\|+\|\bx_t^C-\bx_t^B\|)dt.
  \end{align*}
  The first inequality uses the definitions of $\bv_t^B$ and $\bv_t^C$ to imply that the corresponding terms are non-positive. It also simplifies the inner product with the gradients via the Lipschitz property and the Cauchy-Schwarz inequality. The second inequality uses the triangle inequality.

  Now we will use an argument to rule out any expected behavior from the dynamics when $\bx_t^C=\bx_t^B$. 
  Indeed, using Lemma~\ref{lem:variationAtZero} from Appendix~\ref{app:integration} shows that 
  \begin{align*}
    \|\bx_t^C-\bx_t^B\|& = \int_0^{t} d\|\bx_s^C-\bx_s^B\| \\
                       &= \lim_{\epsilon \downarrow 0}\int_0^t \indic(\|\bx_s^C-\bx_s^B\| \ge \epsilon) d\|\bx_s^C-\bx_s^B\| \\
                       &\le
                         \lim_{\epsilon\downarrow 0}\int_0^t \eta \ell \indic(\|\bx_s^C-\bx_s^B\|\ge \epsilon) (\|\bx_s^M-\bx_s^B\|+\|\bx_s^C-\bx_s^B\|)ds \\
    &\le  \int_0^t \eta \ell  (\|\bx_s^M-\bx_s^B\|+\|\bx_s^C-\bx_s^B\|)ds.
  \end{align*}

  Thus Gronwall's inequality implies that
\begin{equation*}
\|\bx_t^C-\bx_t^B\| \le \eta \ell \int_0^t e^{\eta \ell (t-s)} \|\bx_{s}^M-\bx_{s}^B\| ds 
\end{equation*}

Taking expectations and using Lemma~\ref{lem:dudley} gives the desired bound:
\begin{align*}
  \bbE[\|\bx_t^C-\bx_t^B\|] &\le \eta \ell \int_0^t e^{\eta \ell (t-s)}
                              \left( c_{\ref{aveTanakaLin}} \eta s^{1/2} + c_{\ref{aveTanakaRoot}} \eta^{1/2} s^{1/4} +c_{\ref{aveTanakaTQ}}\eta s^{3/4}
                              \right)ds \\
                            &\le \eta \ell\left( c_{\ref{aveTanakaLin}} \eta t^{1/2} + c_{\ref{aveTanakaRoot}} \eta^{1/2} t^{1/4}
+c_{\ref{aveTanakaTQ}}\eta t^{3/4}
                              \right) \int_0^te^{\eta \ell (t-s)}ds \\
  &= \left( c_{\ref{aveTanakaLin}} \eta t^{1/2} + c_{\ref{aveTanakaRoot}} \eta^{1/2} t^{1/4}+  c_{\ref{aveTanakaTQ}}\eta t^{3/4}\right) (e^{\eta \ell t}-1).
\end{align*}
\hfill$\blacksquare$

\section{Bounding the Constants}
\label{sec:constants}

In this expression, we bound the size of the constants based on the problem data. First we get simplified bounds on the constants from Proposition~\ref{prop:W1contract}. Then we use this result prove Proposition~\ref{prop:constants}, which bounds the overall constants for the algorithm. 

\begin{lemma}
  \label{lem:contractionConstants}
  If $D^2\ell\beta < 8$ and $a = \frac{4}{D^2\beta}$, then
  $$a \ge \frac{\ell}{2} \quad \textrm{and} \quad c_{\ref{W1mult}} \le
  \frac{2e}{\left(1-\frac{D^2\ell\beta}{8}\right)^2}.
  $$

  Otherwise, if $a = \frac{D^2\ell^2\beta}{16}\left(1-\tanh^2\left(
      \frac{D^2\ell\beta}{8}\right)\right)$, then
  \begin{equation*}
    a \ge \frac{D^2\ell^2 \beta}{16}\exp\left(
      -\frac{D^2\ell\beta}{4}
    \right) \quad \textrm{and} \quad
    c_{\ref{W1mult}} \le \frac{4}{D^2\ell\beta} \exp\left(\frac{D^2\ell\beta}{2} \right).
  \end{equation*}
\end{lemma}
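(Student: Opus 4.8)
The plan is to insert the two candidate values of $a$ from Lemma~\ref{lem:oscillator} into the natural frequency $\omega_N=\frac{\sqrt{a\beta}}{2}$ and damping ratio $\xi=\frac{D\ell}{4}\sqrt{\frac{\beta}{a}}$, and to exploit the identity $D\,\omega_N\,\xi = D\cdot\frac{\sqrt{a\beta}}{2}\cdot\frac{D\ell}{4}\sqrt{\frac{\beta}{a}} = \frac{D^2\ell\beta}{8}$, which holds for every $a>0$. This controls the exponential prefactor $e^{D\omega_N\xi}$ appearing in $c_{\ref{W1mult}}$, and — more importantly — once the prescribed value of $a$ is substituted it forces $\xi$ itself, and hence the arguments $D\omega_N\sqrt{1-\xi^2}$ and $D\omega_N\sqrt{\xi^2-1}$ of the trigonometric/hyperbolic terms in $c_{\ref{W1mult}}$, to collapse to elementary functions of $y:=\frac{D^2\ell\beta}{8}$. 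From there everything follows from a handful of standard scalar inequalities.

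\textbf{Underdamped case ($a=\frac{4}{D^2\beta}$).} Here $\omega_N=\frac1D$, so $D\omega_N=1$ and $\xi=D\omega_N\xi=y=\frac{D^2\ell\beta}{8}$, which lies in $(0,1)$ precisely because $D^2\ell\beta<8$; the same hypothesis rearranges to $a=\frac{4}{D^2\beta}>\frac\ell2$. For the multiplier, set $\theta=\sqrt{1-\xi^2}\in(0,1)$, bound the numerator by $e^{D\omega_N\xi}=e^\xi\le e$, and bound the denominator from below via $\cos\theta-\frac{\xi}{\theta}\sin\theta\ge\cos\theta-\xi\ge 1-\frac{\theta^2}{2}-\xi=\frac{1+\xi^2}{2}-\xi=\frac{(1-\xi)^2}{2}$, using $\frac{\sin\theta}{\theta}\le 1$ and $\cos\theta\ge 1-\frac{\theta^2}{2}$. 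This gives $c_{\ref{W1mult}}\le\frac{2e}{(1-\xi)^2}=\frac{2e}{(1-D^2\ell\beta/8)^2}$.

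\textbf{Overdamped case ($a=\frac{D^2\ell^2\beta}{16}(1-\tanh^2 y)$, $y=\frac{D^2\ell\beta}{8}$).} A short computation gives $a\beta=\frac{D^2\ell^2\beta^2}{16\cosh^2 y}$, hence $D\omega_N=\frac{D^2\ell\beta}{8\cosh y}=\frac{y}{\cosh y}$; combined with $D\omega_N\xi=y$ this yields $\xi=\cosh y$, $\sqrt{\xi^2-1}=\sinh y$, and $D\omega_N\sqrt{\xi^2-1}=y\tanh y$. Substituting $\xi=\cosh y$ and applying the hyperbolic subtraction formula, the denominator of $c_{\ref{W1mult}}$ becomes $\cosh(y\tanh y)-\frac{\cosh y}{\sinh y}\sinh(y\tanh y)=\frac{\sinh\big(y(1-\tanh y)\big)}{\sinh y}$, so $c_{\ref{W1mult}}=\frac{e^y\sinh y}{\sinh(y(1-\tanh y))}$. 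Then $1-\tanh y=\frac{2}{e^{2y}+1}\ge e^{-2y}$ together with $\sinh t\ge t$ gives $\sinh(y(1-\tanh y))\ge y e^{-2y}$, while $\sinh y\le\frac{e^y}{2}$; hence $c_{\ref{W1mult}}\le\frac{e^y\cdot e^y/2}{y e^{-2y}}=\frac{e^{4y}}{2y}=\frac{4}{D^2\ell\beta}\exp\big(\tfrac{D^2\ell\beta}{2}\big)$. Finally $1-\tanh^2 y=\frac{1}{\cosh^2 y}=\frac{4}{(e^y+e^{-y})^2}\ge e^{-2y}$ (since $e^y+e^{-y}\le 2e^y$), which gives $a\ge\frac{D^2\ell^2\beta}{16}e^{-2y}=\frac{D^2\ell^2\beta}{16}\exp\big(-\tfrac{D^2\ell\beta}{4}\big)$.

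\textbf{Main obstacle.} None of the individual estimates is deep — they reduce to $\cos t\ge 1-t^2/2$, $\sin t\le t$, $\sinh t\ge t$, $\sinh t\le e^t/2$, and elementary bounds on $\cosh$ and $\tanh$. The step that makes the argument actually run is recognizing, through $D\omega_N\xi=\frac{D^2\ell\beta}{8}$, that the prescribed $a$'s produce $\xi=\frac{D^2\ell\beta}{8}$ in the underdamped case and $\xi=\cosh\big(\frac{D^2\ell\beta}{8}\big)$ in the overdamped case; without spotting these identities one is stuck manipulating nested trigonometric/hyperbolic functions of $\sqrt{1-\xi^2}$ and $\sqrt{\xi^2-1}$ with no obvious elementary envelope.
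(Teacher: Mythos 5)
Your proof is correct, and the underdamped case together with the lower bound on $a$ follows the paper's own argument essentially verbatim (the identity $D\omega_N=1$, $\xi=D\omega_N\xi=\tfrac{D^2\ell\beta}{8}$, then $\sin\theta\le\theta$ and $\cos\theta\ge1-\theta^2/2$ to get the $\tfrac{(1-\xi)^2}{2}$ denominator, and $1-\tanh^2(y)=4/(e^y+e^{-y})^2\ge e^{-2y}$ for $a$). Where you genuinely diverge is the overdamped bound on $c_{\ref{W1mult}}$: the paper keeps the denominator in the form $\tfrac{\cosh(y)}{\tanh(x)}\bigl(\tanh(x)-\tanh(y)\bigr)$ with $y=x\tanh(x)$ and lower-bounds $\tanh(x)-\tanh(y)=\int_y^x(1-\tanh^2(z))\,dz\ge (x-y)(1-\tanh^2(x))$, then applies crude exponential estimates; you instead exploit the exact identities $\xi=\cosh y$, $D\omega_N\sqrt{\xi^2-1}=y\tanh y$ and the hyperbolic subtraction formula to collapse the denominator to $\sinh\bigl(y(1-\tanh y)\bigr)/\sinh y$, after which $\sinh t\ge t$, $1-\tanh y\ge e^{-2y}$, and $\sinh y\le e^y/2$ finish the job. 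Your route is cleaner and slightly sharper: it lands exactly on the stated constant $\tfrac{4}{D^2\ell\beta}\exp\bigl(\tfrac{D^2\ell\beta}{2}\bigr)=\tfrac{e^{4y}}{2y}$, whereas the paper's written chain contains a small algebraic slip ($x(1-\tanh x)^2(1+\tanh x)$ equals $\tfrac{8xe^{-x}}{(e^x+e^{-x})^3}$, not $\tfrac{8xe^{-x\cdot 2}}{(e^x+e^{-x})^3}$) and, even after correcting it, only tracks an exponent of roughly $e^{5x}/(2x)$ before asserting the $e^{4x}/(2x)$ conclusion; since all downstream uses only need some bound of the form $p(\beta^{-1/4})\exp(cD^2\ell\beta)$, this discrepancy is harmless there, but your derivation actually proves the lemma as stated.
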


\begin{proof}
  For consider the case that $D^2\ell\beta < 8$ and $a =
  \frac{4}{D^2\beta}$. The lower bound on $a$ is derived by combining
  the inequality $D^2\ell \beta <8$ with the expression for $a$. To
  derive the upper bound on $c_{\ref{W1mult}}$, first note that 
  \begin{equation*}
    D\omega_N \xi = \frac{D^2 \ell\beta}{8} <1
  \end{equation*}
  and so the numerator is bounded by $e^{D\omega_N \xi} \le e$.

  Now we compute a lower bound on the denominator. By the choice of
  $a$, we have that $D\omega_N=1$ and $\xi =
  \frac{D^2\ell\beta}{8}<1$.
  We use the fact that $\sin\left(\sqrt{1-\xi^2}\right)<
    \sqrt{1-\xi^2}$ to give
  \begin{equation*}
    \cos\left(\sqrt{1-\xi^2}\right)-\frac{\xi}{\sqrt{1-\xi^2}}
    \sin\left(
      \sqrt{1-\xi^2}
      \right) \ge \cos\left(\sqrt{1-\xi^2}\right)-\xi
    \end{equation*}
    Then we use the elementary bound
    \begin{equation*}
      \cos(\theta)=\cos(0)-\int_0^\theta \sin(t)dt \ge 1-\int_0^\theta
      t dt = 1-\frac{\theta^2}{2}
    \end{equation*}
    to give
    \begin{equation*}
      \cos\left(\sqrt{1-\xi^2}\right)-\frac{\xi}{\sqrt{1-\xi^2}}
    \sin\left(
      \sqrt{1-\xi^2}
    \right) \ge 1 - \frac{1}{2}\left(
      1-\xi^2
    \right)-\xi = \frac{1}{2}(1-\xi)^2.
  \end{equation*}
  Combining the bounds for the numerator and the denominator, along
  with the fact that $\xi=\frac{D^2\ell\beta}{8}$ gives the desired
  bound on $c_{\ref{W1mult}}$.

  Now consider the case that $a =
  \frac{D^2\ell^2\beta}{16}\left(1-\tanh^2\left(
      \frac{D^2\ell\beta}{8}\right)\right).$ For simpler notation, set
  $x=\frac{D^2\ell\beta}{8}$ so that $a =
  \frac{\ell}{2}x\left(1-\tanh^2(x)\right).$ Then the exponential
  decay factor can be bounded using the fact that:
  \begin{equation*}
    x\left(1-\tanh^2(x)\right) = \frac{4x}{\left(e^x+e^{-x}\right)^2}
    \ge xe^{-2x}.
  \end{equation*}

  Now we bound $c_{\ref{W1mult}}$. Using the definition of $x$, the numerator is given by $e^x$.

  Now we derive a lower bound on
  the denominator of $c_{\ref{W1mult}}$. Let $y = D\omega_N\sqrt{\xi^2-1}$. Plugging in the expressions for $\xi$,
  $a$, and $x$ shows that
  \begin{equation*}
    y = D\omega_N\sqrt{\xi^2-1}=x\tanh(x) \quad \textrm{and}\quad
    \frac{\sqrt{\xi^2-1}}{\xi}=\tanh(x).
  \end{equation*}
  Then the denominator of $c_{\ref{W1mult}}$ can be expressed as
  \begin{equation}
    \label{eq:hyperbolicDenominator}
    \cosh(y)-\frac{\sinh(y)}{\tanh(x)}=\frac{\cosh(y)}{\tanh(x)}\left(
      \tanh(x)-\tanh(y)
    \right) \ge \frac{\tanh(x)-\tanh(y)}{\tanh(x)},
  \end{equation}
  where the inequality follows from the fact that $\cosh(y)\ge 1$.

  Using the fact that $\frac{d}{dx}\tanh(x)=1-\tanh^2(x)$ and that
  $\tanh$ is monotonically increasing gives the bound:
  \begin{align}
    \nonumber
    \tanh(x)-\tanh(y)&=\int_y^x\left(
      1-\tanh^2(z)
                       \right)dz \\
    \nonumber
                     &\ge (x-y) (1-\tanh^2(x))\\
    \nonumber
                     &=x(1-\tanh(x))^2(1+\tanh(x)) \\
    \label{eq:tanhGrowth}
    &=\frac{8xe^{-2x}}{\left(e^{x}+e^{-x}\right)^3}.
    \end{align}
    The second-to-last line uses the fact that $y=x\tanh(x)$, while
    the last line uses that
    $\tanh(x)=\frac{e^{x}-e^{-x}}{e^x+e^{-x}}$. 

    Combining \eqref{eq:hyperbolicDenominator} and
    \eqref{eq:tanhGrowth} shows that
    \begin{equation*}
      \cosh(y)-\frac{\sinh(y)}{\tanh(x)}\ge
      \frac{8x
        e^{-2x}}{\left(e^x+e^{-x}\right)^2\left(e^x-e^{-x}\right)}\ge 2xe^{-5x}.
    \end{equation*}
    Combining the numerator and denominator bounds shows that
    \begin{equation*}
      c_{\ref{W1mult}} \le \frac{e^{4x}}{2x}.
    \end{equation*}
    Plugging in the expression for $x$ gives the result. 
  \end{proof}

\paragraph*{Proof of Proposition~\ref{prop:constants}}
Collecting the constant definitions from the proofs above gives the following relations, in addition to the definitions of $c_{\ref{W1mult}}$ and $a$:
\begin{align*}
  c_{\ref{globalContract}} &= c_{\ref{W1mult}}D \\
c_{\ref{globalConst}}&=c_{\ref{AtoC}}+c_{\ref{CtoM}} \\
  c_{\ref{AtoC}}&=2^{1/4} \left( c_{\ref{tanakaRt}} + c_{\ref{tanakaConst}} \right) e^\ell \left(1+\frac{c_{\ref{W1mult}}}{1-e^{-a/2}}\right)  \\
  c_{\ref{CtoM}} &=  (c_{\ref{aveTanakaLin}} + c_{\ref{aveTanakaRoot}} + c_{\ref{aveTanakaTQ}}) e^{\ell} \left(1+\frac{c_{\ref{W1mult}}}{1-e^{-a/2}}\right) \\
      c_{\ref{tanakaRt}}  &=  \sqrt{2 \left( \frac{u+\ell D}{2r} +\frac{n \sigma}{\sqrt{2}r} +\frac{2n\sqrt{2}}{r\sqrt{\beta}}\right)
                          \left(
        \frac{n}{\beta} + Du + 2Dn\sigma
        \right)}
                       \\
    c_{\ref{tanakaConst}} &=
                            \sqrt{2 \left(Du + 2n\sigma + \frac{n}{\beta} \right)} + D \sqrt{\frac{u+\ell D}{2r} +\frac{n \sigma}{\sqrt{2}r} +\frac{2n\sqrt{2}}{r\sqrt{\beta} } } \\
      c_{\ref{aveTanakaLin}} &=
                             2\sigma \sqrt{n} \\
    c_{\ref{aveTanakaRoot}} &=
\sqrt{\frac{64 n \sigma D\sqrt{2\pi} }{r} } \\
    c_{\ref{aveTanakaTQ}} &=
\sqrt{\frac{128 n \sigma\sqrt{2\pi}}{r} \left(
        \frac{n}{\beta} + Du + 2Dn\sigma
    \right)} 
\end{align*}

Since neither $c_{\ref{W1mult}}$ nor $a$ depend on the state dimension, $n$, we can see that the constants grow linearly with $n$. 

In the case of $D^2\ell\beta <8$ and $a=\frac{4}{D^2\beta}$, Lemma~\ref{lem:contractionConstants} implies that $(1-e^{-a/2})^{-1}\le (1-e^{-\ell/4})^{-1}$. So, the only way for the constants to become large as $\beta$ varies is for $\beta^{-1}$ or $\left(1-\frac{D^2\ell \beta}{8}\right)^{-1}$ to approach $\infty$. In particular the terms that can go to $\infty$ are $\beta^{-1/4}$ and $\left(1-\frac{D^2\ell \beta}{8}\right)^{-2}$ in this case.

Now consider the case that
  \begin{align*}
    a&=\frac{D^2\ell^2\beta}{16}\left(1-\tanh^2\left(
       \frac{D^2\ell\beta}{8}\right)\right) \\
    c_{\ref{W1mult}}&= \frac{e^{D\omega_N \xi}}{\cosh(D\omega_N\sqrt{\xi^2-1})-\frac{\xi}{\sqrt{\xi^2-1}}\sinh(D\omega_N\sqrt{1-\xi^2})}.
  \end{align*}

  The general lower bound on $a$ is taken directly from Lemma~\ref{lem:contractionConstants}. 

  The main
 term that remains to be bounded is $\frac{1}{1-e^{-a/2}}$. To perform this bound, we first note that for all $y> 0$,
  \begin{equation}
  \label{eq:invBound}
  \frac{1}{1-e^{-y}}\le \max\left\{\frac{2}{y},\frac{1}{1-e^{-1}} \right\}.
  \end{equation}
  Indeed, the left side is monotonically decreasing, and so for all $y\ge 1$, the bound
$$
\frac{1}{1-e^{-y}}\le \frac{1}{1-e^{-1}} 
$$
holds.

Now, we use the elementary bound that for all $y\ge 0$, $e^{-y}\le 1-y+\frac{1}{2}y^2$. This inequality appears in \cite{lattimore2019bandit} without proof, but can be proved by showing that $e^y\left(1-y+\frac{1}{2}y^2\right)$ is monotonically increasing. In particular, when $0<y\le 1$, we have that 
\begin{equation*}
  \frac{1}{1-e^{-y}}\le \frac{1}{y\left(1-\frac{1}{2}y\right)} \le \frac{2}{y}.
\end{equation*}
Combining the bounds for $y\ge 1$ an $0<y\le 1$ gives (\ref{eq:invBound}). 

So, now combining (\ref{eq:invBound}) with the results of Lemma~\ref{lem:contractionConstants} shows that
\begin{equation*}
  \frac{c_{\ref{W1mult}}}{1-e^{-a/2}} \le  \frac{4}{D^2\ell\beta} \exp\left(\frac{D^2\ell\beta}{2} \right)
  \max\left\{
  \frac{32}{D^2\ell^2\beta}\exp\left(
    \frac{D^2\ell \beta}{4}
  \right),\frac{1}{1-e^{-1}}
  \right\}
\end{equation*}

Then combining this above bound with the various expressions for the constants shows that there is a polynomial $p$ such $c_{\ref{globalContract}}$ and $c_{\ref{globalConst}}$ can be bounded by
$$
c_i\le 
p(\beta^{-1/4})
\exp\left(\frac{3 D^2\ell \beta}{4}\right)
$$
\hfill$\blacksquare$

\section{Near-Optimality of Gibbs Distributions}
\label{sec:nearOpt}

In this appendix, we prove Proposition~\ref{prop:suboptimality}, 
which states that the algorithm can produce near-optimal samples, provided that $\beta$ is sufficiently large. This proposition depends on an elementary  result on the properties of Gibbs distributions constrained to $\cK$, shown next. 

\begin{lemma}
  \label{lem:KLfromUniform}
  For any function $g:\cK\to \bbR$, let  $\pi_g$ be the probability
  measure defined by $\pi_g(A) =\frac{\int_A e^{-g(x)}dx}{\int_{\cK}
    e^{-g(y)}dy}$. In particular, $\pi_0$ corresponds to the uniform
  measure. If $g$ is $\ell$-lipschitz, then the KL divergence of
  $\pi_g$ from the uniform measure is bounded by:
  \begin{multline}
    \nonumber
    0 \le \KL(\pi_g,\pi_0)\le
    \\
    \min_{x\in\cK} g(x) -
    \bbE_{\pi_g}[g(\bx)] +
    n\log\left(
      \max\left\{\frac{2}{r},\frac{(r+\sqrt{r^2+D^2})\ell}{r\log 2}\right\}
    \right) + 
    \log(2D^n).
  \end{multline}
\end{lemma}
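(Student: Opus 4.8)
The plan is to write the KL divergence in closed form and then lower bound the normalizing constant of $\pi_g$ using the Lipschitz property of $g$ together with the inscribed ball in $\cK$.

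First I would record the identity. Since $g$ is continuous on the compact set $\cK$ it is bounded, so $\pi_g$ and $\pi_0$ are mutually absolutely continuous with strictly positive densities, the relative entropy is finite, and $\KL(\pi_g,\pi_0)\ge 0$ by Gibbs' inequality. Writing $\mathrm{vol}$ for Lebesgue measure and $Z_g=\int_\cK e^{-g(y)}\,dy$, the density of $\pi_g$ with respect to $\pi_0$ is $x\mapsto \mathrm{vol}(\cK)e^{-g(x)}/Z_g$, so
\[
\KL(\pi_g,\pi_0)= -\bbE_{\pi_g}[g(\bx)] + \log\mathrm{vol}(\cK) - \log Z_g .
\]
Thus the whole task reduces to a lower bound of the form $Z_g\ge \mathrm{vol}(\cK)\,e^{-\min_{x\in\cK}g(x)}\cdot(\text{dimension-dependent factor})$.

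Next I would lower bound $Z_g$. Let $x^\star\in\cK$ attain $\min_{x\in\cK}g(x)$; the $\ell$-Lipschitz bound gives $g(y)\le g(x^\star)+\ell\|y-x^\star\|$ on $\cK$, hence $Z_g\ge e^{-g(x^\star)}\int_\cK e^{-\ell\|y-x^\star\|}\,dy$. For the remaining integral I would use the geometry: since $B(0,r)\subseteq\cK$, $x^\star\in\cK$, and $\cK$ is convex, the contracted ball $B\big((1-\lambda)x^\star,\lambda r\big)$ lies in $\cK$ for every $\lambda\in[0,1]$ (each of its points is the convex combination $(1-\lambda)x^\star+\lambda z$ of $x^\star$ and some $z\in B(0,r)$), and on that ball $\|y-x^\star\|\le\lambda(\|x^\star\|+r)$. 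Choosing $\lambda=\min\{1,\ \log 2/(\ell(\|x^\star\|+r))\}$ keeps $e^{-\ell\lambda(\|x^\star\|+r)}\ge\tfrac12$, so with $\omega_n$ the volume of the unit ball in $\bbR^n$,
\[
\int_\cK e^{-\ell\|y-x^\star\|}\,dy \ \ge\ \tfrac12\,\omega_n(\lambda r)^n .
\]
To close, I would bound $\mathrm{vol}(\cK)\le\omega_n D^n$ (since $\cK$, having diameter $D$, sits in the ball of radius $D$ about any of its points), so $\omega_n$ cancels and
\[
\log\mathrm{vol}(\cK)-\log Z_g\ \le\ \min_{x\in\cK}g(x) + \log(2D^n) + n\log\max\Big\{\tfrac1r,\ \tfrac{\ell(\|x^\star\|+r)}{r\log 2}\Big\}.
\]
Using $0\in\cK$ to get $\|x^\star\|\le D\le\sqrt{r^2+D^2}$ and $\tfrac1r\le\tfrac2r$ enlarges the maximum to the one in the statement; plugging this back into the KL identity gives the claimed inequality, and $\KL(\pi_g,\pi_0)\ge 0$ is the lower bound.

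The KL identity and the volume/exponential estimates are routine; the delicate step is the lower bound on $\int_\cK e^{-\ell\|y-x^\star\|}\,dy$. The key idea there is to integrate over a ball that \emph{shrinks toward} $x^\star$ rather than a fixed ball, and to pick the contraction scale $\lambda$ so the exponential weight loses only a constant factor while the volume loses only $\lambda^{-n}$; this is what replaces a naive $e^{-O(\ell D)}$ estimate of $Z_g$ by the $O\!\big(n\log(\ell D/r)\big)$ bound on the KL divergence.
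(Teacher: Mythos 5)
Your proof is correct, and while it follows the same skeleton as the paper's argument (write $\KL(\pi_g,\pi_0)=-\bbE_{\pi_g}[g(\bx)]+\log\vol(\cK)-\log Z_g$, bound $\vol(\cK)$ by the volume of a radius-$D$ ball, and lower bound $Z_g$ by $e^{-\min g}$ times the volume of a region on which $e^{-\ell\|y-x^\star\|}\ge \tfrac12$, so that the unit-ball volumes cancel), it handles the one delicate step differently. The paper lower bounds $\vol\bigl(\cK\cap\cB_{x^\star}(\log 2/\ell)\bigr)$ by explicitly constructing an inscribed ball of radius $\min\{r/2,\ r\epsilon/(r+\sqrt{r^2+D^2})\}$ via a three-case analysis (cone geometry when $0\notin\cB_{x^\star}(\epsilon)$, plus two cases when $0\in\cB_{x^\star}(\epsilon)$). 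You instead use the convexity contraction $(1-\lambda)x^\star+\lambda B(0,r)\subseteq\cK$ with $\lambda$ tuned so that $\ell\lambda(\|x^\star\|+r)\le\log 2$, which replaces the case analysis by a one-line inclusion and yields the slightly sharper intermediate constant $\max\{1/r,\ \ell(\|x^\star\|+r)/(r\log 2)\}$, correctly relaxed (via $\|x^\star\|\le D\le\sqrt{r^2+D^2}$ and $1/r\le 2/r$) to the constant in the statement. What the paper's route buys is an explicit inscribed-ball radius for $\cK\cap\cB_{x^\star}(\epsilon)$ for arbitrary $\epsilon$, a geometric fact of some independent interest; what your route buys is brevity and a marginally tighter bound before relaxation, with no figure or case split needed. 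All the supporting steps you use (the KL identity, Gibbs' inequality for nonnegativity, $\|y-x^\star\|\le\lambda(\|x^\star\|+r)$ on the contracted ball, $\cK\subseteq\cB_{x^\star}(D)$) check out.
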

\begin{proof}
  The lower-bound on the KL divergence is standard
  \cite{cover2012elements,gray2011entropy}. Now we prove the upper bound. 
  
  Say $x^\star$ minimizes $g(x)$ over $\cK$. A minimizer exists
  because $g$ is Lipschitz and $\cK$ is compact. Multiplying the
  numerator and denominator of the definition of $\pi_g$ by
  $e^{g(x^\star)}$ gives
  $$\pi_g(A) =\frac{\int_A e^{g(x^\star)-g(x)}dx}{\int_{\cK}
    e^{g(x^\star)-g(y)}dy}.$$

  Note that $\pi_0(dx) = \frac{dx}{\vol(\cK)}$. Thus, the definition
  of KL divergence gives:
  \begin{equation*}
    \KL(\pi_g,\pi_0) = \bbE_{\pi_g}[g(x^\star)-g(\bx)] +
    \log\left(\vol(\cK)\right)-
    \log\left(\int_{\cK} e^{g(x^\star)-g(x)}dx \right)
  \end{equation*}
  Note that $\cK$ is contained in a ball of radius $D$, so that
  $\vol(\cK)\le D^n \frac{\pi^{\frac{n}{2}}}{\Gamma\left(\frac{n}{2}+1 \right)}$.
  
  So, the desired upper bound is obtained by providing a lower bound
  on $\int_{\cK}e^{g(x^\star)-g(x)}dx$.
  Note that
  \begin{equation*}
    0 \ge g(x^\star) - g(x) \ge -\ell\|x-x^\star\|
  \end{equation*}
  Also, note that $e^{-\ell \|x-x^\star\|}\ge \frac{1}{2}$ if and only
  if $\|x-x^\star\|\le \frac{\log 2}{\ell}$.

  Set $\epsilon = \frac{\log 2}{\ell}$ and let
  $\cB_{x^\star}(\epsilon)$ be the ball of radius $\epsilon$ centered
  at $x^\star$. Then for any $\cS\subset \cK\cap
  \cB_{x^\star}(\epsilon)$ we have that
  \begin{equation*}
    \int_{\cK} e^{g(x^\star)-g(x)}dx \ge \frac{1}{2}\vol\left(
      \cK\cap
  \cB_{x^\star}(\epsilon)
    \right) \ge \frac{1}{2}\vol(\cS)
  \end{equation*}
  We will show that $\cK\cap \cB_{x^\star}(\epsilon)$ always contains
  a ball of radius
  $\min\{\frac{r}{2},\frac{r\epsilon}{r+\sqrt{r^2+D^2}}\}$. The lemma
  then follows by using the fact that a ball of radius $\rho$ has
  volume given by
  $\frac{\pi^{\frac{n}{2}}}{\Gamma\left(\frac{n}{2}+1\right)}\rho^n$. Note
   that the constant factors of
  $\frac{\pi^{\frac{n}{2}}}{\Gamma\left(\frac{n}{2}+1\right)}$ cancel
  in the bound.

  To find the desired ball, we consider three cases: 1) $0\notin
  \cB_{x^\star}(\epsilon)$, 2) $0\in \cB_{x^\star}(\epsilon)$ and
  $\epsilon \le r$, and 3) $0\in \cB_{x^\star}(\epsilon)$ and
  $\epsilon > r$. 

  When $0\notin \cB_{x^\star}(\epsilon)$, we construct the desired
  ball from the geometry of Fig.~\ref{fig:cone}. Without loss of
  generality, we can assume that $x^\star = -\|x^\star\| e_1$, where
  $e_1$ is the first standard unit vector. Also, since $0\notin
  \cB_{x^\star}(\epsilon)$, we must have that
  $\|x^\star\|>0$. Consider the convex set defined by:
  \begin{subequations}
    \label{eq:coneSubset}
    \begin{gather}
      \|x-x^\star\|\le \epsilon \\
    -\|x^\star\|\le
    x_1\le  0 \\
    \label{eq:cone}
 \sqrt{\sum_{i=2}^n x_i^2}\le r + \frac{r}{\|x^\star\|}x_1
\end{gather}
\end{subequations}
The set defined by \eqref{eq:coneSubset} is a subset of
$\cK\cap\cB_{x^\star}(\epsilon)$. The angle between the $x^\star$ and
the conic constraint boundary, from \eqref{eq:cone}, is given by $\theta =
\tan^{-1}\frac{r}{\|x^\star\|}$. For any $d>0$, the largest ball
centered at
$(-\|x^\star\|+d)e_1$ which fits into the conic set from
\eqref{eq:cone} has radius $d\sin\theta$. The largest such ball that
is also contained in $\cB_{x^\star}(\epsilon)$ is found by setting
$d+d\sin\theta=\epsilon$. Plugging the definitions of $d$ and $\theta$
shows that the corresponding ball has radius $\rho$, which satifies
\begin{equation*}
  \frac{r\epsilon}{r+\sqrt{r^2+D^2}}\le \rho =
  \frac{r\epsilon}{r+\sqrt{r+\|x^\star\|^2}} < \frac{\epsilon}{2}.
\end{equation*}

  \begin{figure}
    \centering
    \begin{tikzpicture}
      \def\r{2}
      \def\D{3}
      \def\eps{2}
      \draw[] (0,0) circle (\eps);
      \draw
      (\D,0) node[label={[xshift=-.2cm, yshift=-.1cm]0}] {} coordinate (origin) --
      (0,0) node[left] {$x^\star$} coordinate (x) -- (\D,\r)
      coordinate (top)
      ;
      \draw
      (0,0) -- (\D,-\r) -- (\D,\r);

      \draw[|-|] (\D+.2,0) -- node[right] {$r$} (\D+.2,\r);


      \draw[|-|] (0,\r+.2)-- node[above] {$\epsilon$} (\eps,\r+.2);
      
      \draw[dotted] ($({\eps*sqrt(\r*\r +\D*\D)/(\r+sqrt(\r*\r
        +\D*\D))},0)$) circle ({\eps * \r/(\r + sqrt(\r*\r
        +\D*\D))});

      \draw[fill] (0,0) circle (.05);
      \draw[fill] ($({\eps*sqrt(\r*\r +\D*\D)/(\r+sqrt(\r*\r
        +\D*\D))},0)$) circle (.05); 
    \end{tikzpicture}
    \caption{\label{fig:cone} {\bf Case of $0\notin
        \cB_{x^\star}(\epsilon)$.} Using elementary trigonometry the
      largest inscribed circle can be calculated.}
  \end{figure}
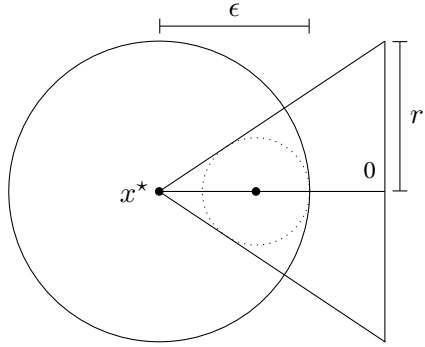

  Now consider the case that $0\in\cB_{x^\star}(\epsilon)$ and
  $\epsilon \le r$. It follows that $x^\star \in \cB_0(r)$. Then
  applications of the triangle inequality show that
  $\cB_{x^\star/2}(\epsilon/2) \subset \cB_0(r)\cap
  \cB_{x^\star}(\epsilon)\subset \cK\cap
  \cB_{x^\star}(\epsilon)$.
  Thus, a
  ball of radius $\epsilon/2> \frac{r\epsilon}{r+\sqrt{r+D^2}}$ has
  been constructed in  $\cK\cap \cB_{x^\star}(\epsilon)$.

  Finally, consider the case that $0\in\cB_{x^\star}(\epsilon)$ and
  $\epsilon > r$. If $\|x^\star\| \ge r/2$, then $\cB_{\frac{r x^\star}{2\|x^\star\|}}(r/2)\subset\cB_0(r)\cap
  \cB_{x^\star}(\epsilon)\subset \cK\cap
  \cB_{x^\star}(\epsilon)$. Otherwise, if $\|x^\star \|< r/2$, then
  $\cB_{0}(r/2) \subset\cB_0(r)\cap
  \cB_{x^\star}(\epsilon)\subset \cK\cap
  \cB_{x^\star}(\epsilon)$. In either case, a
  ball of radius $r/2$ has
  been constructed in  $\cK\cap \cB_{x^\star}(\epsilon)$.
\end{proof}

\paragraph*{Proof of Proposition~\ref{prop:suboptimality}}

Recall that $\bar f$ is $u$-Lipschitz, so that $\beta \bar f$ is $\beta u$-Lipschitz.
Assume that $\tilde\bx$ is drawn according to $\pi_{\beta \bar f}$. So, applying Lemma~\ref{lem:KLfromUniform} to $\beta\bar f$ and dividing by $\beta$ implies that
\begin{equation}
  \label{eq:nearOptGibbs}
\bbE[\bar f(\tilde \bx)]\le \min_{x\in\cK} \bar f(x) + 
    \frac{n}{\beta}\log\left(2D
      \max\left\{\frac{2}{r},\frac{(r+\sqrt{r^2+D^2})u\beta}{r\log 2}\right\}
    \right).
\end{equation}

 Let $\bx_k$ be the $k$-th iterate of the algorithm.
 \begin{align*}
\MoveEqLeft
   \bbE[f(\bx_k)] \\
   &\overset{\textrm{Kantorovich Duality}}{\le}  \bbE_{\pi_{\beta \bar f}}[\bar f(\bx)] + uW_1(\cL(\bx_k),\pi_{\beta \bar f}) \\
  &\overset{\eqref{eq:nearOptGibbs}}{\le} \min_{x\in\cK} \bar f(x) + 
uW_1(\cL(\bx_k),\pi_{\beta \bar f})+\frac{n\log(c_{\ref{subOpt}}\max\{1,\beta\})}{\beta },
 \end{align*}
 where $c_{\ref{subOpt}} = 2D\max\left\{\frac{2}{r},\frac{(r+\sqrt{r^2+D^2})u}{r\log 2}\right\}$.

 Now we will show how to tune the parameters to achieve an average suboptimality of $\epsilon$.

 First, we choose $\beta$ so that
 $
\frac{n\log(c_{\ref{subOpt}}\max\{1,\beta\})}{\beta } \le \frac{\epsilon}{2}.
$
Without loss of generality, assume that $\beta \ge 1$. Set $x=\log(c_{\ref{subOpt}} \beta)$, so that $\beta = c_{\ref{subOpt}}^{-1}e^x$ and the required bound becomes:
$$
xe^{-x}\le \frac{c_{\ref{subOpt}}\epsilon}{2n}
$$

Fix any $\lambda\in(0,1)$. Then the maximum value of $xe^{-(1-\lambda)x}$ occurs at $x=(1-\lambda)^{-1}$, so that
for all $x\in\bbR$:
\begin{equation}
  \label{eq:removeLog}
xe^{-x}\le \frac{1}{(1-\lambda) e}e^{-\lambda x}.
\end{equation}
So, it suffices to set $e^{-\lambda x} \le \frac{c_{\ref{subOpt}}\epsilon (1-\lambda)e}{2n}$. Plugging in the definition of $x$ and re-arranging shows that a sufficient condition for  $
\frac{n\log(c_{\ref{subOpt}}\beta)}{\beta } \le \frac{\epsilon}{2}
$ is given by:
\begin{equation}
\label{eq:betaRequirement}
\beta \ge c_{\ref{subOpt}}^{-1}\left(
  \frac{2n}{c_{\ref{subOpt}}(1-\lambda) \epsilon e}
  \right)^{1/\lambda}.
\end{equation}

  Now, for fixed $\beta\ge 1$, the bounds from Theorem~\ref{thm:nonconvexLangevin} and Proposition~\ref{prop:constants} to give that
  \begin{align*}
    W_1(\cL(\bx_k),\pi_{\beta \bar f})
    &
      \left(c_{\ref{globalContract}}+\frac{c_{\ref{globalConst}}}{(4a)^{1/4}}\right) T^{-1/4}(\log T)^{1/2} \\
    &\le p(1) e^{\frac{3 D^2\ell \beta}{4}}
      \left(1+\frac{e^{\frac{D^2\ell\beta}{16}}}{4^{1/4}c_{\ref{smallA}}}\right) T^{-1/4}(\log T)^{1/2} \\
    &\le p(1) \left(1+\frac{1}{4^{1/4}c_{\ref{smallA}}}\right) \exp\left(\frac{13 D^2\ell \beta}{16} \right) T^{-1/4}(\log T)^{1/2} 
  \end{align*}

  Similar to the derivation of (\ref{eq:removeLog}) we have for all $\delta \in (0,1/2)$ and all $T>0$:
  \begin{align*}
    T^{-1/4}(\log T)^{1/2} \le \sqrt{\frac{T^{-\frac{1}{2}+\delta}}{e\delta}}
  \end{align*}

  Thus, to have $uW_1(\cL(\bx_k),\pi_{\beta \bar f})\le \frac{\epsilon}{2}$, it suffices to have 
  $$
  T^{-\frac{1}{2}+\delta} \le e\delta \left(\frac{\epsilon}{2}\right)^2 \left(p(1) \left(1+\frac{1}{4^{1/4}c_{\ref{smallA}}}\right) \exp\left(\frac{13 D^2\ell \beta}{16} \right) \right)^{-2} =:\hat\epsilon,
  $$
  which occurs whenever
  $$
  T\ge \frac{1}{\hat\epsilon^{\frac{2}{1-2\delta}}} 
  $$
\const{timeBound}
In particular, there is a constant, $c_{\ref{timeBound}}$, independent of $\eta$, $\beta$, $\epsilon$, $\lambda$, and $\delta$, such that the bound above holds whenever
\begin{equation}
  \label{eq:TRequirement}
T\ge \frac{c_{\ref{timeBound}}^{\frac{2}{1-2\delta}}}{\epsilon^{\frac{4}{1-2\delta}}} \exp\left(\frac{13 D^2\ell\beta}{4(1-2\delta)}\right).
\end{equation}
The result now follows by combining (\ref{eq:betaRequirement}) and (\ref{eq:TRequirement}), noting that $\frac{4}{1-2\delta}$ can take any value $\rho > 4$ and $1/\lambda$ can take any value $\zeta  >1$. 
 \hfill$\blacksquare$

\section{The Skorokhod Problem}
\label{appsec:skorohod}

This appendix describes basic results on the Skorokhod problem, which is used to construct solutions to reflected SDEs. First we describe some existing theory. Then we present limiting argument that is used to translate results on compact convex sets with smooth boundaries general compact convex sets. 

\subsection{Background}
A classical construction for constraining stochastic processes to remain
in a set is based on the Skorokhod problem, which we describe
below. This will be useful, in particular, for analyzing projected
gradient algorithms in continuous time. 

Let $\cK$ be a convex subset of $\bbR^n$ with non-empty interior.  
Let $w:[0,\infty)\to \bbR^n$
be a piecewise-continuous function with $w_0\in \cK$. For each $x\in\bbR^n$, let
$N_{\cK}(x)$ be the normal cone at $x$. 
Then the functions $x_t$ and
$\phi_t$ solve the \emph{Skorokhod problem} for $w_t$ if the
following conditions hold:
\begin{itemize}
\item $x_t = w_t+\phi_t \in \cK$ for all $t\in [0,T)$
\item The function $\phi$ has the form $\phi(t) = -\int_0^t v_s
  d\mu(s)$, where $\|v_s\|\in \{0,1\}$ and $v_s\in N_{\cK}(x_s)$ for
  all $s\in [0,T)$, while the measure, $\mu$, satisfies
  $\mu([0,T))<\infty$ for any $T>0$. 
\end{itemize}

For each $w$, the corresponding functions $x_t$ and $\phi_t$ exist and
are unique \cite{tanaka1979stochastic}. 
Note that if $x_t \in
\interior(\cK)$, then $N_{\cK}(x_t)=\{0\}$, and so $v_t = 0$. Thus,
without loss of generality, we can assume that $\mu$ is supported
entirely on the times in which $x_t \in\partial \cK$. 
In many cases, we are primarily interested in $x_t$ and so we will
often refer to $x_t$ as the solution of the Skorokhod problem
corresponding to $w_t$. By existence and
uniqueness, we can view the Skorokhod problem solution as a mapping:
$x=\cS(w)$.


The connection between Skorokhod problems and projection algorithms
becomes more concrete when $w_t$ is piecewise constant. Specifically,
assume that $0= t_0 < t_1 < \cdots < t_{M-1} \le T$ are the jump
points of $w_t$, and let $S_k = [t_k,t_{k+1})$ for $k < M-1$ and
$S_{M-1}=[t_{M-1},T]$. Then $w_t$ can be represented as
\begin{equation*}
  w_t  =\sum_{k=0}^{M-1}w_{t_k}\indic_{S_k}(t).
\end{equation*}

Then the solution of the Skorokhod problem has the form
\begin{equation*}
  x_t = \sum_{k=0}^{M-1} x_{t_k} \indic_{S_k}(t), \quad \phi_t =
  -\int_0^t \sum_{k=0}^{M-1} v_{t_k} d_{k+1}\delta(s-t_k)ds,
\end{equation*}
where $x_0=w_0$, $v_0 = 0$, and
\begin{align*}
  x_{t_{k+1}} &= \Pi_{\cK}(x_{t_{k}}+w_{t_{k+1}}-w_{t_k}) \\
  d_{k+1} &= \|(x_{t_k}+w_{t_{k+1}}-w_{t_k})-x_{t_{k+1}}\| \\
  v_{t_{k+1}} &= \begin{cases}
    0 & x_{t_k}+w_{t_{k+1}}-w_{t_k} \in \cK \\
    \frac{(x_{t_k}+w_{t_{k+1}}-w_{t_k})-x_{t_{k+1}}}{d_{k+1}} & x_{t_k}+w_{t_{k+1}}-w_{t_k} \notin \cK.
    \end{cases}
\end{align*}

In \cite{tanaka1979stochastic}, a construction for the Skorokhod solution for a continuous trajectory, $w$, proceeds as follows. The continuous trajectory is approximated by piecewise constant trajectories of the form $w_{\floor*{ti}/i}$ for positive integers $i$. Then the Skorokhod problems are solved for these discretized trajectories and shown to converge to a unique solution for the original Skorokhod problem for $w$.

The existence of a solution to the Skorokhod problem for arbitrary continuous trajectories can be used to construct unique solutions to reflected stochastic differential equations. 
 In particular, the integrated form of a reflected SDE can be expressed as:
 \begin{equation}
\label{eq:SDE}
\bx_t = \bx_0 + \int_0^t f(s,\bx_s)ds + \int_0^t \sigma(s,\bx_s)d\bw_s -\int_0^t\bv_s d\bmu(s),
\end{equation}
where $\bx_0\in\cK$, $-\int_0^t\bv_s d\bmu(s)$ is the reflection process that ensures that $\bx(t)\in\cK$ for all $t\ge 0$.  Note that $\bx$ is the Skorokhod solution to the process:
$$
\by_t=
\bx_0 + \int_0^t f(\bx_s)ds + \int_0^t \sigma(\bx_s)d\bw_s.
$$

A construction for $\bx_t$ based on Picard iteration was given in \cite{tanaka1979stochastic}. The paper \cite{slominski2001euler}, examines the  Euler scheme defined by: $\bar \bx^m_0=\bx_0$ and for integers $k\ge 0$:
\begin{equation}
\label{eq:skorokhodEuler}
\bar \bx_{(k+1)/m}^m = \Pi_{\cK}\left(\bar \bx_{k/m}^m+\frac{1}{m}f(\bar\bx_{k/m}^m) +
\sigma(\bar \bx_{k/m}^m)(\bw_{(k+1)/m}-\bw_{k/m})\right).
\end{equation}
Then for $t\in[k/m,(k+1)/m)$, we set $\bar\bx_{t}^m =\bar\bx_{k/m}^m$. Corollary 3.3 of \cite{slominski2001euler} shows that $\bar\bx^m$ converges uniformly to $\bx$ on compact subsets of $[0,\infty)$.

\subsection{Approximating the Domain}
In Appendix~\ref{sec:invariance} we show that the distribution $\pi_{\beta \bar f}$ from (\ref{eq:gibbs}) is invariant for the process $\bx^M$. However, many of the arguments are easier when $\cK$ has a smooth boundary. To handle the general case, we examine Skorokhod solutions on smooth approximations of $\cK$ and then use a limiting argument. Here we build the approximation results needed for this argument. The basic idea for this approximation is discussed in Section 2 of \cite{lions1984stochastic}, but not proved explicitly. 

Let $\cK_1\subset \cK_2 \subset \cdots \subset \cK$ be an increasing family of  convex compact sets such that if $S$ is any compact subset of $\mathrm{int}(\cK)$, then $S\subset\cK_i$ for all sufficiently large $i$. 

The approximation results are proved using the following fact about projections. 

\begin{lemma}
  \label{lem:projConverge}
  The functions $\Pi_{\cK_i}$ converge uniformly to $\Pi_{\cK}$ on compact subsets of $\bbR^n$. 
\end{lemma}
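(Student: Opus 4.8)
The plan is to reduce the statement to two standard facts about Euclidean projections onto closed convex sets: each $\Pi_{\cK_i}$ (and $\Pi_{\cK}$) is non-expansive, and the projection of a point onto a closed convex set is its \emph{unique} nearest point. Non-expansiveness makes the family $\{\Pi_{\cK_i}\}$ equi-Lipschitz (hence equicontinuous), so once pointwise convergence $\Pi_{\cK_i}(x)\to\Pi_{\cK}(x)$ is established, a finite-net argument upgrades it to uniform convergence on any compact set. The only substantive step is therefore the pointwise convergence.

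For the pointwise step, fix $x\in\bbR^n$ and write $p=\Pi_{\cK}(x)$ and $p_i=\Pi_{\cK_i}(x)$. Since $\cK_i\subseteq\cK$ we have $\|x-p\|\le\|x-p_i\|$ for every $i$. For the matching upper bound in the limit, pick any $q\in\mathrm{int}(\cK)$ --- which is nonempty because $\cK$ contains a ball of radius $r$ about the origin --- and for $t\in(0,1]$ set $q_t=(1-t)p+tq$. By the line-segment principle for convex sets (see \cite{rockafellar2015convex}), $q_t\in\mathrm{int}(\cK)$; since $\{q_t\}$ is a compact subset of $\mathrm{int}(\cK)$, the hypothesis on the family $\{\cK_i\}$ gives $q_t\in\cK_i$ for all sufficiently large $i$, and therefore $\|x-p_i\|\le\|x-q_t\|$ for such $i$. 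Letting $i\to\infty$ and then $t\downarrow0$ (so that $q_t\to p$) yields $\limsup_i\|x-p_i\|\le\|x-p\|$, hence $\|x-p_i\|\to\|x-p\|$. Because $\cK$ is bounded, $\{p_i\}$ is a bounded sequence; any subsequential limit $p^\star$ lies in the closed set $\cK$ and satisfies $\|x-p^\star\|=\|x-p\|$, so by uniqueness of the nearest point $p^\star=p$. Thus $p_i\to p$.

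To finish, let $S\subseteq\bbR^n$ be compact and $\epsilon>0$. Cover $S$ by finitely many balls of radius $\epsilon/3$ centered at points $x_1,\dots,x_N$, and use the pointwise convergence to choose $I$ with $\|\Pi_{\cK_i}(x_j)-\Pi_{\cK}(x_j)\|<\epsilon/3$ for all $j$ and all $i\ge I$. For arbitrary $x\in S$, taking $x_j$ with $\|x-x_j\|<\epsilon/3$ and applying the triangle inequality together with non-expansiveness of $\Pi_{\cK_i}$ and $\Pi_{\cK}$ gives $\|\Pi_{\cK_i}(x)-\Pi_{\cK}(x)\|<\epsilon$ for all $i\ge I$, which is the claimed uniform convergence on $S$.

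The main --- indeed essentially the only --- obstacle is the pointwise convergence step, and within it the observation that the projection $p$ of $x$ onto $\cK$ is a limit of points of $\mathrm{int}(\cK)$, hence eventually of every $\cK_i$; the remainder is routine non-expansiveness and Arzel\`a--Ascoli-type bookkeeping.
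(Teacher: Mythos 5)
Your proof is correct, but it follows a genuinely different route from the paper's. You argue qualitatively: first pointwise convergence, by sandwiching $\|x-\Pi_{\cK_i}(x)\|$ between $\|x-\Pi_{\cK}(x)\|$ (from $\cK_i\subseteq\cK$) and $\|x-q_t\|$ for points $q_t=(1-t)\Pi_{\cK}(x)+tq$ pushed into $\mathrm{int}(\cK)$ by the line-segment principle, then identifying the limit through uniqueness of the nearest point; and second, upgrading to uniform convergence on compacta via the $1$-Lipschitz (non-expansive) property of all the projections and a finite $\epsilon/3$-net. The paper instead gives a direct quantitative estimate: choosing $S=\lambda\cK$ so that every point of $\cK$ is within $\delta$ of $\cK_i$, it applies the generalized Pythagorean inequality for the projection onto $\cK_i$ to a point $y\in\cK_i$ near $\Pi_{\cK}(x)$ and deduces the explicit bound $\|\Pi_{\cK_i}(x)-\Pi_{\cK}(x)\|\le\delta+\sqrt{\delta^2+2\delta R}$ uniformly over $\|x\|\le R$, with no separate equicontinuity or net step. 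What your approach buys is brevity and reliance only on the most standard facts about convex projections (existence, uniqueness, non-expansiveness), at the cost of being purely asymptotic; the paper's argument buys an explicit modulus of convergence in terms of how closely $\cK_i$ approximates $\cK$, which would matter if one ever needed a rate, though for the limiting use made of this lemma in Lemma~\ref{lem:sdeApprox} your qualitative statement suffices. One small point worth making explicit in your write-up: for each fixed $t$ the singleton $\{q_t\}$ is the compact subset of $\mathrm{int}(\cK)$ to which you apply the hypothesis on the family $\{\cK_i\}$; as phrased, ``$\{q_t\}$ is a compact subset'' could be misread as referring to the whole segment over $t\in(0,1]$, which is not compact (though taking $t\in[t_0,1]$ would also work).
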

\begin{proof}
  We will show that for any $\epsilon>0$, there is an $i$ such that $\|\Pi_{\cK_i}(x)-\Pi_{\cK}(x)\|\le \epsilon$ for all $x\in \bbR^n$ with $\|x\|\le R$. By the monotonicity of $\cK_i$, the result then holds for all $j\ge i$.  Note that if $x\in \cK_i$, then $\Pi_{\cK_i}(x)=\Pi_{\cK}(x)=x$, so we only need to analyze the case that $x\notin\cK_i$.
  
  Let $\delta >0$ and let $S$ be a compact subset of $\mathrm{int}(\cK)$ such that $\dist(x,S)\le \delta $ for all $x\in\cK$. Here $\dist(x,S)$ is the distance function. Such  an $S$ can be chosen as $S=\lambda \cK$ for $\lambda\in (0,1)$ sufficiently close to $1$. Take $\cK_i$ such that $S\subset \cK_i$. This implies, in particular that for any $x\in \cK$, that $\dist(x,\cK_i)\le \delta$.

  Consider any $x\notin \cK_i$. By the distance assumption, there is a point $y\in \cK_i$ such that $\left\|y-\Pi_{\cK}(x)\right\|\le \delta$. The generalized Pythagorean inequality applied to $\|\cdot \|^2$ and $\cK_i$ implies that 
  \begin{equation}
    \label{eq:gpe}
  \|y-x\|^2 \ge \|y-\Pi_{\cK_i}(x)\|^2 + \|x-\Pi_{\cK_i}(x)\|^2.
  \end{equation}
(See \cite{herbster2001tracking} for more on the generalized Pythagorean inequality.) 

Note that $\|x-\Pi_{\cK_i}(x)\|\ge \|x-\Pi_{\cK}(x)\|$, since $\Pi_{\cK_i}(x)\in \cK$. Furthermore, the triangle inequality, followed by the distance assumption on $y$ imply that:
$$
\|y-x\|\le \|y-\Pi_{\cK}(x)\| + \|x-\Pi_{\cK}(x)\| \le \delta + \|x-\Pi_{\cK}(x)\|
$$
Plugging the upper and lower bounds into (\ref{eq:gpe}) shows that
\begin{equation*}
  \delta^2 + 2\delta \|x-\Pi_{\cK}(x)\| + \|x-\Pi_{\cK}(x)\|^2 \ge
  \|y-\Pi_{\cK_i}(x)\|^2 + \|x-\Pi_{\cK}(x)\|^2.
\end{equation*}
Using the fact that $0\in \cK$ and $\|x\|\le R$ shows that $\|x-\Pi_{\cK}(x)\|\le R$. So, rearranging the inequality above and plugging in this bound shows that:
$$
\|y-\Pi_{\cK_i}(x)\| \le \sqrt{\delta^2 + 2\delta R}. 
$$
Also, note that by the triangle inequality, the assumption on $y$, and the inequality above:
$$
\|\Pi_{\cK}(x)-\Pi_{\cK_i}(x)\|
\le \|\Pi_{\cK}(x)-y\| + \|y-\Pi_{\cK_i}(x)\| \le \delta+  \sqrt{\delta^2 + 2\delta R}. 
$$
So the result holds by choosing $\delta$ such that $\delta + \sqrt{\delta^2 + 2\delta R}\le \epsilon$.  
\end{proof}

\begin{lemma}
  \label{lem:sdeApprox} Let $\bx$ and $\bx_i$ be solutions of the reflected SDE from (\ref{eq:SDE}) over domains $\cK$ and $\cK_i$ respectively, with $f(x)$ and $\sigma(x)$ both Lipschitz in $x$. For almost all realizations of the Brownian motion, $\bx_i$ converges uniformly to $\bx$ on compact subsets of $[0,\infty)$.
\end{lemma}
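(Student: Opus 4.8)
The plan is to prove uniform convergence on each bounded interval $[0,T]$, $T\in\bbN$, for almost every Brownian path (these intervals exhaust $[0,\infty)$), by bridging $\bx$ and $\bx_i$ through the Euler scheme \eqref{eq:skorokhodEuler}. Write $\bar\bx^m$ for the Euler approximation of mesh $1/m$ of \eqref{eq:SDE} over $\cK$, and $\bar\bx^{m,i}$ for the Euler approximation of the same equation, with the same $f$, $\sigma$ and the same Brownian path, over $\cK_i$, so that $\bar\bx^{m,i}$ differs from $\bar\bx^m$ only in that $\Pi_{\cK_i}$ replaces $\Pi_{\cK}$ in the recursion. Corollary~3.3 of \cite{slominski2001euler} gives $\bar\bx^m\to\bx$ and, for each fixed $i$, $\bar\bx^{m,i}\to\bx_i$, uniformly on $[0,T]$ almost surely as $m\to\infty$. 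For fixed $m$, letting $i\to\infty$: all iterates lie in the compact set $\cK$, so the points at which the projections are evaluated lie in a bounded set $B$ depending only on $m$, $T$ and the path (through $\sup_{\cK}\|f\|$, $\sup_{\cK}\|\sigma\|$ and $\sup_{s\le T}\|\bw_s\|$); a finite induction over the $\floor*{mT}$ Euler steps, using that convex projections are $1$-Lipschitz, that $\Pi_{\cK_i}\to\Pi_{\cK}$ uniformly on $B$ by Lemma~\ref{lem:projConverge}, and that $f,\sigma$ are Lipschitz, yields a recursion $e_{k+1}\le(1+c_k)e_k+\rho_i$ for the step-$k$ discrepancy $e_k$, with $e_0=0$, $\prod_k(1+c_k)$ finite for fixed $m$ and path, and $\rho_i:=\sup_{x\in B}\|\Pi_{\cK_i}(x)-\Pi_{\cK}(x)\|\to 0$; discrete Gronwall then gives $\sup_{s\le T}\|\bar\bx^{m,i}_s-\bar\bx^m_s\|\le\rho_i\,\floor*{mT}\prod_k(1+c_k)\to 0$ as $i\to\infty$.

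Combining these three convergences through $\|\bx_s-\bx_{i,s}\|\le\|\bx_s-\bar\bx^m_s\|+\|\bar\bx^m_s-\bar\bx^{m,i}_s\|+\|\bar\bx^{m,i}_s-\bx_{i,s}\|$ --- choosing $m$ large (first term), then $i$ large (middle term) --- would finish, save for one point, which I expect to be the crux: the last term $\sup_{s\le T}\|\bar\bx^{m,i}_s-\bx_{i,s}\|$ must be controlled uniformly in $i$ for the chosen fixed $m$, i.e.\ the Euler error in Corollary~3.3 of \cite{slominski2001euler} must be uniform over the whole family $\{\cK_i\}$. For convex domains this should follow from the quantitative form of that estimate, whose constants depend only on $T$, the Lipschitz constants of $f,\sigma$, and geometric data (an inscribed-ball radius and the diameter), all of which are uniformly controlled once one arranges --- as one is free to do in building the approximating family --- that every $\cK_i$ contains a common ball $\frac{r}{2}B$ and is contained in $\cK$.

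A cleaner route that avoids this uniformity bookkeeping, and that I would fall back on, compares $\bx$ and $\bx_i$ directly. When $\sigma$ is constant (the paper's use) $\bx_t-\bx_{i,t}$ has bounded variation, and It\^o's formula (or Lemma~2.2 of \cite{tanaka1979stochastic}) gives $\|\bx_t-\bx_{i,t}\|^2\le 2L_f\int_0^t\|\bx_s-\bx_{i,s}\|^2\,ds$ plus the two reflection contributions. The reflection of $\bx$ at $\cK$ contributes a nonpositive term, because $\bx_{i,t}\in\cK_i\subseteq\cK$ and the reflection direction lies in $N_{\cK}(\bx_t)$; the reflection of $\bx_i$ at $\cK_i$ contributes $2(\bx_t-\bx_{i,t})^\top\bv_{i,t}\,d\bmu_i(t)$, which is \emph{not} automatically nonpositive since $\bx_t$ may lie outside the smaller set $\cK_i$ --- this is the technical heart. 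I would bound it by splitting off $\Pi_{\cK_i}(\bx_t)$ to get $(\bx_t-\bx_{i,t})^\top\bv_{i,t}\le\dist(\bx_t,\cK_i)\le\sup_{x\in\cK}\dist(x,\cK_i)=:d_i$, where $d_i\to 0$ by the defining property of the family (take $S=\lambda\cK$, a compact subset of $\interior(\cK)$, so $S\subseteq\cK_i$ eventually while $\sup_{x\in\cK}\dist(x,\lambda\cK)\le(1-\lambda)D$), together with the bound $\bbE[\bmu_i([0,T])]\le\bar\mu<\infty$ uniform in $i$, obtained exactly as in \eqref{eq:supportBound} using the common inscribed ball. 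Gronwall then yields the pathwise estimate $\sup_{s\le T}\|\bx_s-\bx_{i,s}\|^2\le 2d_i\,\bmu_i([0,T])\,e^{2L_f T}$; for general Lipschitz $\sigma$ one instead takes expectations, handling the extra martingale term by Burkholder--Davis--Gundy, to get $\bbE[\sup_{s\le T}\|\bx_s-\bx_{i,s}\|^2]\to 0$ at rate $O(\sqrt{d_i})$. Choosing the approximating family so that $\sum_i d_i<\infty$ --- e.g.\ $(1-2^{-i})\cK\subseteq\cK_i$ --- gives $\sum_i d_i\,\bbE[\bmu_i([0,T])]<\infty$, whence $\sup_{s\le T}\|\bx_s-\bx_{i,s}\|\to 0$ almost surely (by the first Borel--Cantelli lemma), and intersecting over $T\in\bbN$ completes the proof. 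The main obstacle throughout is the reflection at the \emph{smaller} domain $\cK_i$: unlike reflection at $\cK$, it is not sign-favorable, and taming it is precisely where the hypothesis on the family (through $d_i\to 0$), the uniform reflection-mass bound, and the freedom in choosing the $\cK_i$ all enter.
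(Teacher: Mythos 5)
Your first route is exactly the paper's proof: bridge $\bx$ and $\bx_i$ through the Euler scheme \eqref{eq:skorokhodEuler}, use Corollary~3.3 of \cite{slominski2001euler} for $\bar\bx^m\to\bx$ and $\bar\bx^{m,i}\to\bx_i$, use Lemma~\ref{lem:projConverge} (all projection arguments stay in a path-dependent bounded set) to send $\bar\bx^{m,i}\to\bar\bx^m$ for fixed $m$, and finish by the triangle inequality. The crux you flag --- that the Euler error for $\bx_i$ must be controlled uniformly in $i$ for the fixed $m$ --- is precisely the point where the paper is terse: it simply invokes Corollary~3.3 with a single constant $c$ so that $\sup_{t\le T}\|\bar\bx_i^m(t)-\bx_i(t)\|\le cm^{-1/5}$ \emph{for all} $i$, i.e.\ it implicitly takes the quantitative Euler estimate to be uniform over the family $\{\cK_i\}$. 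Your proposed justification (the constants in that estimate depend only on $T$, the Lipschitz data and uniform geometric data, secured by a common inscribed ball and the common outer bound $\cK$) is the right way to make that step honest, and your finite-induction/discrete-Gronwall treatment of the $i\to\infty$ limit at fixed $m$ is a more explicit version of the paper's one-line appeal to uniform convergence of the projections.

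Your fallback route is genuinely different from the paper and is essentially sound: a direct Tanaka/Gronwall comparison of $\bx$ and $\bx_i$, with the reflection at $\cK$ sign-favorable because $\bx_{i,t}\in\cK_i\subseteq\cK$, and the reflection at $\cK_i$ bounded via $(\bx_t-\bx_{i,t})^\top\bv_{i,t}\le\dist(\bx_t,\cK_i)\le d_i$ together with a uniform-in-$i$ bound on $\bbE[\bmu_i([0,T])]$ obtained as in \eqref{eq:supportBound} from a common inscribed ball. What this buys is an explicit rate in $d_i$ and complete independence from the Euler scheme (hence no uniform-in-$i$ Euler error to worry about). What it costs is the mode of convergence: since the reflection-mass bound is only in expectation, the argument as written gives $L^2$/in-probability convergence of $\sup_{s\le T}\|\bx_s-\bx_{i,s}\|$, and almost-sure convergence only under a summability condition such as $\sum_i d_i<\infty$ (i.e.\ with freedom to choose the family) or along a subsequence, whereas the lemma as stated asserts a.s.\ uniform convergence for the given family, and the family actually used later (sublevel sets of a barrier in Lemma~\ref{lem:invariance}) is not chosen with summable $d_i$. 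For the paper's application a subsequence would in fact suffice, so this is a mild weakening rather than an error, but if you want the lemma verbatim you should either upgrade the reflection-mass control to a pathwise bound valid uniformly in $i$, or stick with the Euler-bridge route (with your uniformity fix) for the a.s.\ statement.
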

\begin{proof}
  In the proof we denote the trajectories like $\bx(t)$ and $\bx_i(t)$ to reduce the complexity of the subscripts and superscripts.
  
  Consider the Euler approximation from (\ref{eq:skorokhodEuler}). For almost all realizations of the Brownian motion, the resulting solution converges uniformly on compacts to $\bx$. Similarly, for each $\cK_i$, the corresponding Euler scheme converges uniformly on compacts to $\bx_i$ for almost all Brownian motion realizations. Now since the intersection of a countable collection of almost sure events is again an almost sure event, we have for almost all Brownian motion realizations, all of the corresponding Euler schemes converge uniformly on compacts.

  Let $\bw$ be a realization for which all of the corresponding Euler schemes converge uniformly on compacts. Fix $T>0$ and let $\bar \bx_i^m$ be the Euler approximations of $\bx_i$. Corollary 3.3 of \cite{slominski2001euler} gives a convergence rate for the Euler scheme which implies that there is a constant $c>0$ such that
  $$
  \forall i, \sup_{t\in[0,T]} \|\bar\bx_i^m(t)-\bx_i(t)\|\le cm^{-1/5}
  \quad\textrm{and}\quad
    \sup_{t\in[0,T]} \|\bar\bx^m(t)-\bx(t)\|\le cm^{-1/5}.
    $$

   So fix $\epsilon >0$ and choose $m$  sufficiently large so that 
   $$
  \forall i, \sup_{t\in[0,T]} \|\bar\bx_i^m(t)-\bx_i(t)\|\le \epsilon
  \quad\textrm{and}\quad
    \sup_{t\in[0,T]} \|\bar\bx^m(t)-\bx(t)\|\le \epsilon.
    $$

    Now we will show that $i$ can be chosen so that $\sup_{t\in[0,T]}\|\bar \bx_i^m(t)-\bx^m(t)\|\le \epsilon$. If we can show this, the result will follow by the triangle inequality. 

    Let
 $d = \sup_{s,t\in[0,T]}\|\bw_s-\bw_t\|$. Since $f$ and $\sigma$ are continuous, they are bounded on $\cK$.  It follows that all of the arguments of the projection used in the Euler schemes are bounded in norm by:
  $$
  D + \sup_{x\in\cK} \|f(x)\|+ \sup_{x\in\cK}\|\sigma(x)\|_2 d.
  $$
  Here $\|\cdot \|_2$ is the matrix $2$-norm. 

  Thus, for any fixed $m$, 
  Lemma~\ref{lem:projConverge} implies that all of the projections converge uniformly as $i\to\infty$, which in turn implies that $\bar \bx_i^m$ converges to $\bar \bx^m$ uniformly on $[0,T]$. In particular, we can choose $m$ such that $\sup_{t\in[0,T]} \|\bx(t)-\bar\bx^m(t)\|\le  \epsilon$, and the proof is complete.  
\end{proof}

\section{Invariance of the Gibbs Distribution}
\label{sec:invariance}

Here we prove a basic result that $\pi_{\beta \bar f}$ is invariant for $\bx^M$. Our proof extends the methodology from Lemma 2.1 of~\cite{harrison1987multidimensional}, which examines the case that $\bar f$ is affine and the boundary is smooth. \cite{bubeck2018sampling} gives a brief outline of the analysis when $\bar f$ is convex and $\cK$ is a general compact convex set. The basic idea follows through in the more general case in which $\bar f$ is only assumed to be differentiable.

\begin{lemma}
  \label{lem:invariance}
 The measure $\pi_{\beta \bar f}$ is a stationary distribution for (\ref{eq:averagedLangevin}). 
\end{lemma}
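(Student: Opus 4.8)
The plan is to recognize $\pi_{\beta\bar f}$ as the invariant measure of the reflected Langevin diffusion \eqref{eq:averagedLangevin}, first assuming $\partial\cK$ is smooth and then removing that assumption with the domain‑approximation lemmas of Appendix~\ref{appsec:skorohod}. Write $\mathcal{A} = \eta\bigl(\tfrac1\beta\Delta - \nabla\bar f\cdot\nabla\bigr)$ for the generator of the unreflected drift–diffusion part. I will use that $\nabla\bar f = \bbE[\nabla_x f(\cdot,\bz)]$ exists, is bounded by $u$, and is $\ell$‑Lipschitz (averaging the Lipschitz bound on $\nabla_x f$), so $e^{-\beta\bar f}$ is a bounded, strictly positive, Lipschitz function on $\cK$ with $\nabla e^{-\beta\bar f} = -\beta e^{-\beta\bar f}\nabla\bar f$ a.e.

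Suppose first that $\partial\cK$ is $C^1$. For $g\in C^2(\bar\cK)$ with vanishing normal derivative $\partial_n g\equiv 0$ on $\partial\cK$, It\^o's formula applied to $g(\bx^M_t)$ gives
\begin{equation*}
  g(\bx^M_t) = g(\bx^M_0) + \int_0^t \mathcal{A}g(\bx^M_s)\,ds + \text{(martingale)} - \int_0^t \nabla g(\bx^M_s)^\top \bv^M_s\, d\bmu^M(s),
\end{equation*}
and the last integral vanishes: $\bmu^M$ is supported on $\{\bx^M_s\in\partial\cK\}$, where $\bv^M_s\in N_\cK(\bx^M_s)$ is a positive multiple of the outward normal, so $\nabla g(\bx^M_s)^\top\bv^M_s=0$. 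Hence $\tfrac{d}{dt}\bbE_\mu[g(\bx^M_t)]\big|_{t=0}=\int_\cK\mathcal{A}g\,d\mu$ for any initial law $\mu$, and it suffices to verify $\int_\cK\mathcal{A}g\,d\pi_{\beta\bar f}=0$ for all such $g$. Integrating by parts and using $\nabla e^{-\beta\bar f}=-\beta e^{-\beta\bar f}\nabla\bar f$ to cancel the drift term,
\begin{equation*}
  \int_\cK \mathcal{A}g\; e^{-\beta\bar f}\,dx
  = \int_\cK \Bigl(\tfrac{\eta}{\beta}\Delta g - \eta\,\nabla g\cdot\nabla\bar f\Bigr) e^{-\beta\bar f}\,dx
  = \tfrac{\eta}{\beta}\int_{\partial\cK}(\partial_n g)\, e^{-\beta\bar f}\,dS = 0 .
\end{equation*}
Since such $g$ form a core for the Neumann generator of the reflected diffusion, the standard fact that a probability measure killed by the generator on a core is stationary (an Echeverr\'ia‑type criterion, relying on well‑posedness of the reflected martingale problem) completes the smooth case. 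This parallels Lemma~2.1 of \cite{harrison1987multidimensional}; the only change is that the integration by parts produces the factor $\nabla\bar f$ rather than a constant drift.

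For a general compact convex $\cK$, take the family $\cK_1\subset\cK_2\subset\cdots\subset\cK$ of Appendix~\ref{appsec:skorohod}, additionally chosen so that each $\cK_i$ has smooth (strictly convex) boundary — possible since every convex body is an increasing union of smooth strictly convex bodies contained in its interior. Let $\pi^i := \pi_{\beta\bar f}(\,\cdot\mid\cK_i)$, with $Z_i=\int_{\cK_i}e^{-\beta\bar f}\to Z=\int_\cK e^{-\beta\bar f}$ by monotone convergence, so the densities of $\pi^i$ converge in $L^1$ to that of $\pi_{\beta\bar f}$, $\pi^i\Rightarrow\pi_{\beta\bar f}$, and $\pi_{\beta\bar f}(\partial\cK)=0$. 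The smooth case applied on $\cK_i$ (with Lipschitz drift $-\eta\nabla\bar f$ and constant diffusion) shows $\pi^i$ is invariant for the reflected diffusion on $\cK_i$; writing $P^i_t, P_t$ for the corresponding semigroups, $\int_{\cK_i}P^i_t g\,d\pi^i=\int_{\cK_i}g\,d\pi^i$ for bounded continuous $g$. By Lemma~\ref{lem:sdeApprox}, the solutions started from any $x\in\mathrm{int}(\cK)$ satisfy $\bx^{M,i}_t\to\bx^M_t$ a.s., so $P^i_tg\to P_tg$ pointwise on $\mathrm{int}(\cK)$, uniformly bounded by $\|g\|_\infty$. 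Combining with the $L^1$‑convergence of the densities (dominated convergence to control $\int|P^i_tg-P_tg|\,d\pi^i$, and $\|P_tg\|_\infty<\infty$ to control $\int P_tg\,(d\pi^i-d\pi_{\beta\bar f})$) gives $\int_\cK P_tg\,d\pi_{\beta\bar f}=\int_\cK g\,d\pi_{\beta\bar f}$ for all bounded continuous $g$, i.e. $\pi_{\beta\bar f}$ is stationary for \eqref{eq:averagedLangevin}.

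The main obstacle is the last step: one must check that the approximating bodies can be taken smooth while still exhausting $\mathrm{int}(\cK)$, that $Z_i\to Z$ and $\pi^i\Rightarrow\pi_{\beta\bar f}$, and — most delicately — that Lemma~\ref{lem:sdeApprox} really yields pointwise convergence $P^i_t\to P_t$ on $\mathrm{int}(\cK)$, so that the double limit over both the measure $\pi^i$ and the integrand $P^i_tg$ can be passed. A secondary point, used in the smooth case, is the implication from ``$\int\mathcal{A}g\,d\pi=0$ on a core'' to genuine invariance of the semigroup, which is classical for reflected diffusions in bounded smooth domains and should be cited rather than re‑derived.
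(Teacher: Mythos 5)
Your proposal is correct in outline and follows the same two-stage skeleton as the paper (smooth boundary first, then exhaustion of $\cK$ by smooth convex subsets plus Lemma~\ref{lem:sdeApprox} and dominated convergence — your limiting step is essentially identical to the paper's), but the smooth-boundary step is handled by a genuinely different argument. The paper never works with Neumann test functions $g$ directly: for an arbitrary differentiable $g$ it solves the Neumann resolvent problem $(L-\lambda)h=-g$, $\partial_n h=0$ (existence from Theorem 6.31 of \cite{gilbarg1998elliptic}), shows $e^{-\lambda t}h(\bx_t^M)+\int_0^t e^{-\lambda s}g(\bx_s^M)ds$ is a martingale, identifies $h$ with the Laplace transform of $P_tg$, and reduces stationarity — via uniqueness of Laplace transforms — to the single identity $\int_\cK Lh\,d\pi_{\beta\bar f}=0$, which it verifies by a Stokes-theorem computation using the Neumann condition on $h$. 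You instead verify the adjoint relation $\int_\cK \mathcal{A}g\,d\pi_{\beta\bar f}=0$ directly for all $C^2$ functions $g$ with $\partial_n g=0$ (a one-line integration by parts, cleaner than the paper's differential-forms calculation) and then invoke an Echeverr\'ia-type theorem together with the claim that such $g$ form a core for the Neumann generator. That last step is the crux and is exactly what the paper's resolvent/Laplace-transform argument proves by hand in the special case it needs: passing from ``the generator annihilates $\pi$ on test functions'' to ``the semigroup preserves $\pi$'' is not formal, and for reflected diffusions the relevant criterion is not the textbook Echeverr\'ia theorem but its extension to constrained/reflected martingale problems (Weiss's thesis, Kurtz--Stockbridge, or Kang--Ramanan's characterization of stationary distributions of reflected diffusions), whose hypotheses — positive maximum principle for $(\mathcal{A},\{g:\partial_n g=0\})$ on $\bar\cK$, well-posedness of the reflected martingale problem, and the core property — would need to be checked or precisely cited. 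So your route is legitimate and arguably shorter, but it trades the paper's self-contained (if more laborious) resolvent-plus-Stokes computation for a substantive appeal to the literature on invariant measures of reflecting diffusions; as written, that appeal is the one place a referee would press you, and it should be pinned to a specific theorem rather than left as ``a standard fact.''
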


\begin{proof}
  We first assume that $\cK$ has a smooth boundary. Later, we will use a limiting argument to show that the result still holds for general compact convex $\cK$. 
  
  The generator associated with $\bx^M$ on the interior of $\cK$ is given by:
  $$
   Lg(x) = -\eta \nabla \bar f(x)^\top \nabla g(x) + \frac{\eta}{\beta}(\Delta g)(x),
   $$
   where $\Delta$ is the Laplacian operator. (In this proof we will drop the subscript of $x$ from the gradient operators, since $\bz_k$ does not influence $\bx^M$.)

   Define the diffusion operator $P_t$ by:
   $$
   (P_tg)(x) = \bbE[g(\bx_t^M)|\bx_0=x].
   $$
   To show invariance, it suffices to show that 
   for all $g\in L_2(\pi_{\beta \bar f})$ and all $t> 0$ the follow holds:
   \begin{equation}
     \label{eq:stationaryTest}
   \int_{\cK} g(x)d\pi_{\beta \bar f}(x) = \int_{\cK} (P_tg)(x)d\pi_{\beta \bar f}(x)
   \end{equation}

   Now, since the set of differentiable functions is dense in $L_2(\pi_{\beta \bar f})$, we can assume without loss of generality that $g$ is differentiable. In the case that $g$ is differentiable, Theorem 6.31 of \cite{gilbarg1998elliptic} shows that there is a unique twice-differentiable $h$ such that
   \begin{subequations}
     \begin{align}
       \label{eq:resolvedSol}
     (Lh)(x)-\lambda h(x) &= -g(x) \quad \forall x\in \mathrm{int}(\cK) \\
     \label{eq:tangent}
     \nabla h(x)^\top v &= 0 \quad \forall x\in \partial \cK \textrm{ and } \forall v\in N_{\cK}(x).
   \end{align}
   \end{subequations}
   Note that since $\partial\cK$ is smooth, $N_{\cK}(x)$ is a half-line for all $x\in\partial\cK$.

   Let
   $$
   \bq_t = e^{-\lambda t} h(\bx_t^M)+\int_0^t e^{-\lambda s} g(\bx_s^M)ds.
   $$
   Then It\^o's formula combined with (\ref{eq:tangent}) followed by (\ref{eq:resolvedSol}) gives:
   \begin{align*}
     \MoveEqLeft[0]
     d\bq_t \\
     &= e^{-\lambda t}(-\lambda h(\bx_t^M)-\eta \nabla \bar f(\bx_t^M)^\top \nabla h(\bx_t^M) + \frac{\eta}{\beta}\Delta h(\bx_t^M) + g(\bx_t^M))dt + \sqrt{\frac{2\eta}{\beta}} \nabla g(\bx_t^M)^\top d\bw_t \\
     &= \sqrt{\frac{2\eta}{\beta}} \nabla g(\bx_t^M)^\top d\bw_t.
   \end{align*}
   So, in particular, $\bq_t$ is a martingale.

   If $x=\bx_0^M$, then
   \begin{align}
     \nonumber
     h(x)&=\bq_0 \\
     \nonumber
         &=\lim_{t\to\infty} \bbE[\bq_t|\bx_0=x] \\
     \nonumber
         &= \bbE\left[\int_0^{\infty} e^{-\lambda t} g(\bx_t^M)dt\middle| \bx_0^M=x\right] \\
     \label{eq:operatorLaplace}
     &=\int_0^{\infty} e^{-\lambda t} (P_tg)(x) dt.
   \end{align}
   The last equality follows from Fubini's theorem, which is justified by the fact that $g$ is differentiable, and so the integrand is bounded on $\cK$.

   Taking the Laplace transform of both sides of (\ref{eq:stationaryTest}) gives the condition:
  \begin{equation}
    \label{eq:laplaceTest}
     \lambda^{-1} \int_{\cK}g(x) d\pi_{\beta \bar f}(x)=\int_{\cK}\int_0^{\infty}e^{-\lambda t} (P_tg)(x) dt d\pi_{\beta \bar f}(x).
   \end{equation}
   Note that Fubini's theorem was again used to switch the order of integrals on the right. Now, uniqueness of Laplace transforms implies that (\ref{eq:stationaryTest}) holds for all $t>0$ if and only if (\ref{eq:laplaceTest}) holds for all $\lambda >0$.

   Using (\ref{eq:resolvedSol}), the left side of (\ref{eq:laplaceTest}) becomes:
   \begin{equation*}
      \lambda^{-1} \int_{\cK}g(x) d\pi_{\beta \bar f}(x)=
     \lambda^{-1}\int_{\cK}(\lambda h(x)-Lh(x))d\pi_{\beta \bar f}(x)
   \end{equation*}
   
   Using (\ref{eq:operatorLaplace}), the right side of
 (\ref{eq:laplaceTest}) becomes:
\begin{equation*}
  \int_{\cK}\int_0^{\infty}e^{-\lambda t} (P_tg)(x) dt d\pi_{\beta \bar f}(x)
=
    \int_{\cK} h(x) d\pi_{\beta \bar f}(x). 
  \end{equation*}
  Thus, we see that (\ref{eq:laplaceTest}) holds if and only if
  \begin{equation}
    \label{eq:instantaneous}
    \int_{\cK}Lh(x) d\pi_{\beta \bar f}(x)=0.
  \end{equation}

  Using the specific form of $L$ and $\pi_{\beta \bar f}$, we see that (\ref{eq:instantaneous}) holds if and only if
\begin{equation}
  \label{eq:instantaneousExplicit}
  \int_{\cK}\left(
    -\eta \nabla \bar f(x)^\top \nabla h(x)+\frac{\eta}{\beta} \Delta h(x)
  \right)e^{-\beta \bar f(x)} dx =  0
\end{equation}

We will prove (\ref{eq:instantaneousExplicit})
via Stokes theorem, which states that $\int_{\cK}d\omega(x)=\int_{\partial \cK}\omega(x)$ for a differential $n-1$ form. Consider the $(n-1)$-form defined by:
$$
\omega(x) = \sum_{i=1}^n(-1)^{i+1}\left(
  \frac{\partial h(x)}{\partial x_i} e^{-\beta \bar f(x)}
\right)\bigwedge_{j\ne i} dx_j
$$
Here the wedge product follows the standard ordering over the integers.

By construction, we have
$$
d\omega(x)= \left(
    -\eta \nabla \bar f(x)^\top \nabla h(x)+\frac{\eta}{\beta} \Delta h(x)
  \right)e^{-\beta \bar f(x)} dx_1 \wedge \cdots \wedge dx_n.
  $$
  Thus, Stokes theorem implies (\ref{eq:instantaneousExplicit}) if and only if
$$
\int_{\partial \cK}\omega(x)=0
$$

To evaluate this integral, we follow a typical construction from the integration of differential forms from \cite{lee2013introduction}. 
Choose a finite open cover of $\cK$ in the subspace topolgy, $U^1,\ldots,U^m$, with corresponding smooth charts, $\phi^1,\ldots,\phi^m$,  and a corresponding partition of unity $\psi^1,\ldots,\psi^m$. The partition of unity has the property that $\psi^i$ is supported in $U^i$. Without loss of generality, we assume that $\phi^i$ preserve the orientation of $\cK$. Furthermore, since $\partial\cK$ is smooth, the neighborhoods and charts can be chosen such that if $U^i\cap \partial \cK\ne \emptyset$, then $\phi^i$ maps $U^i$ to a half-space, $\bbH$:
\begin{align*}
  \phi^i(U^i)&\subset \{y\in\bbR^n | y_n\ge 0\} \\
  \phi^i(U^i\cap \partial \cK) &\subset \{y\in\bbR^n | y_n=0\}.
\end{align*}

As in \cite{lee2013introduction}, the desired integral can be evaluated as
\begin{align*}
  \int_{\partial\cK} \omega(x)&= \sum_{i=1}^M\int_{\partial\cK} \psi^i(x)\omega(x) \\
  &=\sum_{i=1}^M \int_{\partial \bbH^n} (((\phi^i)^{-1})^\star(\psi^i\omega))(y),
\end{align*}
where $((\phi^i)^{-1})^\star$ denotes the pullback operation. 

To evaluate $\int_{\cK} \omega(x)$, it suffices to evaluate this integral over elements of the cover that intersect $\partial\cK$. We will show that each of these elements integrates to zero. To this end,  
let $U$ be a set in the cover with $U\cap \partial \cK$ with associated chart $\phi$ and partition of unity element $\psi$. For compact notation, set
$$
\alpha_i(x) = \psi(x) \frac{\partial h(x)}{\partial x_i} e^{-\beta \bar f(x)}
$$
so that
$$
\psi(x)\omega(x) = \sum_{i=1}^n (-1)^{i+1} \alpha_i(x) \bigwedge_{j\ne i} dx_j.
$$

Let $J(y)$ be the Jacobian matrix of $\phi^{-1}(y)$ and let $M_{ij}(y)$ be the associated minors. Then the definition of the pullback followed by Proposition 14.11 of \cite{lee2013introduction} shows that:
\begin{align*}
  (\phi^\star(\psi\omega))(y)
  &= \sum_{i=1}^n(-1)^{i+1}\alpha_i(\phi^{-1}(y)) \bigwedge_{j\ne i} \left(
    \sum_{k=1}^n J_{jk}(y) dy_k
    \right) \\
  &=\sum_{i=1}^n (-1)^{i+1} \alpha_i(\phi^{-1}(y)) \sum_{k=1}^n M_{ik}(y) \bigwedge_{\ell \ne k} dy_\ell.
\end{align*}

As in the proof of Stokes theorem from \cite{lee2013introduction}, all of the terms of the pullback that include $dy_n$ integrate to zero. This is because the $y_n$ is fixed at $0$ on the boundary, so the integrals over $y_n$ must be zero. 
Thus, the integral simplifies to:
\begin{equation}
  \label{eq:droppedIntegral}
  \int_{\partial \bbH^n}((\phi^{-1})^\star(\psi\omega))(y) = \int_{\partial \bbH^n} \sum_{i=1}^n(-1)^{i+1} \alpha_i(\psi^{-1}(y)) M_{in}(y) \bigwedge_{j\ne n} dy_j
\end{equation}

To show that the right side is zero, it suffices to show that the integrand on the right is zero. The inverse function theorem, followed by Cramer's rule shows that 
\begin{equation*}
  \left.\frac{\partial \phi_n}{\partial x_i} \right\vert_{x=\phi^{-1}(y)}=
  (J(y)^{-1})_{ni} = \frac{1}{\det(J(y))}(-1)^{i+n} M_{in}(y). 
\end{equation*}

Letting $x=\phi^{-1}(y)$, it follows that the integrand on the right of (\ref{eq:droppedIntegral}) is given by
\begin{equation*}
  \sum_{i=1}^n(-1)^{i+1} \alpha_i(\phi^{-1}(y)) M_{in}(y) = \det(J(y))(-1)^{1-n}\psi(x)e^{-\beta \bar f(x)} \nabla h(x)^\top \nabla \phi_n(x).
\end{equation*}
Note here that if $y\in\partial\bbH^n$, then $x\in \partial \cK$. So, (\ref{eq:tangent}) implies that the integrand is zero if $-\nabla \phi_n(x)\in N_{\cK}(x)$. This follows because for all $z\in \cK$ and all $t>0$ sufficiently small, we have that $x+t(z-x)\in \cK$ and
$$
0\le \phi_n(x+t(z-x)) = \phi_n(x)+t\nabla \phi_n(x)^\top (z-x) +o(t) = t\nabla \phi_n(x)^\top (z-x)+o(t).
$$
Thus $\nabla \phi_n(x)^\top x \le \nabla \phi_n(x)^\top z$ and so $-\nabla \phi_n(x)\in N_{\cK}(x)$. Thus, (\ref{eq:instantaneousExplicit}) has been proved and so the lemma has been proved for $\cK$ with smooth boundaries.

Now we cover the general case. Let $b$ be a self-concordant barrier function for $\cK$ such that for any sequence $x_i\in \mathrm{int}$ with $x_i\to \partial \cK$, we have $b(x_i)\to\infty$. By Theorem~2.5.1 of \cite{nesterov1994interior}, such a barrier function exists. Let $\cK_i = \{x| b(x)\le i\}$. For all sufficiently large $i$, we have that $\cK_i$ is non-empty. Whenever $\cK_i$ is nonempty, it has a smooth boundary. Furthermore, if $S$ is a compact subset of $\mathrm{int}(\cK)$, then $S\subset \cK_i$ for all sufficiently large $i$.  Let $\bx^{M,i}$ be the Skorokhod solutions corresponding to the sets $\cK_i$.

Let $Z_i = \int_{\cK_i} e^{-\beta \bar f(x)}dx$. Then $Z_i\uparrow Z=:\int_{\cK} e^{-\beta \bar f(x)}dx$ by monotone convergence. Let $\bx^{M,i}$ be the solution from (\ref{eq:averagedLangevin}) in which the Skorokhod problem is solved over $\cK_i$ in place of $\cK$. 
Let $P_t^i$ be the diffusion operator corresponding to $\bx_t^{M,i}$. Then,
for each non-empty $\cK_i$, the corresponding version of (\ref{eq:stationaryTest}) can be written as
\begin{equation}
\label{eq:approxStationarityTest}
\frac{1}{Z_i} \int_{\cK_i} g(x) e^{-\beta \bar f(x)} dx = \frac{1}{Z_i}\int_{\cK_i}(P_t^ig)(x) e^{-\beta \bar f(x)}dx 
\end{equation}
Using the fact that $g$ is bounded on $\cK$, dominated convergence implies that 
$$
\lim_{i\to\infty}\frac{1}{Z_i} \int_{\cK_i} g(x) e^{-\beta \bar f(x)} dx = \frac{1}{Z}\int_{\cK} g(x)e^{-\beta \bar f(x)}dx = \int_{\cK} g(x) d\pi_{\beta \bar f}(x).
$$
The proof will be completed if we can show that the right side of (\ref{eq:approxStationarityTest}) converges to the right side of (\ref{eq:stationaryTest}).

Note that the right side of (\ref{eq:approxStationarityTest}) can be expressed as
$$
\frac{1}{Z_i}\int_{\cK_i}(P_t^ig)(x) e^{-\beta \bar f(x)}dx =\frac{1}{Z_i}\int_{\cK_i}\bbE[g(\bx_t^{M,i})|\bx_0^{M,i}=x]e^{-\beta \bar f(x)}dx
$$
Now, Lemma~\ref{lem:sdeApprox} from Appendix~\ref{appsec:skorohod} shows that for almost all realizations of the Brownian motion, $\bw$, the Skorokhod solution converges pointwise $\lim_{i\to \infty}\bx_t^{M,i}=\bx_t^M$ for all $t\ge 0$. (In fact it converges uniformly on compacts.) As a result the integrand on the right converges pointwise almost surely to the integrand on the right side of (\ref{eq:stationaryTest}). Thus, the integrals on the right of (\ref{eq:approxStationarityTest}) converge to the integral on the right of (\ref{eq:stationaryTest}) via dominated convergence.
\end{proof}

\section{An Elementary Result on Stieltjes Integration}

\label{app:integration}

The following basic result is used a few times to examine bounded variation  functions, $x(t)$ whose differentials $dx(t)$ are only known when $x(t)\ne 0$. 

\begin{lemma}
  \label{lem:variationAtZero}
  Let $x(t)$ be a continuous non-negative function with bounded variation. Then
  \begin{equation}
    x(t) -x(0)= \lim_{\epsilon \downarrow 0} \int_0^t \indic(x(s) \ge \epsilon) dx(s)
  \end{equation}
\end{lemma}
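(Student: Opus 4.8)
The plan is to reduce the identity to a dominated–convergence argument against the total-variation measure of $x$, using a smooth comparison function to absorb the portion of the integral that is supported near the zero set $\{s : x(s)=0\}$. Since $x$ is continuous and of bounded variation, the Lebesgue--Stieltjes measure $dx$ and its total variation $|dx|$ are finite Borel measures on $[0,t]$, and $\int_0^t dx(s)=x(t)-x(0)$. Hence
\[
  \int_0^t \indic(x(s)\ge\epsilon)\,dx(s) = \bigl(x(t)-x(0)\bigr) - \int_0^t \indic(x(s)<\epsilon)\,dx(s),
\]
and it would suffice to show the last term vanishes as $\epsilon\downarrow 0$. Bounding it crudely by $\int_0^t \indic(x(s)<\epsilon)\,|dx|(s)$ only reduces matters to showing $|dx|(\{x=0\})=0$, which is itself the crux of the statement; instead I would sidestep this by comparison against a smooth surrogate.

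For each $\epsilon>0$ fix $G_\epsilon\in C^1(\bbR)$ with $G_\epsilon(0)=0$, $0\le G_\epsilon'\le 1$, $G_\epsilon'(u)=0$ for $u\le\epsilon$, and $G_\epsilon'(u)=1$ for $u\ge 2\epsilon$. The standard chain rule for $C^1$ functions of continuous bounded-variation paths (a special case of It\^o's formula for finite-variation processes) gives
\[
  \int_0^t G_\epsilon'(x(s))\,dx(s) = G_\epsilon(x(t)) - G_\epsilon(x(0)).
\]
Since $x(0),x(t)\ge 0$ and $0\le G_\epsilon(u)\le u$ with $u-G_\epsilon(u)\le 2\epsilon$ for every $u\ge 0$, the right-hand side converges to $x(t)-x(0)$ as $\epsilon\downarrow 0$.

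It then remains to show that $\int_0^t \bigl(\indic(x(s)\ge\epsilon)-G_\epsilon'(x(s))\bigr)\,dx(s)\to 0$. The two integrands agree on $\{x(s)<\epsilon\}$ (both $0$) and on $\{x(s)>2\epsilon\}$ (both $1$), and their difference is bounded by $1$ in absolute value on $\{\epsilon\le x(s)\le 2\epsilon\}$, so
\[
  \Bigl|\int_0^t \bigl(\indic(x(s)\ge\epsilon)-G_\epsilon'(x(s))\bigr)\,dx(s)\Bigr| \le \int_0^t \indic\bigl(\epsilon\le x(s)\le 2\epsilon\bigr)\,|dx|(s).
\]
For each fixed $s$ the indicator on the right is eventually $0$ as $\epsilon\downarrow 0$ (immediately if $x(s)=0$, and once $2\epsilon<x(s)$ otherwise), so dominated convergence against the finite measure $|dx|$ sends the bound to $0$ along every sequence $\epsilon_n\downarrow 0$. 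Combining the three displays yields $\int_0^t \indic(x(s)\ge\epsilon)\,dx(s)\to x(t)-x(0)$, as claimed. The one delicate point — and the reason for introducing $G_\epsilon$ — is that a direct estimate would require knowing the variation measure $|dx|$ places no mass on the level set $\{x=0\}$; routing the argument through the exact chain rule for the smooth $G_\epsilon$ is precisely what lets this potential contribution be controlled without establishing that fact on its own.
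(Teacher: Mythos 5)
Your argument is correct, but it takes a genuinely different route from the paper. The paper works directly with Riemann--Stieltjes sums for the remainder $\int_0^t \indic(x(s)<\epsilon)\,dx(s)$: it chooses the partition fine enough (by continuity) that a midpoint tag below $\epsilon$ forces the adjacent endpoints to lie in $[0,\epsilon]$, groups consecutive such intervals into maximal blocks whose contributions telescope to $x(s_{k+1})-x(s_j)$, observes that interior blocks have both endpoint values equal to $\epsilon$ and hence vanish, and concludes that only the (at most two) boundary blocks survive, giving the explicit bound $\bigl|\int_0^t \indic(x(s)<\epsilon)\,dx(s)\bigr|\le 2\epsilon$ and hence a quantitative rate. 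You instead work in the Lebesgue--Stieltjes framework, replace the indicator by a $C^1$ ramp $G_\epsilon'$, use the exact chain rule $\int_0^t G_\epsilon'(x(s))\,dx(s)=G_\epsilon(x(t))-G_\epsilon(x(0))$ (the finite-variation case of It\^o), and control the discrepancy $\int_0^t \bigl(\indic(x\ge\epsilon)-G_\epsilon'(x)\bigr)\,dx$ by $|dx|(\{\epsilon\le x\le 2\epsilon\})$, which tends to $0$ by dominated convergence against the finite variation measure. Both proofs are sound. Yours is cleaner measure-theoretically and you correctly identify the crux that a naive domination would require $|dx|(\{x=0\})=0$ (true for continuous BV functions, but only via something like the coarea/Banach-indicatrix formula), which your surrogate neatly sidesteps; the cost is that your dominated-convergence step yields convergence without an explicit rate, whereas the paper's combinatorial telescoping gives the clean $2\epsilon$ bound (and stays entirely at the level of Riemann--Stieltjes sums, at the price of implicitly treating the Stieltjes integral of a discontinuous indicator through its approximating sums). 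For the way the lemma is used in the paper---purely as a qualitative limit inside integral inequalities---either version suffices.
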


\begin{proof}
  Fix any $\epsilon >0$. Then we have the Stieltjes integral representation:
  \begin{align*}
    x(t)-x(0) &= \int_0^t dx(s) \\
    &= \int_0^t \indic(x(s)\ge \epsilon) dx(s) + \int_0^t \indic(x(s)<\epsilon)dx(s).
  \end{align*}
  We will show that the second integral on the right goes to $0$ as $\epsilon \to 0$. This would imply the desired result by re-arranging and taking limits. 

  Fix any $\delta >0$. Then since $x(t)$ has bounded variation, there are numbers $0=s_0 < s_1 < \cdots < s_{N} = t$ such that
  \begin{equation*}
    \left|\int_0^t \indic(x(s)<\epsilon)dx(s) - \sum_{i=0}^{N-1} \indic(x(\bar{s}_i) <\epsilon) (x(s_{i+1})-x(s_i)) \right| \le \delta,
  \end{equation*}
  where $\bar{s}_i=\frac{1}{2}(s_{i+1}+s_i)$.

  Continuity of $x(t)$ implies that the $s_i$ can be chosen such that if $x(\bar{s}_i) < \epsilon$ then
  $x(s_{i})\le \epsilon$ and $x(s_{i+1})\le \epsilon$. 

  If $x(\bar{s}_{\hat i})<\epsilon$ for some $\hat i$, then let $\cI=\{j,j+1,\ldots,k\}$ be the largest sequence of integers such that $0\le j \le \hat i\le k\le N-1$ and $x(\bar{s}_i)<\epsilon$ for $i=j,\ldots,k$. Then
  \begin{equation}
    \label{eq:groupedTelescope}
    \sum_{i=j}^{k} \indic(x(\bar{s}_i) <\epsilon) (x(s_{i+1})-x(s_i)) = x(s_{k+1})-x(s_j)
  \end{equation}
  Maximality of the interval and our choice of $s_i$ imply that either $j=0$ or $x(s_{j}) = \epsilon$ and either $k+1=N$ or $x(s_{k+1})=\epsilon$.

  Note that in all cases $|x(s_{k+1})-x(s_j)| \le \epsilon$. If $x(s_{j})=x(s_{k+1})=\epsilon$ then the sum from \eqref{eq:groupedTelescope} is zero. So the sum can only be non-zero if $j=0$ or $k+1=N$ (or both).

  Since every term such that $x(\bar{s}_i)<\epsilon$ can be included in one of the intervals constructed above, and a most two of them can give rise to a non-zero sum, we see that the Riemann sum is bounded as:
  \begin{equation*}
\left| \sum_{i=0}^{N-1} \indic(x(\bar{s}_i) <\epsilon) (x(s_{i+1})-x(s_i)) \right| \le 2\epsilon
  \end{equation*}

 Using the fact that $\delta$ is arbtrary and using the triangle inequality shows that 
  $$
  \left|
 \int_0^t \indic(x(s)<\epsilon)dx(s)
  \right| \le 2\epsilon
  $$
\end{proof}

\end{document}